\newcommand\BibTeX{{\rmfamily B\kern-.05em \textsc{i\kern-.025em b}\kern-.08em
T\kern-.1667em\lower.7ex\hbox{E}\kern-.125emX}}
\theoremstyle{definition}
\newtheorem{assumption}{\textbf{Assumption}}
\newtheorem{proposition}{\textbf{Proposition}}
\newtheorem{theorem}{\textbf{Theorem}}
\newtheorem{lemma}{\textbf{Lemma}}
\newtheorem{remark}{\textbf{Remark}}
\newtheorem{definition}{\textbf{Definition}}
\newtheorem{example}{\textbf{Example}}
\newcommand*{\defeq}{\stackrel{\mathrm{def}}{=}}
\newcommand{\matf}[1]{\mathbf{#1}}
\newcommand{\vecf}[1]{\mathbf{#1}}
\newcommand*{\trans}{\mathsf{T}}
\begin{document}

\runninghead{Bai~\textit{et al.}: KernelGPA}

\title{KernelGPA: A Globally Optimal Solution to Deformable SLAM in Closed-form}

\author{Fang Bai\affilnum{1}, Kanzhi Wu\affilnum{2} and Adrien Bartoli\affilnum{3}}

\affiliation{\affilnum{1}Under transition. Unknown affiliation to be assigned.
\affilnum{2}vivo Mobile Communication Co., Ltd., Shenzhen, China.
\affilnum{3}ENCOV, TGI, Université Clermont Auvergne, Clermont-Ferrand, France.
}

\corrauth{Fang Bai,
is now with School of Electrical and Electronic Engineering,
Nanyang Technological University, Singapore. The initial manuscript was written during the author's visit to Tongji University, Shanghai, China.}

\email{Fang.Bai@yahoo.com}

\begin{abstract}
We study the generalized Procrustes analysis (GPA), as a minimal formulation to the simultaneous localization and mapping (SLAM) problem. We propose KernelGPA, a novel global registration technique to solve SLAM in the deformable environment. We propose the concept of deformable transformation which encodes the entangled pose and deformation. We define deformable transformations using a kernel method, and show that both the  deformable transformations and the environment map can be solved globally in closed-form, up to global scale ambiguities. We solve the scale ambiguities by an optimization formulation that maximizes rigidity. We demonstrate KernelGPA using the Gaussian kernel, and validate the superiority of KernelGPA with various datasets. Code and data are available at \url{https://bitbucket.org/FangBai/deformableprocrustes}.

\vspace{10pt}
This paper has been accepted for publication in the International Journal of Robotics Research, 2023.

DOI:\href{https://doi.org/10.1177/02783649231195380}{10.1177/02783649231195380}.

\end{abstract}

%\keywords{Deformable SLAM, deformable/nonrigid scene, nonrigid reconstruction, surgical robotics}

\maketitle

\section{Introduction}

The simultaneous localization and mapping (SLAM), as an enabling technology for sensor localization and scene reconstruction, has witnessed a huge success in the past decade~\citet*{cadena2016past}.
However, the successful application of SLAM critically relies on the assumption of a rigid (or static) scene~\citet*{dissanayake2001solution}.

\vspace{5pt}
\noindent
\textbf{Deformable SLAM.}
Recently, researchers have started to consider SLAM in the nonrigid and dynamic cases.
While both terms seem close, they are referred to as quite different problems.
The nonrigid case typically occurs in medical or surgical applications, whereas the dynamic case occurs in outdoor applications with moving pedestrians or traffic. 
In contrast to SLAM in the dynamic case, where the movement in the scene is almost random and thus is difficult to model,
SLAM in the nonrigid environment is largely well-posed, because \textit{the deformation is typically low dimensional, or follows certain structures or constraints.}
It is therefore possible to model and estimate the deformation in the nonrigid scene, revealing the possibility of a \textit{deformable SLAM} approach.
We use the term \textit{deformable SLAM} to refer to SLAM in the nonrigid (or deformable) case.
The research of deformable SLAM is gaining popularity and has found its applications in surgical applications~\citet*{huang2021iros}.

\vspace{5pt}
\noindent
\textbf{SfT and deformable tracking.}
The first generation of deformable SLAM systems are basically based on tracking technologies.
In vision and graphics, matching a deformed shape to a \textit{given template},
termed shape-from-template (SfT) in \citet*{bartoli2015shape, malti2017elastic}, is a well researched problem.
These days, SfT can be solved under a large range of deformation models, see a brief review in Section~\ref{section. related work. deformation models}.
The SfT methods are the pillar of deformable tracking systems published in robotics, for instance
DynamicFusion~\citet*{newcombe2015dynamicfusion},
Surfelwarp~\citet*{gao2019surfelwarp},
KillingFusion~\citet*{slavcheva2017killingfusion},
SobolevFusion~\citet*{slavcheva2018sobolevfusion},
and MIS-SLAM~\citet*{song2018mis}, to name but a few.
In SLAM, the template (\textit{i.e.,}~the environment map) is never known ahead.
Thus these systems rely on an open-loop mechanism that incrementally construct the template.
As a consequence, the estimation error of these tracking systems accumulates along the trajectory, due to the lack of global feedback.
Hence, these solutions are inevitably suboptimal.

\vspace{5pt}
\noindent
\textbf{Loop-closure and global registration.}
In SLAM, the global feedback is constructed under the term of \textit{loop-closures}, which has been well understood in the case of a rigid scene.
In specific, when traveling in the scene, the sensor observes identifiable geometric points at different poses to form global feedback.
In SLAM, such a global feedback is referred to as a \textit{loop-closure}, and an identifiable geometric point in the scene as a \textit{correspondence}.
In essence, the re-observation of correspondences at different poses provides additional information, and thus reduces the uncertainty of estimation.
It must be noted that the observations are defined in local coordinate frames relative to the sensor's poses.
Thus a \textit{global registration} technique is required to fuse the observations of correspondences together.
This technique is the generalized Procrustes analysis (GPA), see Section~\ref{sec. Generalized Procrustes Analysis} for details, or structure-from-motion (SfM) if the sensors are projective cameras.
We emphasize that GPA and SfM are minimal formulations of SLAM, as they decide the poses and the scene reconstruction completely.
In the rigid case, both GPA and SfM are well solved --- that is why SLAM in the rigid case is considered a solved problem.

\vspace{5pt}
\noindent
\textbf{Global registration with deformations.}
If the scene is nonrigid, we envision that a global registration technique that handles deformations is the key to solve deformable SLAM.
Unfortunately, at this stage, the research of such techniques is rather sparse.
Some representative works include: a) the low-rank shape basis decomposition~\citet*{bregler2000recovering, xiao2006closed, dai2014simple}, b) the isometric nonrigid structure-from-motion~\citet*{parashar2017isometric}, implemented in the DefSLAM system~\citet*{lamarca2020defslam}, and more recently c) DefGPA~\citet*{bai2022ijcv}, a GPA method with the linear basis warps (LBWs), see Section~\ref{section. review global registration techniques} for a brief review and comparison.
All these methods are developed under certain assumptions about the deformations the scene undergoes.
For example, methods a) assume structural deformations (\textit{e.g.,}~gestures or facial expressions) to ensure the existence of a low-rank shape basis;
methods b) assume isometric deformations which are suitable for foldable surfaces (\textit{e.g.,}~papers or cloths).
We feature method c) which assumes \textit{smooth and low-dimensional deformations},
which is more suitable for visceral deformations occurring in surgical applications.
In this work, we contribute further to the GPA family with a novel kernel based deformation model.

\vspace{5pt}
\noindent
\textbf{Problem statement.}
We study GPA with smooth and low-dimensional deformations, termed \textit{deformable GPA}, a global registration technique for deformable SLAM.
Deformable GPA can be considered as a minimal formulation of deformable SLAM.
To make the context clear, deformable GPA is formulated under the following constraints:
\begin{enumerate}
\item[1)] \textit{No temporal information.}
We assume observations are made without sequential information, thus technologies based on tracking do not apply here.
\item[2)] \textit{No template.}
We assume a template of the scene is not available, and disallow inexact methods that incrementally construct and refine a template.
\item[3)] \textit{No aids on pose estimation.}
We assume additional information on the sensor's pose is not available.
\end{enumerate}
\textit{We assume that the only available information is observations of correspondences at different poses.}
The correspondences are used to capture two pieces of information: a) the sensor's motion, and b) the deformation of the scene.
As we shall see shortly in Section~\ref{sec. Deformable SLAM}, the sensor's pose and the deformation of the scene are entangled in deformable SLAM, making the registration extra difficult.

\vspace{5pt}
\noindent
\textbf{Contributions.}
This article is an extension to the KernelGPA method initially appeared in the proceedings of \textit{Robotics: Science and Systems (RSS)}~\citet*{Bai-RSS-22}.
Concretely, this work contains the following contributions:

\begin{enumerate}
\item
We unify the entangled poses and deformations together, and formally introduce the concept of \textit{deformable transformation}.
This way, we avoid the ambiguities in poses and deformations, because the deformable transformation is well defined and can be estimated (up to scale ambiguities).

\item
We introduce a novel deformable transformation, termed \textit{kernel based transformation (KBT)}.
As the name suggests, the KBT is motivated from the kernel method.
Compared with the LBWs in~\citet*{bai2022ijcv}, the KBT is more flexible and easier to design.

\item
We propose KernelGPA, using KBT as the deformable transformation in GPA.
We enforce implicit transformation constraints by constraining: a) the geometric center of the correspondence point-cloud to be at the origin of the coordinate frame, and b) the point-cloud covariance to be diagonalized as an unknown $\boldsymbol{\Lambda}$.

\item
We show that KernelGPA can be solved globally in closed-form up to $\sqrt{\boldsymbol{\Lambda}}$ whose diagonal elements represent the global scale ambiguities.
Our solution is based on a special eigenvalue problem first proposed in~\citet*{bai2022ijcv}.
However, the exposition of relevant proofs is more concise in this paper.

\item
We give a novel method to estimate the unknown $\sqrt{\boldsymbol{\Lambda}}$.
Compared with~\citet*{bai2022ijcv}, the novel method does not require the existence of globally visible correspondences,
thus is more suitable for partial observations occurring in SLAM.
We give an affine relaxation to obtain a closed-form $\sqrt{\boldsymbol{\Lambda}}$.

\item
We demonstrate the registration performance of KernelGPA using various datasets.
We use three 3D datasets with correspondences.
The first one comprises a set of 3D liver meshes with simulated smooth deformations.
The second one comprises a set of 3D face meshes with various facial expressions.
The third one comprises six deformed point-clouds extracted from computerized tomography (CT) data.
We will release the relevant data to foster future research.
\end{enumerate}

This article makes serveral improvements over the initial version appeared in RSS~\citet*{Bai-RSS-22}.
We have rewritten most of the text for better clarity,
for instance, the exposition of the constraints in Section~\ref{sec. Transformation Constraint} and the special eigenvalue problem in Section~\ref{section: the kernel method to solve for transformation y and map M up to unknown Lambda}.
Importantly, we have refined the method to estimate $\sqrt{\boldsymbol{\Lambda}}$ in Section~\ref{section: solve rigid transformation R, t and prior Lambda together from given map principal axes X}, and have additionally added the discussion of degeneracies in Section~\ref{section. Degeneracy}.
Lastly, we have used more advanced experiments in this version to demonstrate the usefulness of our method.

The remainder of this paper is organized as follows.
We briefly review related work on deformation models and global registration techniques in Section~\ref{section. related work}.
We introduce the concepts of deformable transformation and deformable GPA in Section~\ref{section. Formulation of Deformable SLAM and its Connection to GPA}.
We present the KBT in Section~\ref{section. transformation model using kernel methods},
and registration constraints in Section~\ref{sec. Transformation Constraint}.
We draw the connection to a special eigenvalue problem in Section~\ref{section: the kernel method to solve for transformation y and map M up to unknown Lambda},
and propose the method to estimate $\sqrt{\boldsymbol{\Lambda}}$ in Section~\ref{section: solve rigid transformation R, t and prior Lambda together from given map principal axes X}.
We discuss degeneracies in Section~\ref{section. Degeneracy}, and implementation details in Section~\ref{section. implementation details}.
We present our experimental results in Section \ref{section. experimental results},
and conclude the paper in Section \ref{section. conclusion}.

\section{Related Work}
\label{section. related work}

\subsection{Deformation Models}
\label{section. related work. deformation models}

We shall use landmarks, \textit{i.e.,}~points, as the environment representation and define deformations accordingly.
This representation has a long history in shape analysis \citet*{kendall1984shape, kilian2007geometric}.
There has been a rich class of smooth deformation models (also termed smooth warps) developed based on landmark representations,
\textit{e.g.,}~the Free-Form Deformations (FFD) \citet*{rueckert1999nonrigid, szeliski1997spline},
the Radial Basis Functions (RBF) \citet*{bookstein1989principal,fornefett2001radial}
and the Thin-Plate Spline (TPS) \citet*{duchon1976interpolation, bookstein1989principal}.
Beyond smooth models, there exist a class of models defined piece-wisely by implementing local transformations associated to a set of control points and modeling the deformations on other parts by interpolation.
Representatives of such models include
the ARAP deformation model~\citet*{sorkine2007rigid},
the embedded deformation graph~\citet*{allen2003space, sumner2007embedded}, and Lie-bodies~\citet*{freifeld2012lie}.

Beyond landmark based models,
other models based on curves \citet*{joshi2007novel, younes2008metric} or surfaces have been proposed.
Some well-known models include
level sets \citet*{osher2003level},
medial surfaces
\citet*{bouix2005hippocampal},
Q-maps \citet*{kurtek2010novel, kurtek2011elastic}, and Square Root Normal Fields (SRNF) \citet*{jermyn2012elastic, laga2017numerical}.
Some models implement an articulated skeleton structure. Representative works include the medial axis representations (M-rep) \citet*{fletcher2004principal},
and
SCAPE~\citet*{anguelov2005scape}.
We refer interested readers to the review papers \citet*{younes2012spaces, laga2018survey} for more details.

\subsection{Global Registration Techniques}
\label{section. review global registration techniques}

\vspace{5pt}
\noindent
\textbf{Generalized Procrustes analysis.}
The GPA framework was used as a fundamental technique in shape analysis to obtain an initial alignment. Both the rigid and affine transformations were recovered in the classical literature
\citet*{kendall1984shape,goodall1991procrustes, rohlf1990extensions}.
Recently, a novel GPA technique with deformation models was proposed in \citet*{bai2022ijcv}.
The deformation model in \citet*{bai2022ijcv} is termed LBWs, which includes the affine transformation and a rich class of nonlinear deformation models~\citet*{rueckert1999nonrigid, szeliski1997spline, bookstein1989principal,fornefett2001radial, bartoli2010generalized} using radial-basis functions, \textit{e.g,} the well-known TPS~\citet*{bookstein1989principal}.

The work \citet*{bai2022ijcv} is the closest to ours. However, we use a kernel method to model deformations, which is a novel deformation model compared to the LBWs used in \citet*{bai2022ijcv}.
In addition, we propose a novel method to estimate the global scale ambiguities, which does not require some correspondences to be globally visible, thus is more suitable for SLAM applications.

\vspace{5pt}
\noindent
\textbf{Nonrigid structure-from-motion.}
SfM is a well-known global registration method that handles camera projections~\citet*{Hartley2004}.
We do not consider projective cameras in this work, thus will only mention several nonrigid SfM (NRSfM) methods for references.
One line of NRSfM methods use
low-rank shape bases \citet*{bregler2000recovering, xiao2006closed, dai2014simple}.
These methods model deformations as a linear combination of the basis shapes,
which are jointly factorized by the singular value decomposition (SVD).
Another line of NRSfM methods use differential geometry, where the deformations are constrained to be isometric or conformal,
\textit{e.g.,}
the isometric NRSfM~\citet*{parashar2017isometric} which has been successfully implemented in DefSLAM~\citet*{lamarca2020defslam}.
We refer interested readers to a recent work using Cartan’s connections~\citet*{parashar2019local} and references therein.

\vspace{5pt}
\noindent
\textbf{Characterization by deformations.}
The work~\citet*{bai2022ijcv} assumes smooth and low-dimensional deformations, as implied by the usage of the LBW.
The works~\citet*{bregler2000recovering, xiao2006closed, dai2014simple} require the existence of the low-rank shape basis.
This is possible if the scene undergoes structural deformations \textit{e.g.,}~gestures or facial expressions.
The works~\citet*{parashar2017isometric, lamarca2020defslam} require the deformation to follow isometry, preserving infinitesimal rigidity on the surface of the scene.
This is usually true for foldable surfaces like papers or cloths.

In general, the visceral deformation is neither structural nor isometric, but is smooth (to avoid visceral damages) and low-dimensional (as driven by a limited number of force sources, \textit{e.g.,}~from muscles).
In this work, we propose the KBT, a smooth and low-dimensional model suitable for visceral deformations, to meet the demand of surgical applications.

\section{Formulation of Deformable SLAM and its Connection to GPA}
\label{section. Formulation of Deformable SLAM and its Connection to GPA}

\begin{figure*}[t]
\centering
%\vspace{5pt}
%\includegraphics[width=0.86\textwidth]{figures/figure1_IJRR2023_Nonrigid_GPA.pdf}
%\includegraphics[width=0.86\textwidth]{figures/lextmain-crop.pdf}
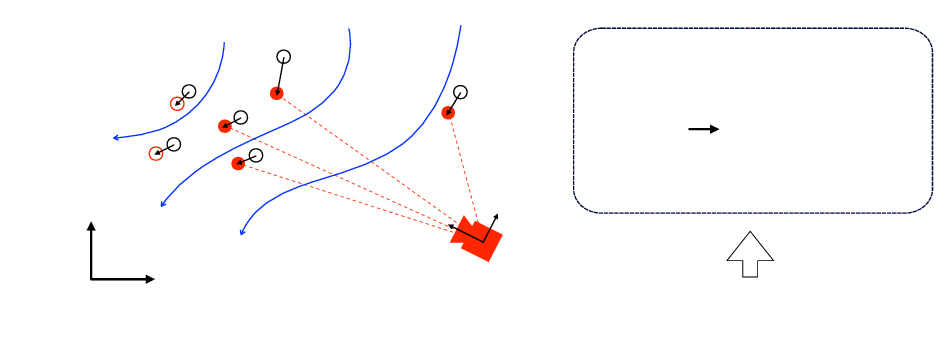
\caption{Deformable SLAM as the generalized Procrustes analysis (GPA) problem with deformable transformations.
{\color{black}
Our method is based on correspondences whose movements reflect deformations.
The movements of the correspondences, as plotted by the arrows from the black to the red circles, are driven by a low-dimensional deformation field $\matf{\Phi}_t(\cdot)$.
The unknowns are colored in blue, including a) the rigid pose $(\matf{R}_t ,\, \vecf{t}_t)$, b) the low-dimensional deformation $\matf{\Phi}_t(\cdot)$, and c) a canonical environment map $\matf{M}$.
From the observation model, we notice that a) and b) are entangled, which means we need to know one in order to infer the other. In this work, we instead propose to solve $\vecf{y}_t(\cdot)$, a unified deformable transformation which encodes both poses and deformations.
We derive that both $\vecf{y}_t(\cdot)$ and the environment map $\matf{M}$ can be estimated globally in closed-from up to $d$ scale ambiguities.
The global coordinate frame is implicitly specified by the transformation constraints to be illustrated in Figure~\ref{fig. transformation constraints illustrated by rigid motions}.}
}
\label{fig. The generalized Procrustes analysis problem}
\end{figure*}

\subsection{Deformable SLAM}
\label{sec. Deformable SLAM}

Our method is based on correspondences, and is independent of the detailed dense environment model to be used.

\vspace{5pt}
\noindent
\textbf{Environment modeling.}
We are concerned with a collection of $m$ landmarks $\matf{M} \in \mathbb{R}^{d \times m}$ residing in the $d$-dimensional environment, where $d =2$ or $d=3$.
%In the following description, we consider the case of $d=3$, and the same discussion holds for the case of $d=2$.
The onboard sensor observes these landmarks in $\matf{M}$ at discrete time points $t=1,2\dots,n$.
We denote the sensor's pose at time $t$ by $(\matf{R}_t \in \mathrm{SO}(d),\, \vecf{t}_t \in \mathbb{R}^{d})$.
The sensor at $t$ observes $m_t$ partial landmarks in $\matf{M}$, denoted by $\matf{M} \matf{\Gamma}_t \in \mathbb{R}^{d \times m_t}$, with the help of a \textit{visibility matrix} $\matf{\Gamma}_t$ to be defined below.
It can be easily verified that
$
\matf{\Gamma}_t^{\trans}
\vecf{1}_{m} = \vecf{1}_{m_t}
$.

\begin{definition}[(Visibility matrix)]
We denote the identity matrix in $\mathbb{R}^{m \times m}$ as a set of standard basis vectors in $\mathbb{R}^m$:
$$
\matf{I}_m = [
\vecf{e}_1,\, \vecf{e}_2,\, \dots, \vecf{e}_m
]
\in \mathbb{R}^{m \times m}
.
$$
Obviously, $\matf{M} \matf{I}_m = \matf{M}$.
The columns of a visibility matrix $\matf{\Gamma}_t$ are constructed from the standard basis vectors in $\mathbb{R}^m$:
\begin{equation*}
\matf{\Gamma}_t =
[
\vecf{e}_{j_1},\, \vecf{e}_{j_2},\, \dots, \vecf{e}_{j_{m_t}}
]
\in \mathbb{R}^{m \times m_t}
,
\end{equation*}
where the subscripts $j_1,\, j_2, \dots, j_{m_t} \in [1 : m]$ denote the $m_t$ points visible in $\matf{P}_t$.

\end{definition}

\begin{remark}
In \citet*{bai2022ijcv}, the authors use the augmented visibility matrix $\matf{\bar{\Gamma}}_t$ defined as:
\begin{equation*}
\matf{\bar{\Gamma}}_t
=
\matf{\Gamma}_t
\matf{\Gamma}_t^{\trans}
=
\sum_{j=1}^{m_t}  \matf{e}_{t_j} \matf{e}_{t_j}^{\trans}
\in \mathbb{R}^{m \times m}
.
\end{equation*}
Such a $\matf{\bar{\Gamma}}_t$ is a diagonal matrix whose $(k, k)$-th element is $1$ if the $k$-th point in $\matf{M}$ occurs in $\matf{M}_t$, and $0$ otherwise.
$\matf{\Gamma}_t$ is obtained by deleting the columns of zeros in $\matf{\bar{\Gamma}}_t$.
\end{remark}

\begin{example}
Given $5$ points, if the first and the third points are visible, the visibility matrices are defined as:
\begin{equation*}
\matf{\Gamma} = 
\begin{bmatrix}
	1 & 0 \\
	0 & 0 \\
	0 & 1 \\
	0 & 0 \\
	0 & 0 \\
\end{bmatrix}
,
\quad
\matf{\bar{\Gamma}} = 
\matf{\Gamma} \matf{\Gamma}^{\trans}
=
\begin{bmatrix}
	1 & 0 & 0 & 0 & 0  \\
	0 & 0 & 0 & 0 & 0  \\
	0 & 0 & 1 & 0 & 0  \\
	0 & 0 & 0 & 0 & 0  \\
	0 & 0 & 0 & 0 & 0  \\	
\end{bmatrix}
.
\end{equation*}
\end{example}

\vspace{5pt}
\noindent
\textbf{Point-cloud observation of deformable environment.}
In deformable SLAM, the environment deforms over time. We denote the deformation as a time varying function $\matf{\Phi}_t(\cdot)$. In particular the deformed environment at time $t$ is:
\begin{equation*} 
\matf{\Phi}_t (\matf{M}_t)
=
\matf{\Phi}_t (\matf{M} \matf{\Gamma}_t)
.
\end{equation*}
We denote the sensor's measurement at $t$ by a point-cloud $\matf{P}_t \in \mathbb{R}^{d \times m_t}$ defined in the sensor's local coordinate frame.
In the noise-free case,
the measurement $\matf{P}_t$ at $t$ is the observation of the deformed environment $\matf{\Phi}_t (\matf{M} \matf{\Gamma}_t)$:
\begin{align}
&
\matf{P}_t = \matf{R}_t^{\trans} 
\left( \matf{\Phi}_t (\matf{M} \matf{\Gamma}_t) -  \vecf{t}_t \vecf{1}^{\trans} \right)
\notag
\\[5pt] & \Leftrightarrow
\matf{R}_t \matf{P}_t + \vecf{t}_t \vecf{1}^{\trans}
=
\matf{\Phi}_t (\matf{M} \matf{\Gamma}_t)
.
\label{eq. entangled (R, t) and Phi}
\end{align}

\vspace{5pt}
\noindent
\textbf{Composed transformation.}
From the above, we see that \textit{the deformation and the pose are entangled.}
In order to estimate one, we need to know the other (see Remark \ref{remark. the connection between pose and deformation}).
To resolve this ambiguity, we fairly assume the deformation function $\matf{\Phi}_t(\cdot)$ is invertible, and thus define $\vecf{y}_t (\cdot)$ as a composition of both the pose $(\matf{R}_t,\, \vecf{t}_t)$ and the deformation $\matf{\Phi}_t^{-1}(\cdot)$:
\begin{equation}
\label{eq. deformation transformation y_t}
\vecf{y}_t (\matf{P}_t)
\defeq
\matf{\Phi}_t^{-1} ( \matf{R}_t \matf{P}_t + \vecf{t}_t \vecf{1}^{\trans})
=
\matf{M} \matf{\Gamma}_t
.
\end{equation}
In what follows, we term $\vecf{y}_t(\cdot)$ \textit{deformable transformation}.

\vspace{5pt}
\noindent
\textbf{Deformable SLAM.}
We define deformable SLAM as the problem that estimates 1) the deformable transformations $\vecf{y}_t (\cdot)$ and 2) the environment map $\matf{M}$, using a collection of sensor measurements $(\matf{P}_t,\, \matf{\Gamma}_t)$ at time points $t=1,2\dots,n$.
Formally,
we formulate deformable SLAM as:
\begin{equation}
\label{eq. cost function of deformable SLAM}
\min\ 
\sum_{t=1}^{n} \varphi_t
\quad\mathrm{with\ }
\varphi_t = 
\left\Vert  
\vecf{y}_t (\matf{P}_t) - \matf{M} \matf{\Gamma}_t
\right\Vert_{\mathcal{F}}^2
.
\end{equation}

\begin{remark}
\label{remark. the connection between pose and deformation}
Given the pose $(\matf{R}_t,  \vecf{t}_t)$ and $\matf{M}$,
the deformation field $\matf{\Phi}_t(\cdot)$ is characterized by the vector flow:
$$
\matf{M} \matf{\Gamma}_t
\longrightarrow
\matf{R}_t \matf{P}_t + \vecf{t}_t \vecf{1}^{\trans}
.
$$
Conversely, give the deformation field $\matf{\Phi}_t(\cdot)$, the pose $(\matf{R}_t,  \vecf{t}_t)$ is characterized by the rigid Procrustes analysis.
Thus given $\matf{M}$, the disentanglement is possible once either the deformation $\matf{\Phi}_t(\cdot)$ or the pose $(\matf{R}_t,  \vecf{t}_t)$ is known.
In this work, we focus on how to solve $\matf{M}$ and $\vecf{y}_t(\cdot)$.
\end{remark}

\subsection{Generalized Procrustes Analysis}
\label{sec. Generalized Procrustes Analysis}

The deformable SLAM formulation (\ref{eq. cost function of deformable SLAM}) is essentially a GPA problem with deformable transformations, see Figure~\ref{fig. The generalized Procrustes analysis problem}.
In the classical literature, GPAs with both the rigid transformation and the affine transformation are well studied.

\vspace{5pt}
\noindent
\textbf{GPA with the rigid transformation.}
In this case, from formulation (\ref{eq. cost function of deformable SLAM}),
we define  $\vecf{y}_t(\cdot)$ as:
\begin{equation*}
\vecf{y}_t (\matf{P}_t)
\defeq
\matf{R}_t \matf{P}_t + \vecf{t}_t \vecf{1}^{\trans}
, \quad
(\matf{R}_t \in \mathrm{SO}(d),\, \vecf{t}_t \in \mathbb{R}^{d})
.
\end{equation*}
There exists a closed-form solution for the case of $n=2$ point-clouds. In general, for $n \ge 3$, the solution is computed iteratively by nonlinear least squares (NLS) optimization techniques, \textit{e.g.,}~Gauss-Newton or Levenberg-Marquardt.

\vspace{5pt}
\noindent
\textbf{GPA with the affine transformation.}
In this case, from formulation (\ref{eq. cost function of deformable SLAM}),
we define  $\vecf{y}_t(\cdot)$ as:
\begin{equation*}
\vecf{y}_t (\matf{P}_t)
\defeq
\matf{A}_t \matf{P}_t + \vecf{a}_t \vecf{1}^{\trans}
, \quad
(\matf{A}_t \in \mathbb{R}^{d \times d},\, \vecf{a}_t \in \mathbb{R}^{d})
.
\end{equation*}
The resulting GPA problem is degenerate.
The optimal solution is $\matf{A}_t = \matf{O}$, $\vecf{a}_t = \vecf{0}$, $\matf{M} = \matf{O}$, which however is useless.
In order to construct a meaningful solution, we need to build a set of constraints, for example in the rigid case the transformation preserves the distance.

\vspace{5pt}
We shall term GPA with the rigid transformation as Rigid-GPA, and GPA with the affine transformation as Affine-GPA.

\section{Deformable Transformation}
\label{section. transformation model using kernel methods}

\subsection{Linear Basis Warp}

The linear basis warp (LBW) in~\citet*{bai2022ijcv}, is a generalization of a class of deformable transformations,
\textit{e.g.,}~the free-form deformations (FFD)~\citet*{rueckert1999nonrigid, szeliski1997spline}, and the thin-plate spline (TPS)~\citet*{duchon1976interpolation, bookstein1989principal}.

\begin{definition}[(LBW in~\citet*{bai2022ijcv})]
Given a query point $\vecf{p} \in \mathbb{R}^{d}$, the LBW is defined as:
\begin{equation}
\label{eq: transformation of each point cloud in W, A, t model}
\vecf{y}_t (\vecf{p})
\defeq
\matf{W}_t^{\trans} \boldsymbol{\beta}_t (\vecf{p})
,
\quad
(\matf{W}_t \in \mathbb{R}^{l \times d})
,
\end{equation}
where $\boldsymbol{\beta}_t(\cdot):\, \mathbb{R}^{d} \rightarrow \mathbb{R}^{l}$ is an embedding to the $l$-dimensional feature space.
$\boldsymbol{\beta}_t(\cdot)$ is typically designed from radial basis functions (RBFs) \citet*{fornefett2001radial}.
\end{definition}

\vspace{5pt}
\noindent
\textbf{Regularization.}
Typically, the LBW is used together with a \textit{regularization term}:
\begin{equation}
\label{eq. transformation regularizer - LBW}
\mathcal{R}_t
= 
\mu_t 
\mathrm{tr}
\left(
\matf{W}_t^{\trans}
\matf{\Xi}_t  \matf{W}_t
\right)
,
\quad
(\mu_t > 0)
,
\end{equation}
where $\matf{\Xi}_t$ is a known matrix.
Intuitively, the regularization $\mathcal{R}_t$ acts as a penalty to control the allowed deformation.

\begin{example}
The affine transformation is a special case of the LBW where we use:
\begin{equation*}
\matf{W}_t  = 
\begin{bmatrix}
\matf{A}_t  &  \vecf{a}_t
\end{bmatrix}^{\trans}
,
\quad
\boldsymbol{\beta}_t (\vecf{p}) =
\begin{bmatrix}
\vecf{p}  \\[5pt] 1
\end{bmatrix}
.
\end{equation*}
There is no regularization in this case, $\mathcal{R}_t = 0$.
\end{example}

\begin{example}
\label{example. the basis function of TPS warp}
In case of the TPS warp, $\boldsymbol{\beta}_t(\cdot)$ is designed as:
\begin{equation*}
\boldsymbol{\beta}_t (\vecf{p}) =
\boldsymbol{ \mathcal{E} }^{\trans}
\begin{bmatrix}
\rho(\Vert \vecf{c}_1 - \vecf{p} \Vert) \\[5pt]
\rho(\Vert \vecf{c}_2 - \vecf{p} \Vert) \\[5pt]
\vdots \\[5pt]
\rho(\Vert \vecf{c}_l - \vecf{p} \Vert) \\[5pt]
\vecf{p}  \\[5pt]
 1
\end{bmatrix}
,
\end{equation*}
where $\vecf{c}_1, \cdots, \vecf{c}_l \in \mathbb{R}^{d}$ are $l$ control points, and $\rho(\cdot)$ is a scalar function called the TPS kernel function.
$\boldsymbol{ \mathcal{E} } \in \mathbb{R}^{(l+d+1) \times l }$ is a matrix constant decided from the control points and the TPS kernel function.
The TPS warp thus defined implicitly includes a free affine transformation~\citet*{bai2022ijcv}.

Matrix $\matf{\Xi}_t$ used for regularization is chosen as the bending energy matrix~\citet*{bookstein1989principal}.
With this choice,
the regularization is imposed on the nonlinear deformation only, thus leaving the implicit affine transformation free.
\end{example}

\subsection{Kernel Based Transformation}

\begin{definition}[(Kernel function)]
A \textit{kernel function} $k(\cdot , \cdot):\, \mathcal{X} \times \mathcal{X} \rightarrow \mathbb{R}$ evaluates the inner product in some feature space $\mathcal{H}$ defined by $\boldsymbol{\phi} (\cdot) : \, \mathcal{X} \rightarrow \mathcal{H}$ as:
\begin{equation*}
k(\vecf{x}_i , \vecf{x}_j)  = \left< \boldsymbol{\phi}(\vecf{x}_i),\, \boldsymbol{\phi}(\vecf{x}_j) \right>_{\mathcal{H}}
,
\quad
\vecf{x}_i , \vecf{x}_j \in \mathcal{X}
.
\end{equation*}
\end{definition}
The spirit of a kernel method is to transform all the computation related to $\boldsymbol{\phi} (\cdot)$ to the inner product $\left< \cdot , \cdot \right>_{\mathcal{H}}$, thus an explicit $\boldsymbol{\phi} (\cdot)$ will never be required.
This way, one can design a kernel method based on $k(\cdot , \cdot)$ directly.

\begin{definition}[(Kernel matrix)]
Given any $\vecf{x}_1, \dots , \vecf{x}_m \in \mathcal{X}$, and a kernel function $k(\cdot , \cdot)$, the kernel matrix $\matf{K} \in \mathbb{R}^{m \times m}$ (also called the Gram matrix) is constructed as:
\begin{equation*}
\matf{K}
=
\begin{bmatrix}
k(\vecf{x}_1, \vecf{x}_1) & \cdots & k(\vecf{x}_1, \vecf{x}_m) \\[5pt]
\vdots & \ddots & \vdots \\[5pt]
k(\vecf{x}_m, \vecf{x}_1) & \cdots & k(\vecf{x}_m, \vecf{x}_m) 
\end{bmatrix}
.
\end{equation*}
\end{definition}

\begin{lemma}[(\citet*{Shawe-Taylor2004Kernel})]
\label{lemma. the classic result between kernel function and feature mapping}
If $k(\vecf{x}_i , \vecf{x}_j)  = \left< \boldsymbol{\phi}(\vecf{x}_i),\, \boldsymbol{\phi}(\vecf{x}_j) \right>_{\mathcal{H}}$ for some feature mapping $\boldsymbol{\phi}(\cdot) : \, \mathcal{X} \rightarrow \mathcal{H}$, then for any $\vecf{x}_1, \dots , \vecf{x}_m \in \mathcal{X}$ the kernel matrix $\matf{K}$ is \textit{symmetric positive (semi-)definite}.
The converse is also true.
If the kernel matrix $\matf{K}$ constructed from a kernel function $k(\cdot , \cdot)$ is symmetric positive (semi-)definite for any $\vecf{x}_1, \dots , \vecf{x}_m \in \mathcal{X}$,
then there exists a feature mapping $\boldsymbol{\phi}(\cdot) : \, \mathcal{X} \rightarrow \mathcal{H}$ such that $k(\vecf{x}_i , \vecf{x}_j)  = \left< \boldsymbol{\phi}(\vecf{x}_i),\, \boldsymbol{\phi}(\vecf{x}_j) \right>_{\mathcal{H}}$.
\end{lemma}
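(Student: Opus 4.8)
The plan is to prove the two implications separately. The forward direction is elementary and I would dispatch it first. Symmetry of $\matf{K}$ is immediate: since the inner product on $\mathcal{H}$ is symmetric, $k(\vecf{x}_i, \vecf{x}_j) = \langle \boldsymbol{\phi}(\vecf{x}_i),\, \boldsymbol{\phi}(\vecf{x}_j) \rangle_{\mathcal{H}} = \langle \boldsymbol{\phi}(\vecf{x}_j),\, \boldsymbol{\phi}(\vecf{x}_i) \rangle_{\mathcal{H}} = k(\vecf{x}_j, \vecf{x}_i)$, so $\matf{K} = \matf{K}^{\trans}$. For positive semidefiniteness, I would take any coefficient vector $\vecf{c} = [c_1, \dots, c_m]^{\trans} \in \mathbb{R}^m$ and rewrite the quadratic form as a squared norm,
\[
\vecf{c}^{\trans} \matf{K} \vecf{c} = \sum_{i,j} c_i c_j \langle \boldsymbol{\phi}(\vecf{x}_i),\, \boldsymbol{\phi}(\vecf{x}_j) \rangle_{\mathcal{H}} = \Bigl\Vert \sum_{i=1}^m c_i \boldsymbol{\phi}(\vecf{x}_i) \Bigr\Vert_{\mathcal{H}}^2 \ge 0 ,
\]
using bilinearity of the inner product. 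This establishes that $\matf{K}$ is symmetric positive semidefinite.

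The converse is the substantive part, and I expect it to be the main obstacle, since here I must manufacture a Hilbert space and a feature map out of nothing but the function $k$. I would use the reproducing-kernel Hilbert space (Moore--Aronszajn) construction. First I would form the vector space $\mathcal{H}_0$ of finite linear combinations $f = \sum_i \alpha_i k(\cdot, \vecf{x}_i)$ of the ``kernel sections'' $k(\cdot, \vecf{x})$, and define a candidate bilinear form on $\mathcal{H}_0$ by $\langle f, g \rangle = \sum_{i,j} \alpha_i \beta_j k(\vecf{x}_i, \vecf{y}_j)$ for $g = \sum_j \beta_j k(\cdot, \vecf{y}_j)$. The natural candidate feature map is then $\boldsymbol{\phi}(\vecf{x}) = k(\cdot, \vecf{x})$, for which the reproducing identity $\langle \boldsymbol{\phi}(\vecf{x}_i),\, \boldsymbol{\phi}(\vecf{x}_j) \rangle = k(\vecf{x}_i, \vecf{x}_j)$ holds by construction.

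The delicate steps I would verify in order are: (i) well-definedness of $\langle \cdot, \cdot \rangle$, i.e. independence of the chosen representations of $f$ and $g$, which follows by rewriting $\langle f, g \rangle = \sum_i \alpha_i g(\vecf{x}_i) = \sum_j \beta_j f(\vecf{y}_j)$ and observing that each expression depends on only one of the two functions; (ii) symmetry and bilinearity, inherited directly from the symmetry of $k$; (iii) positive semidefiniteness, which is exactly the hypothesis, since $\langle f, f \rangle = \boldsymbol{\alpha}^{\trans} \matf{K} \boldsymbol{\alpha} \ge 0$ for the Gram matrix $\matf{K}$ of the points appearing in $f$; and (iv) definiteness, the crucial point, where I would invoke the Cauchy--Schwarz inequality (valid for any symmetric positive semidefinite form) together with the reproducing property to obtain $|f(\vecf{x})|^2 = |\langle f, k(\cdot, \vecf{x}) \rangle|^2 \le \langle f, f \rangle \, k(\vecf{x}, \vecf{x})$, so that $\langle f, f \rangle = 0$ forces $f \equiv 0$ as a function. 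This last step is what upgrades the merely semidefinite form into a genuine inner product on the function space $\mathcal{H}_0$. Finally I would complete $\mathcal{H}_0$ with respect to the induced norm to obtain a Hilbert space $\mathcal{H}$; since $\boldsymbol{\phi}(\vecf{x}) = k(\cdot, \vecf{x}) \in \mathcal{H}_0 \subseteq \mathcal{H}$ and the inner product extends continuously to the completion, the identity $k(\vecf{x}_i, \vecf{x}_j) = \langle \boldsymbol{\phi}(\vecf{x}_i),\, \boldsymbol{\phi}(\vecf{x}_j) \rangle_{\mathcal{H}}$ survives, which completes the proof.
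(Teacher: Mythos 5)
Your proof is correct. The paper itself offers no proof of this lemma --- it is stated as a classical result and deferred to Chapter~3 of \citet*{Shawe-Taylor2004Kernel} --- and your argument is precisely the standard one found there: the forward direction by writing the quadratic form $\vecf{c}^{\trans}\matf{K}\vecf{c}$ as $\bigl\Vert \sum_i c_i \boldsymbol{\phi}(\vecf{x}_i)\bigr\Vert_{\mathcal{H}}^2$, and the converse by the Moore--Aronszajn construction (kernel sections, well-definedness of the form, Cauchy--Schwarz for semidefinite forms plus the reproducing identity to get definiteness, then completion). One cosmetic remark: in the completion step, if you want $\mathcal{H}$ realized concretely as a space of functions you would additionally note that norm-Cauchy sequences converge pointwise, via the same bound $|f(\vecf{x})|^2 \le \langle f, f\rangle\, k(\vecf{x},\vecf{x})$ you already derived; but for the lemma as stated, which only asserts existence of \emph{some} feature map into \emph{some} Hilbert space, the abstract completion you invoke is already sufficient, so this is polish rather than a gap.
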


\begin{definition}[(Positive (semi-)definite kernel)]
A kernel function $k(\cdot , \cdot)$ that ensures the symmetric positive (semi-)definiteness of $\matf{K}$ for any $\vecf{x}_1, \dots , \vecf{x}_m \in \mathcal{X}$ is called a \textit{positive (semi-)definite kernel}.
\end{definition}

By Lemma~\ref{lemma. the classic result between kernel function and feature mapping}, a positive (semi-)definite kernel function $k(\cdot , \cdot)$ implicitly determines a feature mapping $\boldsymbol{\phi}(\cdot)$ by the inner product $\left< \cdot , \cdot \right>_{\mathcal{H}}$ in some Hilbert space $\mathcal{H}$.
Such an $\mathcal{H}$ induced from the kernel function $k(\cdot , \cdot)$ is termed the \textit{reproducing kernel Hilbert space (RKHS)}.
For more details, we refer interested readers to Chapter 3 of the book~\citet*{Shawe-Taylor2004Kernel}.

\vspace{5pt}
\noindent
\textbf{Function representer.}
Given $m$ data points $\vecf{x}_1, \dots , \vecf{x}_m \in \mathcal{X}$, and a positive (semi-)definite kernel $k(\cdot , \cdot)$, we parameterize a function $f (\cdot) :\, \mathcal{X} \rightarrow \mathbb{R}$ as an expansion of kernel functions $k(\vecf{x}_j,  \cdot )$ over all data points:
\begin{equation}
\label{eq. function expansion}
f (\vecf{x}) = \sum_{j=1}^{m}
\alpha_j
k(\vecf{x}_j,  \vecf{x} )
,
\quad
\forall \vecf{x}\, \in \mathrm{domain}\ f
.
\end{equation}
Such an expansion is motivated from the \textit{reproducing property} of the RKHS,
and its expressiveness is backed by the representer theorem in \citet*{scholkopf2001generalized}.

\begin{assumption}
We assume positive definite kernel $k(\cdot , \cdot)$.
Thus the constructed kernel matrix is positive definite.
\end{assumption}

\begin{definition}[(Kernel based transformation)]
Given the point-cloud $\matf{P}_t = \left[
\vecf{p}_1,\,
\vecf{p}_2,\,
\dots,
\vecf{p}_{m_t}
\right]
\in
\mathbb{R}^{d \times m_t}
$,
and a query point $\vecf{p} \in \mathbb{R}^{d}$,
we propose a KBT, as:
\begin{multline}
\label{eq: transformation of a point by affine-kernel}
\vecf{y}_t (\vecf{p})
\defeq
\matf{A}_t \vecf{p} + \vecf{a}_t
+
\boldsymbol{\Omega}_t^{\trans}
\vecf{k}_t(\vecf{p})
\\[5pt]  \quad\quad\quad 
(\matf{A}_t \in \mathbb{R}^{d \times d},\, \vecf{a}_t \in \mathbb{R}^{d},\, \boldsymbol{\Omega}_t \in \mathbb{R}^{m_t \times d})
,
\end{multline}
where:
$$
\vecf{k}_t(\vecf{p}) = 
\begin{bmatrix}
k(\vecf{p}_1,\, \vecf{p}) \\[5pt]
\vdots \\[5pt]
k(\vecf{p}_{m_t},\, \vecf{p}) 
\end{bmatrix}
,
$$
with $k(\cdot, \cdot)$ a user specified positive (semi-)definite kernel.
\end{definition}

The deformable transformation $\vecf{y}_t (\cdot)$ thus constructed contains two components: the affine part $(\matf{A}_t ,\, \vecf{a}_t)$ and the deformation part $\boldsymbol{\Omega}_t^{\trans} \vecf{k}_t(\cdot)$.
The deformation part is an obvious extension from the expansion (\ref{eq. function expansion}) to each of $x-$, $y-$, and $z-$coordinates.
The motivation behind the affine part is that we require $\vecf{y}_t (\cdot)$ to model global orientations and translations.
Such information can indeed be lost in the kernel construction, for example if we choose
$k(\vecf{x}_i , \vecf{x}_j) = \kappa ( \Vert \vecf{x}_i - \vecf{x}_j \Vert_2 )$.
In addition, orientations and translations are global, meaning consistently applied to each point, which is not emphasized in the deformation part.

\vspace{5pt}
\noindent
\textbf{Regularization.}
For the KBT (\ref{eq: transformation of a point by affine-kernel}),
we propose to use the following regularization:
\begin{equation}
\label{eq. transformation regularizer - kernel}
\mathcal{R}_t
= 
\mu_t
\mathrm{tr}
\left(
\boldsymbol{\Omega}_t^{\trans}
\matf{K}_t  \boldsymbol{\Omega}_t
\right)
,
\quad
(\mu_t > 0)
,
\end{equation}
where we define the \textit{kernel matrix} $\matf{K}_t \in \mathbb{R}^{m_t \times m_t}$:
\begin{equation}
\label{eq. definition of K_t from P_t}
\matf{K}_t  = 
\begin{bmatrix}
k(\vecf{p}_1,\, \vecf{p}_1) & \cdots & k(\vecf{p}_1,\, \vecf{p}_{m_t}) \\[5pt]
\vdots  & \ddots & \vdots \\[5pt]
k(\vecf{p}_{m_t},\, \vecf{p}_1) & \cdots & k(\vecf{p}_{m_t},\, \vecf{p}_{m_t})
\end{bmatrix}
.
\end{equation}
The motivation of this regularization will be given shortly, near equation (\ref{eq. transformation regularizer - relation}).

\subsection{Operating on the Point-cloud}

Given the point-cloud $\matf{P}_t = \left[
\vecf{p}_1,\,
\vecf{p}_2,\,
\dots,
\vecf{p}_{m_t}
\right]
\in
\mathbb{R}^{d \times m_t} $,
we apply the deformable transformation $\vecf{y}_t(\cdot)$ to each point of $\matf{P}_t$ in sequence:
$$
\vecf{y}_t (\matf{P}_t) 
 \defeq
\left[
\begin{matrix}
\vecf{y}_t(\vecf{p}_1), & \vecf{y}_t(\vecf{p}_2), & \cdots & \vecf{y}_t(\vecf{p}_{m_t})
\end{matrix}
\right]
.
$$

\vspace{5pt}
\noindent
\textit{For the LBW}, the result is:
\begin{align}  
\vecf{y}_t (\matf{P}_t) 
& \defeq 
\matf{W}_t^{\trans} \,
\underbrace{
\begin{bmatrix}
\boldsymbol{\beta}_t(\vecf{p}_1) &
\boldsymbol{\beta}_t(\vecf{p}_2) &
\dots &
\boldsymbol{\beta}_t(\vecf{p}_{m_t})
\end{bmatrix}
}_{ \boldsymbol{\mathcal{B}}_t(\matf{P}_t) }
\notag
\\[2pt]  & \defeq
\matf{W}_t^{\trans}
\boldsymbol{\mathcal{B}}_t(\matf{P}_t)
,
\quad
(\matf{W}_t \in \mathbb{R}^{l \times d})
.
\end{align}

\vspace{5pt}
\noindent
\textit{For the KBT}, the result is:
\begin{multline}
\label{eq. transformation model - kernel - affine}
\vecf{y}_t (\matf{P}_t) 
\defeq
\matf{A}_t \matf{P}_t + \vecf{a}_t \vecf{1}^{\trans} + \boldsymbol{\Omega}_t^{\trans}
\matf{K}_t
, \\[5pt]
(\matf{A}_t \in \mathbb{R}^{d \times d},\, \vecf{a}_t \in \mathbb{R}^{d},\, \boldsymbol{\Omega}_t \in \mathbb{R}^{m_t \times d})
,
\end{multline}
where $\matf{K}_t$ is defined in equation (\ref{eq. definition of K_t from P_t}).

\subsection{Derivation of the KBT from the LBW}

\vspace{5pt}
\noindent
\textbf{Derivation of the deformation part.}
We consider the task of transforming the point-cloud $\matf{P}_t$ to a given target point-cloud $\matf{Z}_t$, using the LBW and an identity regularization term.
This task can be formulated as minimizing a regression cost:
\begin{equation}
\label{eq: cost of linear regression model with regularization}
\eta_t (\matf{W}_t) = \left\Vert \matf{W}_t^{\trans} \boldsymbol{\mathcal{B}}_t (\matf{P}_t) 
- \matf{Z}_t \right\Vert_{\mathcal{F}}^2
+ \mu_t
\left\Vert \matf{W}_t \right\Vert_{\mathcal{F}}^2
.
\end{equation}
Cost (\ref{eq: cost of linear regression model with regularization}) is convex.
Its global minimum is attained when the gradient vanishes:
$$
\frac{\partial  \eta_t }{\partial \matf{W}_t } = \matf{O}
.
$$
After computing the matrix differential, and with some trivial matrix calculations, we rewrite the above equation as:
\begin{align}
\matf{W}_t & = 
\boldsymbol{\mathcal{B}}_t(\matf{P}_t) \,
\underbrace{\left(
	- \frac{1}{\mu_t} \matf{W}_t^{\trans} \boldsymbol{\mathcal{B}}_t(\matf{P}_t) 
	+ \frac{1}{\mu_t}  \matf{Z}_t
	\right)^{\trans}}_{\boldsymbol{\Omega}_t}
\notag
\\[0pt]
& \defeq
\boldsymbol{\mathcal{B}}_t(\matf{P}_t) \, \boldsymbol{\Omega}_t
.
\label{eq. the first-order optimality condition of linear regression cost in W}
\end{align}
In this form, $\boldsymbol{\Omega}_t$ is called the \textit{dual variable},
as it converts the LBW to the KBT as:
\begin{align}
\label{eq: the dual formulation of linear regression model}
\matf{W}_t^{\trans} \boldsymbol{\mathcal{B}}_t(\matf{P}_t)
& =
\boldsymbol{\Omega}_t^{\trans} \,
\underbrace{\boldsymbol{\mathcal{B}}_t(\matf{P}_t)^{\trans}
\boldsymbol{\mathcal{B}}_t(\matf{P}_t)}_{\matf{K}_t}
\defeq
\boldsymbol{\Omega}_t^{\trans} \,
\matf{K}_t
,
\\[0pt]
&
\mathrm{where}\quad
\matf{K}_t =
\boldsymbol{\mathcal{B}}_t(\matf{P}_t)^{\trans}
\boldsymbol{\mathcal{B}}_t(\matf{P}_t)
\label{eq: definition of kernel matrix}
.
\end{align}
Note that the dimension $l$ of the feature space of $\boldsymbol{\beta}_t(\cdot)$ may go to infinity; however we can still express $\boldsymbol{\Omega}_t^{\trans} \,
\boldsymbol{\mathcal{B}}_t(\matf{P}_t)$ as $\boldsymbol{\Omega}_t^{\trans}
\matf{K}_t$ within $m_t$ points in the kernel based model.

\vspace{5pt}
\noindent
\textbf{Derivation of the regularization.}
From equation (\ref{eq. the first-order optimality condition of linear regression cost in W}),
the regularization $\mu_t
\left\Vert \matf{W}_t \right\Vert_{\mathcal{F}}^2$ can be reformulated with respect to the dual variable $\boldsymbol{\Omega}_t$ and the kernel matrix $\matf{K}_t$ as:
\begin{equation}
\label{eq. transformation regularizer - relation}
\mu_t
\left\Vert \matf{W}_t \right\Vert_{\mathcal{F}}^2
= \mu_t
\left\Vert 
\boldsymbol{\mathcal{B}}_t(\matf{P}_t) \boldsymbol{\Omega}_t
\right\Vert_{\mathcal{F}}^2
= 
\mu_t
\mathrm{tr}
\left(
\boldsymbol{\Omega}_t^{\trans}
\matf{K}_t  \boldsymbol{\Omega}_t
\right)
,
\end{equation}
which is how we obtain the regularization in equation (\ref{eq. transformation regularizer - kernel}).

\begin{remark}[(The independent affine transformation)]
In the KBT (\ref{eq: transformation of a point by affine-kernel}), we include an independent affine transformation, which is different from the LBWs.
This is because for the LBWs, the affine transformation is typically implemented by the design of the basis function $\boldsymbol{\beta}_t(\cdot)$.
However, for the KBT, the kernel function $k(\cdot , \cdot)$ uniformly decides the elements in $\matf{K}_t$, excluding the possibility to use a handcrafted affine transformation.
The usage of the independent affine transformation can be equivalently thought of as singling out the affine part in the LBW (\ref{eq: transformation of each point cloud in W, A, t model}) as:
\begin{multline*}
\vecf{y}_t (\vecf{p})
\defeq
\matf{A}_t \vecf{p} + \vecf{a}_t 
+
\matf{W}_t^{\trans} \boldsymbol{\beta}_t (\vecf{p})
,
\\
\quad
(\matf{A}_t \in \mathbb{R}^{d \times d},\, \vecf{a}_t \in \mathbb{R}^{d},\, \matf{W}_t \in \mathbb{R}^{l \times d})
.
\end{multline*}
In this form, $\boldsymbol{\beta}_t(\cdot)$ only models deformations.
By regularization (\ref{eq. transformation regularizer - relation}), we see the affine part is free, which is in the same spirit of common LBWs.
\end{remark}

\begin{figure*}[t]
\centering
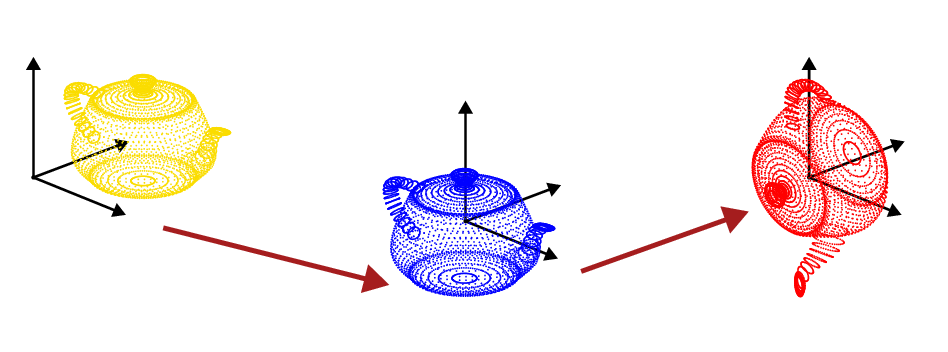
\caption{The proposed constraints
$\matf{M} \vecf{1} = \vecf{0}$, $\matf{M} \matf{M}^{\trans} = \boldsymbol{\Lambda}$ with an unknown diagonal matrix $\boldsymbol{\Lambda}$,
allow all possible geometries (\textit{i.e.,}~shapes) for $\matf{M}$.
This is explained as follows.
We assume $\matf{M}$ to be an arbitrary point-cloud, and denote $\mathbb{C}\mathrm{ov}(\matf{M}) = \matf{Q} \boldsymbol{\Lambda} \matf{Q}^{\trans}$ the eigenvalue decomposition of its point-cloud covariance.
Then we can rigidly transform $\matf{M}$ to $\matf{M}_r$ which has exactly the same geometry as $\matf{M}$ while $\matf{M}_{r} \vecf{1} = \vecf{0}$,
$\matf{M}_{r} \matf{M}_{r}^{\trans} = \boldsymbol{\Lambda}$.
Note that as $\matf{M}$ is unknown, we do not know the eigenvalues $\boldsymbol{\Lambda}$.
Fortunately $\boldsymbol{\Lambda}$ is never required explicitly to derive the globally optimal solution to the GPA formulation, and thus can be estimated afterwards.
{\color{black}The constraints $\matf{M} \vecf{1} = \vecf{0}$, $\matf{M} \matf{M}^{\trans} = \boldsymbol{\Lambda}$ implicitly specify the global coordinate frame in Figure~\ref{fig. The generalized Procrustes analysis problem}, by requiring $\matf{M}$ positioned this way.
}
}
\label{fig. transformation constraints illustrated by rigid motions}
\end{figure*}

\section{Transformation Constraint}
\label{sec. Transformation Constraint}

\begin{definition}[(Zero-centered point-cloud)]
\label{definition. zero-centered point-cloud}
A point-cloud $\matf{M}$ is zero-centered if and only if $\matf{M} \vecf{1} = \vecf{0}$.
In particular, $\matf{\bar{M}}$ is a zero-centered point-cloud of $\matf{M}$ where:
$$
\matf{\bar{M}}
=
\matf{M} - \frac{1}{m}\matf{M}\vecf{1}\vecf{1}^{\trans}
.
$$
\end{definition}

\begin{definition}[(Point-cloud covariance)]
\label{definition. point-cloud covariance}
We define the point-cloud covariance
$\mathbb{C}\mathrm{ov}(\matf{M}) = \matf{\bar{M}} \matf{\bar{M}}^{\trans}$
with
$
\matf{\bar{M}}
=
\matf{M} - \frac{1}{m}\matf{M}\vecf{1}\vecf{1}^{\trans}
$
being the zero-centered point-cloud of $\matf{M}$.
\end{definition}

We can simplify $\mathbb{C}\mathrm{ov}( \matf{M} )$ in Definition~\ref{definition. point-cloud covariance} to $\matf{M} \matf{M}^{\trans}$ by requiring $\matf{M}$ to be zero-centered as in Definition~\ref{definition. zero-centered point-cloud}.

\begin{lemma}[(Lemma 2 in \citet*{bai2022ijcv})]
\label{lemma. cov(RM + t) = R cov(M) R^T}
For any $\matf{M}$, any rotation $\matf{R}$ and any translation $\vecf{t}$, we have:
\begin{equation}
\mathbb{C}\mathrm{ov}(\matf{R} \matf{M} + \vecf{t} \vecf{1}^{\trans})
=
\matf{R} \,
\mathbb{C}\mathrm{ov}(\matf{M}) \,
\matf{R}^{\trans}
.
\end{equation}
\end{lemma}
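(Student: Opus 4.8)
The plan is to unfold the definition of the point-cloud covariance and track how the centering operation interacts with the rigid motion $\matf{M} \mapsto \matf{R}\matf{M} + \vecf{t}\vecf{1}^{\trans}$. Write $\matf{N} = \matf{R}\matf{M} + \vecf{t}\vecf{1}^{\trans}$. By Definition~\ref{definition. point-cloud covariance}, I have $\mathbb{C}\mathrm{ov}(\matf{N}) = \matf{\bar{N}}\matf{\bar{N}}^{\trans}$ where $\matf{\bar{N}} = \matf{N} - \frac{1}{m}\matf{N}\vecf{1}\vecf{1}^{\trans}$, so the whole computation reduces to understanding the centered cloud $\matf{\bar{N}}$.

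The key observation — and the only step that requires any thought — is that centering annihilates the translation. First I would compute $\matf{N}\vecf{1} = \matf{R}\matf{M}\vecf{1} + m\vecf{t}$, using the identity $\vecf{1}^{\trans}\vecf{1} = m$. Substituting into the centering formula gives $\frac{1}{m}\matf{N}\vecf{1}\vecf{1}^{\trans} = \frac{1}{m}\matf{R}\matf{M}\vecf{1}\vecf{1}^{\trans} + \vecf{t}\vecf{1}^{\trans}$, and the resulting $\vecf{t}\vecf{1}^{\trans}$ term exactly cancels the translation carried by $\matf{N}$ itself. Hence $\matf{\bar{N}} = \matf{R}\left(\matf{M} - \frac{1}{m}\matf{M}\vecf{1}\vecf{1}^{\trans}\right) = \matf{R}\matf{\bar{M}}$; that is, the centered version of the transformed cloud is simply $\matf{R}$ applied to the centered version of the original cloud, with the translation playing no role.

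With this in hand, the conclusion is immediate: $\mathbb{C}\mathrm{ov}(\matf{N}) = (\matf{R}\matf{\bar{M}})(\matf{R}\matf{\bar{M}})^{\trans} = \matf{R}\,\matf{\bar{M}}\matf{\bar{M}}^{\trans}\,\matf{R}^{\trans} = \matf{R}\,\mathbb{C}\mathrm{ov}(\matf{M})\,\matf{R}^{\trans}$, which is the claim. I would note in passing that the argument never invokes orthogonality of $\matf{R}$, so the identity in fact holds for an arbitrary matrix $\matf{R}$; the statement is specialized to rotations only because that is the case used downstream, e.g.\ in Figure~\ref{fig. transformation constraints illustrated by rigid motions}. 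There is no genuine obstacle here — the entire content of the lemma is the cancellation of the translation under centering, after which factoring $\matf{R}$ and $\matf{R}^{\trans}$ out of the outer product is routine.
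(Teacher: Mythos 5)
Your proof is correct: the paper itself states this lemma without proof (deferring to Lemma~2 of \citet*{bai2022ijcv}), and your computation — showing centering annihilates the translation so that $\matf{\bar{N}} = \matf{R}\matf{\bar{M}}$, then factoring $\matf{R}$ out of the outer product — is exactly the standard argument that reference uses. Your side remark that orthogonality of $\matf{R}$ is never needed is also accurate; the rotation hypothesis only matters downstream, where Lemma~\ref{lemma. cov(RM + t) = R cov(M) R^T} is combined with eigenvalue preservation, which \emph{does} require $\matf{R}^{\trans} = \matf{R}^{-1}$.
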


Lemma~\ref{lemma. cov(RM + t) = R cov(M) R^T} shows that:
a) $\mathbb{C}\mathrm{ov}(\matf{M})$ is only related to rotations, and
b) the eigenvalues of $\mathbb{C}\mathrm{ov}(\matf{M})$ are preserved when applying rotations to $\matf{M}$.

\begin{definition}[(Eigenvalues of point-cloud covariance)]
\label{definition. eigenvalues of point-cloud covariance}
We denote $ \boldsymbol{\Lambda}  = \mathbf{diag}(\lambda_1, \dots, \lambda_d)$, where $\lambda_1 \ge \cdots \ge \lambda_d \ge 0$ are the $d$ eigenvalues of the point-cloud covariance $\mathbb{C}\mathrm{ov}(\matf{M})$.
\end{definition}

In addition, Lemma~\ref{lemma. cov(RM + t) = R cov(M) R^T} provides a means to diagonalize $\mathbb{C}\mathrm{ov}( \matf{M} )$ by rotating the point-cloud $\matf{M}$.
We consider the eigenvalue decomposition:
$$
\begin{aligned}
\mathbb{C}\mathrm{ov}(\matf{M}) 
 = 
 \matf{Q} \boldsymbol{\Lambda} \matf{Q}^{\trans}
& =
\sum_{k=1}^{d}
\lambda_k \vecf{q}_k \vecf{q}_k^{\trans}
,
\\[0pt]
& \mathrm{where}\quad
\matf{Q} \defeq
\begin{bmatrix}
\vecf{q}_1 & \cdots  & \vecf{q}_d
\end{bmatrix}
.
\end{aligned}
$$
It is always possible to have $\matf{Q}  \in \mathrm{SO}(d)$ by flipping the signs of $\vecf{q}_{k}$.
In Lemma~\ref{lemma. cov(RM + t) = R cov(M) R^T}, if we use $\matf{R} = \matf{Q}^{\trans}$, then:
$$
\mathbb{C}\mathrm{ov}(\matf{Q}^{\trans} \matf{M}) 
=
\matf{Q}^{\trans}
\mathbb{C}\mathrm{ov}(\matf{M})
\matf{Q}
= 
\boldsymbol{\Lambda}
,$$
where $\mathbb{C}\mathrm{ov}(\matf{Q}^{\trans} \matf{M})$ is of diagonal form.

We present the general result below, and give an illustration in Figure~\ref{fig. transformation constraints illustrated by rigid motions}.

\begin{theorem}
\label{proposition. any point-cloud can satisfy the transformation constraints}
For any $\matf{M}$, it is always possible to find a rigid transformation $(\matf{R},\, \vecf{t})$ such that the rigidly transformed $\matf{M}_r$:
$$
\matf{M}_r =  \matf{R} \matf{M} + \vecf{t} \vecf{1}^{\trans},
\qquad
\matf{R}  \in \mathrm{SO}(d),\, \vecf{t}  \in \mathbb{R}^{d},
$$
is a zero-centered point-cloud and
has a diagonal form point-cloud covariance:
$$
\matf{M}_{r} \vecf{1} = \vecf{0}
,\quad
\mathbb{C}\mathrm{ov}(\matf{M}_r)
=
\matf{M}_{r}
\matf{M}_{r}^{\trans}
=
 \boldsymbol{\Lambda}
,
$$
where $\boldsymbol{\Lambda}$, as defined in Definition \ref{definition. eigenvalues of point-cloud covariance}, contains the eigenvalues of the point-cloud covariances $\mathbb{C}\mathrm{ov}(\matf{M})$ and $\mathbb{C}\mathrm{ov}(\matf{M}_r)$.
\end{theorem}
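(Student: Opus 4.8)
The plan is to construct the rigid transformation explicitly in two stages—first a translation that zero-centers the point-cloud, then a rotation that diagonalizes its covariance—and then to fold both stages into a single rigid motion $(\matf{R}, \vecf{t})$.

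First I would center $\matf{M}$ by taking $\matf{M}_c = \matf{\bar{M}} = \matf{M} - \frac{1}{m}\matf{M}\vecf{1}\vecf{1}^{\trans}$, the zero-centered point-cloud of Definition~\ref{definition. zero-centered point-cloud}. By construction $\matf{M}_c \vecf{1} = \vecf{0}$, and by Definition~\ref{definition. point-cloud covariance} we have $\mathbb{C}\mathrm{ov}(\matf{M}) = \matf{M}_c \matf{M}_c^{\trans}$.

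Next I would diagonalize the covariance by a rotation. Writing the eigenvalue decomposition $\mathbb{C}\mathrm{ov}(\matf{M}) = \matf{Q}\boldsymbol{\Lambda}\matf{Q}^{\trans}$ with eigenvalues ordered as in Definition~\ref{definition. eigenvalues of point-cloud covariance}, I would take $\matf{Q} \in \mathrm{SO}(d)$ (possible by flipping the sign of an eigenvector column whenever $\det \matf{Q} = -1$, which leaves $\boldsymbol{\Lambda}$ unchanged), and set $\matf{M}_r = \matf{Q}^{\trans}\matf{M}_c$. Then $\matf{M}_r \vecf{1} = \matf{Q}^{\trans}(\matf{M}_c\vecf{1}) = \vecf{0}$, so zero-centering survives the rotation; and invoking Lemma~\ref{lemma. cov(RM + t) = R cov(M) R^T} with $\matf{R} = \matf{Q}^{\trans}$ gives $\mathbb{C}\mathrm{ov}(\matf{M}_r) = \matf{M}_r\matf{M}_r^{\trans} = \matf{Q}^{\trans}\,\mathbb{C}\mathrm{ov}(\matf{M})\,\matf{Q} = \boldsymbol{\Lambda}$, the required diagonal form. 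Since rotations preserve the eigenvalues, this $\boldsymbol{\Lambda}$ is shared by $\mathbb{C}\mathrm{ov}(\matf{M})$ and $\mathbb{C}\mathrm{ov}(\matf{M}_r)$.

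Finally I would fold the two stages into one rigid motion. Substituting the definition of $\matf{M}_c$ yields $\matf{M}_r = \matf{Q}^{\trans}\matf{M} - \frac{1}{m}\matf{Q}^{\trans}\matf{M}\vecf{1}\vecf{1}^{\trans}$, which matches $\matf{M}_r = \matf{R}\matf{M} + \vecf{t}\vecf{1}^{\trans}$ upon setting $\matf{R} = \matf{Q}^{\trans} \in \mathrm{SO}(d)$ and $\vecf{t} = -\frac{1}{m}\matf{R}\matf{M}\vecf{1} \in \mathbb{R}^{d}$, thereby exhibiting the claimed transformation. The only delicate point—and the nearest thing to an obstacle—is guaranteeing that $\matf{Q}$ can be chosen in $\mathrm{SO}(d)$ rather than merely $\mathrm{O}(d)$, so that $(\matf{R},\vecf{t})$ is a genuine rigid motion; this is exactly what the sign-flipping observation secures, and the case of repeated eigenvalues only widens the set of admissible $\matf{Q}$ without altering the argument.
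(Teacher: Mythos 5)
Your proof is correct and matches the paper's argument exactly: the paper's own proof consists of setting $\matf{R} = \matf{Q}^{\trans}$ and $\vecf{t} = -\frac{1}{m}\matf{Q}^{\trans}\matf{M}\vecf{1}$, which is precisely the construction you obtain by folding your two stages together, with the supporting facts (Lemma~\ref{lemma. cov(RM + t) = R cov(M) R^T} and the sign-flip ensuring $\matf{Q} \in \mathrm{SO}(d)$) drawn from the same surrounding discussion. Your write-up is simply a more explicit verification of the same route.
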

\begin{proof}
It suffices to set $\matf{R} = \matf{Q}^{\trans}$
and $\vecf{t} = - \frac{1}{m} \matf{Q}^{\trans} \matf{M}\vecf{1}$.
\end{proof}

We are interested in the geometry \textit{i.e.,}~the shape, of point-cloud $\matf{M}$, discarding its position and orientation in the global coordinate system.
Thus we propose to solve for an $\matf{M}$ that is zero-centered with diagonal covariance:
\begin{numcases}
{\mathrm{constraints}}
\matf{M} \vecf{1} = \vecf{0}  \label{eq. gauge constrait 1} \\[5pt]
\matf{M} \matf{M}^{\trans} = \boldsymbol{\Lambda} = \mathbf{diag}(\lambda_1, \dots, \lambda_d)
\label{eq. gauge constrait 2}
,
\end{numcases}
where $\lambda_1 \ge \cdots \ge \lambda_d \ge 0$ are \textit{unknown parameters} representing the eigenvalues of the point-cloud covariance.

\begin{remark}
The constraints $\matf{M} \vecf{1} = \vecf{0}$, $\matf{M} \matf{M}^{\trans} = \boldsymbol{\Lambda}$ allow $\matf{M}$ to take all possible geometries,
as implied by Theorem~\ref{proposition. any point-cloud can satisfy the transformation constraints}.
%Overall, if $d=3$, there are six equality constraints, which fix the six gauge freedoms up to a global rigid transformation.
%In specific, $\matf{M} \vecf{1} = \vecf{0}$ fixes the translation, and the diagonal form $\mathbb{C}\mathrm{ov}(\matf{M}) =  \boldsymbol{\Lambda}$ fixes the rotation.
\end{remark}

\section{Globally Optimal Solution}
\label{section: the kernel method to solve for transformation y and map M up to unknown Lambda}

\subsection{Formulation of Deformable SLAM}

Using deformable transformation (\ref{eq. transformation model - kernel - affine}) and regularization (\ref{eq. transformation regularizer - kernel}), we write the cost function at time $t$ as:
\begin{align}
\varphi_t (\matf{A}_t, \vecf{a}_t, \boldsymbol{\Omega}_t, \matf{M})  
& = \left\Vert 
\matf{A}_t \matf{P}_t + \vecf{a}_t \vecf{1}^{\trans}
+ \boldsymbol{\Omega}_t^{\trans}
\matf{K}_t
- \matf{M} \matf{\Gamma}_t \right\Vert_{\mathcal{F}}^2
\notag
\\[5pt] 
& + 
 \mu_t
\mathrm{tr}
\left(
\boldsymbol{\Omega}_t^{\trans}
\matf{K}_t  \boldsymbol{\Omega}_t
\right)
.
\label{eq. cost of phi_t in all variables}
\end{align}
Then we use constraints
$\matf{M} \vecf{1} = \vecf{0}$
and
$\matf{M} \matf{M}^{\trans} = \boldsymbol{\Lambda}$ to implicitly specify the free coordinate frame where to express the solution.
We complete formulation (\ref{eq. cost function of deformable SLAM}) as the following optimization problem:
\begin{equation}
\label{eq: deformable SLAM formulation using Kernel and dual variable}
\begin{aligned}
\min_{ \{ \matf{A}_t,\,\vecf{a}_t,\,\boldsymbol{\Omega}_t \}, \, \matf{M}}
\quad 
& \sum_{t=1}^{n} \varphi_t (\matf{A}_t,\,\vecf{a}_t,\,\boldsymbol{\Omega}_t, \, \matf{M})
\\[2pt]  & 
\mathrm{s.t.}\ 
\matf{M} \vecf{1} = \vecf{0},\ 
\matf{M} \matf{M}^{\trans} = \boldsymbol{\Lambda}
.
\end{aligned}
\end{equation}

In the remainder of this section, we derive the globally optimal solution to problem (\ref{eq: deformable SLAM formulation using Kernel and dual variable}) in function of the unknown $\boldsymbol{\Lambda}$.
We will recast problem (\ref{eq: deformable SLAM formulation using Kernel and dual variable}) as a special eigenvalue problem, and derive the solution in closed-form, see \citet*{bai2022ijcv} for affine models and TPS warps (a brief recapitulation is provided in Appendix~\ref{appendix. results of GPA formulation using the LBW}).

\subsection{Reduced Problem in $\matf{M}$}
\label{section. Reduced Problem in M}

We notice that in problem (\ref{eq: deformable SLAM formulation using Kernel and dual variable}), the transformation parameters $\matf{A}_t$, $\vecf{a}_t$ and $\boldsymbol{\Omega}_t$ are linearly dependent on $\matf{M}$.
This presents a separable structure and allows us to reduce the optimization to $\matf{M}$ only using the variable projection method \citet*{golub2003separable}.

\vspace{5pt}
\noindent
\textbf{The linear dependence of $\matf{A}_t$, $\vecf{a}_t$ and $\boldsymbol{\Omega}_t$ on $\matf{M}$.}
We first notice that in problem (\ref{eq: deformable SLAM formulation using Kernel and dual variable}), given $\matf{M}$, the summands in the cost function become independent.
This allows us to derive the dependence of $\matf{A}_t$, $\vecf{a}_t$ and $\boldsymbol{\Omega}_t$ on $\matf{M}$ by solving a linear least squares (LLS) optimization from cost (\ref{eq. cost of phi_t in all variables}):
\begin{equation}
\min_{ \{ \matf{A}_t,\,\vecf{a}_t,\,\boldsymbol{\Omega}_t \} }
\quad
\varphi_t (\matf{A}_t,\,\vecf{a}_t,\,\boldsymbol{\Omega}_t, \, \matf{M})
,
\quad
\mathrm{given}\ \matf{M}
.
\label{eq. optimization At, at, Omegat given M}
\end{equation} 
With some trivial calculations, see appendix~\ref{appendix. Derivation of linear dependence of A, t, Omega on M}, we write:
\begin{multline}
\label{eq. At, at, Omegat in M}
\left[
[\matf{A}_t, \, \vecf{a}_t], \, \boldsymbol{\Omega}_t^{\trans} 
\right]
= 
\matf{M} \matf{\Gamma}_t
\begin{bmatrix}
  \matf{\tilde{P}}_t^{\trans} &  \matf{K}_t
\end{bmatrix}
\boldsymbol{\Delta}_t^{\dagger}
\\ +
\matf{F}_t
\left(
\matf{I} -  \boldsymbol{\Delta}_t \boldsymbol{\Delta}_t^{\dagger}
\right)
,
\end{multline}
where $\matf{\tilde{P}}_t = [ \matf{P}_t^{\trans} ,\, \vecf{1} ]^{\trans}$, and $\matf{F}_t  \in \mathbb{R}^{d \times (m_t+d+1)}$ is a free matrix.
$\boldsymbol{\Delta}_t^{\dagger}$ is the Moore–Penrose pseudo-inverse of a positive definite (or positive semi-definite) matrix $\boldsymbol{\Delta}_t$ defined as:
\begin{equation*}
\boldsymbol{\Delta}_t
\defeq
\begin{bmatrix}
\matf{\tilde{P}}_t \matf{\tilde{P}}_t^{\trans} & \matf{\tilde{P}}_t \matf{K}_t \\[5pt]
\matf{K}_t \matf{\tilde{P}}_t^{\trans} & \matf{K}_t \matf{K}_t + \mu_t \matf{K}_t
\end{bmatrix}
.
\end{equation*}

\begin{remark}
The free matrix $\matf{F}_t$ is used to describe general solutions of the LLS problem (\ref{eq. optimization At, at, Omegat given M}), in case that $\boldsymbol{\Delta}_t$ is rank deficient (and thus not invertible).
If $\boldsymbol{\Delta}_t$ is positive definite, then $\matf{F}_t$ is not required since
$\matf{F}_t
(
\matf{I} -  \boldsymbol{\Delta}_t \boldsymbol{\Delta}_t^{\dagger}
) = \matf{O}$.
\end{remark}

\begin{lemma}
\label{lemma. Kt is PD iff PP is PD}
If $\matf{K}_t$ is positive definite and $\mu_t > 0$, then $\boldsymbol{\Delta}_t$ is positive definite if and only if $\matf{\tilde{P}}_t \matf{\tilde{P}}_t^{\trans}$ is positive definite.
\end{lemma}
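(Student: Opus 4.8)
The plan is to exploit the $2\times 2$ block structure of $\boldsymbol{\Delta}_t$ through a Schur complement argument anchored on its lower-right block. Note first that $\boldsymbol{\Delta}_t$ is symmetric, since $\matf{\tilde{P}}_t \matf{\tilde{P}}_t^\trans$ and $\matf{K}_t \matf{K}_t + \mu_t \matf{K}_t$ are symmetric and the two off-diagonal blocks $\matf{\tilde{P}}_t \matf{K}_t$ and $\matf{K}_t \matf{\tilde{P}}_t^\trans$ are transposes of each other (using $\matf{K}_t = \matf{K}_t^\trans$). I would then factor the lower-right block as $\matf{D} \defeq \matf{K}_t \matf{K}_t + \mu_t \matf{K}_t = \matf{K}_t (\matf{K}_t + \mu_t \matf{I})$. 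Since $\matf{K}_t$ is positive definite and $\mu_t > 0$, both factors are positive definite; as they commute and are simultaneously diagonalizable, $\matf{D}$ is symmetric with eigenvalues $\lambda(\lambda + \mu_t) > 0$ for each eigenvalue $\lambda > 0$ of $\matf{K}_t$. Hence $\matf{D}$ is positive definite \emph{regardless} of $\matf{\tilde{P}}_t$.

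Because $\matf{D}$ is positive definite, the standard Schur complement criterion applies to the symmetric matrix $\boldsymbol{\Delta}_t$: it is positive definite if and only if its Schur complement with respect to $\matf{D}$ is positive definite, namely
\[
\matf{S} = \matf{\tilde{P}}_t \matf{\tilde{P}}_t^\trans - (\matf{\tilde{P}}_t \matf{K}_t)\,\matf{D}^{-1}\,(\matf{K}_t \matf{\tilde{P}}_t^\trans) = \matf{\tilde{P}}_t \bigl( \matf{I} - \matf{K}_t \matf{D}^{-1} \matf{K}_t \bigr) \matf{\tilde{P}}_t^\trans .
\]
Since $\matf{D}$ is already positive definite under the hypotheses, the entire lemma reduces to proving the single equivalence: $\matf{S}$ is positive definite if and only if $\matf{\tilde{P}}_t \matf{\tilde{P}}_t^\trans$ is positive definite.

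The crux, and the step I expect to require the most care, is evaluating the middle factor $\matf{I} - \matf{K}_t \matf{D}^{-1} \matf{K}_t$. Writing $\matf{D}^{-1} = \matf{K}_t^{-1} (\matf{K}_t + \mu_t \matf{I})^{-1}$ and using that all these matrices are rational functions of the single matrix $\matf{K}_t$ and therefore commute, the factor telescopes to
\[
\matf{I} - \matf{K}_t (\matf{K}_t + \mu_t \matf{I})^{-1} = \mu_t (\matf{K}_t + \mu_t \matf{I})^{-1} \defeq \matf{C},
\]
which is symmetric positive definite since $\mu_t > 0$ and $\matf{K}_t + \mu_t \matf{I}$ is positive definite. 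I would be explicit about justifying the commutativity, as it is what makes this simplification legitimate and collapses the apparently $\matf{\tilde{P}}_t$-dependent factor into a clean, $\matf{\tilde{P}}_t$-free positive definite matrix $\matf{C}$.

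Finally, factoring $\matf{C} = \matf{C}^{1/2} \matf{C}^{1/2}$ with $\matf{C}^{1/2}$ invertible gives $\matf{S} = (\matf{\tilde{P}}_t \matf{C}^{1/2})(\matf{\tilde{P}}_t \matf{C}^{1/2})^\trans$. Thus $\matf{S}$ is positive definite if and only if $\matf{\tilde{P}}_t \matf{C}^{1/2}$ has full row rank, and since right-multiplication by the invertible $\matf{C}^{1/2}$ preserves row rank, this is equivalent to $\matf{\tilde{P}}_t$ having full row rank, i.e.\ to $\matf{\tilde{P}}_t \matf{\tilde{P}}_t^\trans$ being positive definite. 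Chaining the three equivalences, $\boldsymbol{\Delta}_t \succ 0 \Leftrightarrow \matf{S} \succ 0 \Leftrightarrow \matf{\tilde{P}}_t \matf{\tilde{P}}_t^\trans \succ 0$, proves the claim.
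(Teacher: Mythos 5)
Your proof is correct and takes essentially the same route as the paper: a Schur complement of $\boldsymbol{\Delta}_t$ with respect to the positive definite lower-right block $\matf{K}_t\matf{K}_t + \mu_t\matf{K}_t$, simplification of the middle factor to the $\matf{\tilde{P}}_t$-free positive definite matrix $\mu_t(\matf{K}_t + \mu_t\matf{I})^{-1} = \bigl(\matf{I} + \tfrac{1}{\mu_t}\matf{K}_t\bigr)^{-1}$, and the rank equivalence with $\matf{\tilde{P}}_t\matf{\tilde{P}}_t^{\trans}$. The only cosmetic differences are that the paper invokes the Woodbury identity where you telescope directly via commutativity, and your explicit $\matf{C}^{1/2}$ row-rank argument spells out a final step the paper merely asserts.
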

\begin{proof}
See Appendix~\ref{section. positive definiteness between Delta and PP}.
\end{proof}
Otherwise stated, $\boldsymbol{\Delta}_t$ is invertible if and only if $\matf{\tilde{P}}_t $ has full row rank which is the case if the point-cloud $\matf{P}_t$ is not degenerate, \textit{e.g.,}~not flat if $d=3$ (namely residing in a plane in the 3D space) or not a line if $d=2$.

\vspace{5pt}
\noindent
\textbf{The reduced problem in $\matf{M}$.}
Substituting equation (\ref{eq. At, at, Omegat in M}) into the cost (\ref{eq. cost of phi_t in all variables}), we obtain a cost with respect to $\vecf{M}$ only, denoted as $\varphi_t (\matf{M})$.
With some trivial calculations, see appendix~\ref{appendix. Derivation of Qt and phit}, we show:
\begin{align*}
\varphi_t (\matf{M})  
=
\mathrm{tr} 
\left(
\matf{M} \matf{\Gamma}_t 
\matf{Q}_t
\matf{\Gamma}_t^{\trans} \matf{M}^{\trans}
\right)
,
\end{align*}
where $\matf{Q}_t$ is independent of the free matrix $\matf{F}$ occurring in equation (\ref{eq. At, at, Omegat in M}), defined as:
\begin{equation*}
\label{eq. the 1st form of Q_t}
\matf{Q}_t
\defeq
 \matf{I} - 
\begin{bmatrix}
  \matf{\tilde{P}}_t^{\trans} &  \matf{K}_t
\end{bmatrix}
\boldsymbol{\Delta}_t^{\dagger}
\begin{bmatrix}
  \matf{\tilde{P}}_t \\[5pt]  \matf{K}_t
\end{bmatrix}
.
\end{equation*}
Lastly problem (\ref{eq: deformable SLAM formulation using Kernel and dual variable}) is reduced to:
\begin{equation}
\label{eq: the minimization problem in Map only}
\begin{aligned}
\min_{\matf{M}} \quad 
&
\mathrm{tr}
\left(
\matf{M}
\boldsymbol{\mathcal{Q}}
\matf{M}^{\trans}
\right)
\\[2pt] &
\mathrm{s.t.}\ 
\matf{M} \vecf{1} = \vecf{0},\ 
\matf{M} \matf{M}^{\trans} = \boldsymbol{\Lambda}
,
\end{aligned}
\end{equation}
with:
$$
\boldsymbol{\mathcal{Q}} = \sum_{t=1}^{n} \matf{\Gamma}_t \matf{Q}_t \matf{\Gamma}_t^{\trans}
.$$
Problem (\ref{eq: the minimization problem in Map only}) is an optimization problem with respect to $\matf{M}$ only.
In particular, problem (\ref{eq: the minimization problem in Map only}) can be solved globally in closed-form if the all-one vector $\vecf{1}$ is an eigenvector of $\boldsymbol{\mathcal{Q}}$.

\vspace{5pt}
\noindent
\textbf{Properties of $\matf{Q}_t$ and $\boldsymbol{\mathcal{Q}}$.}
We can work out closed-form expressions for $\boldsymbol{\Delta}_t^{\dagger}$
using the Schur complement~\citet*{Gallier2010SchurComplement},
see Appendix~\ref{appendix. expansion of Pseudo inverse of Delta_t}.
With some trivial calculations, see~Appendix~\ref{appendix. derivation of constant Ft, Gt and Qt}, we show that $\matf{Q}_t$ can be rewritten as follows:
\begin{equation}
\matf{Q}_t
=
\left( \matf{I} - \boldsymbol{\mathcal{P}}_t \right)
 -
\left( \matf{I} - \boldsymbol{\mathcal{P}}_t \right)
\matf{K}_t \matf{S}_t^{-1} \matf{K}_t
\left( \matf{I} - \boldsymbol{\mathcal{P}}_t \right)
,
\label{eq. expression of Qt}
\end{equation}
with $ \boldsymbol{\mathcal{P}}_{t} \defeq \matf{\tilde{P}}_t^{\trans} ( \matf{\tilde{P}}_t \matf{\tilde{P}}_t^{\trans} )^{\dagger}  \matf{\tilde{P}}_t $, and:
\begin{equation}
\matf{S}_t \defeq \matf{K}_t \left( \matf{I} - \boldsymbol{\mathcal{P}}_t \right) \matf{K}_t + \mu_t \matf{K}_t
,
\label{eq. expression of St}
\end{equation}
being symmetric positive definite (and thus invertible), since we assume $ \matf{K}_t $ is chosen positive definite and $\mu_t > 0$.

\begin{proposition}
\label{proposition. properties of Q_t}
If $ \matf{K}_t $ is chosen positive definite and $\mu_t > 0$, then
$\matf{Q}_t$ is symmetric positive semidefinite where:
\begin{itemize}
\item $\matf{I} \succeq \matf{I} - \boldsymbol{\mathcal{P}}_t   \succeq  \matf{Q}_t \succeq \matf{O}$ 
\vspace{8pt}
\item $\matf{Q}_t \vecf{1}_{m_t} = \vecf{0}$
\end{itemize}
where $\matf{A} \succeq \matf{B}$ means $\matf{A} - \matf{B}$ is positive semidefinite.
\end{proposition}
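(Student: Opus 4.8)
The plan is to derive all four assertions directly from the closed-form expression (\ref{eq. expression of Qt}), using that $\boldsymbol{\mathcal{P}}_t$ is an orthogonal projector and that $\matf{S}_t$ is invertible. Write $\matf{N}_t \defeq \matf{I} - \boldsymbol{\mathcal{P}}_t$. First I would record that $\boldsymbol{\mathcal{P}}_t = \matf{\tilde{P}}_t^{\trans}(\matf{\tilde{P}}_t\matf{\tilde{P}}_t^{\trans})^{\dagger}\matf{\tilde{P}}_t$ is the orthogonal projector onto the row space of $\matf{\tilde{P}}_t$, hence symmetric and idempotent; consequently $\matf{N}_t$ is the complementary projector, also symmetric with $\matf{N}_t^2 = \matf{N}_t$ and $\matf{O} \preceq \matf{N}_t \preceq \matf{I}$. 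The right-most inequality is precisely the upper bound $\matf{I} \succeq \matf{I} - \boldsymbol{\mathcal{P}}_t$. Symmetry of $\matf{Q}_t$ then follows at once from (\ref{eq. expression of Qt}), since $\matf{N}_t$, $\matf{K}_t$ and $\matf{S}_t^{-1}$ are symmetric and the second summand $\matf{N}_t\matf{K}_t\matf{S}_t^{-1}\matf{K}_t\matf{N}_t$ is a symmetric product.

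For the middle inequality $\matf{I} - \boldsymbol{\mathcal{P}}_t \succeq \matf{Q}_t$, I would read off from (\ref{eq. expression of Qt}) that the gap equals $\matf{N}_t\matf{K}_t\matf{S}_t^{-1}\matf{K}_t\matf{N}_t = \matf{B}^{\trans}\matf{S}_t^{-1}\matf{B}$ with $\matf{B} = \matf{K}_t\matf{N}_t$. Since $\matf{S}_t \succ \matf{O}$ (as noted below (\ref{eq. expression of St})), so is $\matf{S}_t^{-1}$, and a congruence $\matf{B}^{\trans}\matf{S}_t^{-1}\matf{B}$ of a positive definite matrix is positive semidefinite; hence $\matf{N}_t \succeq \matf{Q}_t$.

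The crux is the lower bound $\matf{Q}_t \succeq \matf{O}$, for which the cleanest route I foresee is a Schur-complement argument. I would assemble the block matrix $\left[\begin{smallmatrix} \matf{N}_t & \matf{N}_t\matf{K}_t \\ \matf{K}_t\matf{N}_t & \matf{S}_t \end{smallmatrix}\right]$, whose Schur complement with respect to the $(2,2)$ block is exactly $\matf{Q}_t = \matf{N}_t - \matf{N}_t\matf{K}_t\matf{S}_t^{-1}\matf{K}_t\matf{N}_t$. Using idempotency $\matf{N}_t^2 = \matf{N}_t$ together with $\matf{S}_t = \matf{K}_t\matf{N}_t\matf{K}_t + \mu_t\matf{K}_t$, this block matrix factors as $\matf{H}\matf{H}^{\trans} + \left[\begin{smallmatrix} \matf{O} & \matf{O} \\ \matf{O} & \mu_t\matf{K}_t \end{smallmatrix}\right]$ with $\matf{H} = \left[\begin{smallmatrix} \matf{N}_t \\ \matf{K}_t\matf{N}_t \end{smallmatrix}\right]$; both summands are positive semidefinite because $\matf{K}_t \succ \matf{O}$ and $\mu_t > 0$. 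The block matrix is therefore positive semidefinite, and since $\matf{S}_t \succ \matf{O}$ its Schur complement $\matf{Q}_t$ is positive semidefinite. The step I expect to check most carefully is the factorization $\matf{H}\matf{H}^{\trans}$, which reproduces the off-diagonal block and the $\matf{K}_t\matf{N}_t\matf{K}_t$ part of $\matf{S}_t$ only after collapsing the squared projectors through $\matf{N}_t^2 = \matf{N}_t$. A shorter, variational alternative is available too: the reduced cost $\varphi_t(\matf{M}) = \mathrm{tr}(\matf{M}\matf{\Gamma}_t\matf{Q}_t\matf{\Gamma}_t^{\trans}\matf{M}^{\trans})$ is the minimum over $(\matf{A}_t,\vecf{a}_t,\boldsymbol{\Omega}_t)$ of a squared Frobenius norm plus the nonnegative penalty $\mu_t\mathrm{tr}(\boldsymbol{\Omega}_t^{\trans}\matf{K}_t\boldsymbol{\Omega}_t)$, so it is nonnegative for every $\matf{M}$; specializing $\matf{M}$ to a single nonzero row and using that $\vecf{u} \mapsto \matf{\Gamma}_t^{\trans}\vecf{u}$ is onto $\mathbb{R}^{m_t}$ then yields $\vecf{w}^{\trans}\matf{Q}_t\vecf{w} \ge 0$ for all $\vecf{w}$.

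Finally, for $\matf{Q}_t\vecf{1}_{m_t} = \vecf{0}$, I would use that $\matf{\tilde{P}}_t = [\matf{P}_t^{\trans},\, \vecf{1}]^{\trans}$ carries $\vecf{1}^{\trans}$ as its last row, so $\vecf{1}_{m_t}$ lies in the row space of $\matf{\tilde{P}}_t$ and is fixed by the projector: $\boldsymbol{\mathcal{P}}_t\vecf{1}_{m_t} = \vecf{1}_{m_t}$, whence $\matf{N}_t\vecf{1}_{m_t} = \vecf{0}$. Since in (\ref{eq. expression of Qt}) both summands have $\matf{N}_t$ as their right-most factor acting on $\vecf{1}_{m_t}$, each term vanishes and $\matf{Q}_t\vecf{1}_{m_t} = \vecf{0}$.
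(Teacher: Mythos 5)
Your proof is correct, and on the crux it takes a genuinely different route from the paper's. The symmetry claim, the chain $\matf{I} \succeq \matf{I}-\boldsymbol{\mathcal{P}}_t \succeq \matf{Q}_t$, and the annihilation $\matf{Q}_t\vecf{1}_{m_t}=\vecf{0}$ are argued essentially as in the paper (orthogonal-projector properties of $\boldsymbol{\mathcal{P}}_t$, positive definiteness of $\matf{S}_t$, and $\vecf{1}_{m_t}$ lying in the range of $\matf{\tilde{P}}_t^{\trans}$). For $\matf{Q}_t \succeq \matf{O}$, however, the paper does not use a Schur complement: it invokes the alternative expression $\matf{Q}_t=(\matf{H}_t\matf{K}_t-\matf{I})(\matf{I}-\boldsymbol{\mathcal{P}}_t)(\matf{K}_t^{\trans}\matf{H}_t^{\trans}-\matf{I})+\mu_t\matf{H}_t\matf{K}_t\matf{H}_t^{\trans}$ inherited from the RSS version, a manifest sum of two positive semidefinite congruences, whose verification requires the $\matf{H}_t$ identities developed in separate appendices. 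Your block-matrix argument is sound: with $\matf{N}_t=\matf{I}-\boldsymbol{\mathcal{P}}_t$, the factorization $\left[\begin{smallmatrix}\matf{N}_t & \matf{N}_t\matf{K}_t\\ \matf{K}_t\matf{N}_t & \matf{S}_t\end{smallmatrix}\right]=\matf{H}\matf{H}^{\trans}+\left[\begin{smallmatrix}\matf{O} & \matf{O}\\ \matf{O} & \mu_t\matf{K}_t\end{smallmatrix}\right]$ with $\matf{H}=\left[\begin{smallmatrix}\matf{N}_t\\ \matf{K}_t\matf{N}_t\end{smallmatrix}\right]$ checks out (idempotency collapses $\matf{N}_t\matf{N}_t^{\trans}$ to $\matf{N}_t$, and the $(2,2)$ block becomes $\matf{K}_t\matf{N}_t\matf{K}_t+\mu_t\matf{K}_t=\matf{S}_t$), and the conclusion needs only the semidefinite variant of the Schur-complement lemma the paper already cites for $\boldsymbol{\Delta}_t$: a block PSD matrix with positive definite $(2,2)$ block has a PSD Schur complement, which follows by evaluating the quadratic form at $\vecf{y}=-\matf{S}_t^{-1}\matf{K}_t\matf{N}_t\vecf{x}$. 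Your route is shorter and self-contained, avoiding the $\matf{H}_t$ machinery; what the paper's detour buys is the identities $\matf{H}_t=\matf{H}_t^{\trans}$ and $\matf{Q}_t=\mu_t\matf{H}_t$ of Proposition~\ref{proposition. properties of Ht}, which it reuses for the transformation estimates and to connect with the earlier RSS expression. Your variational fallback is also valid — $\varphi_t(\matf{M})$ is a minimum of manifestly nonnegative costs for every $\matf{M}$, and surjectivity of $\matf{\Gamma}_t^{\trans}$ (since $\matf{\Gamma}_t^{\trans}\matf{\Gamma}_t=\matf{I}_{m_t}$) transfers nonnegativity from $\matf{\Gamma}_t\matf{Q}_t\matf{\Gamma}_t^{\trans}$ to $\matf{Q}_t$ — and is arguably the most conceptual of the three arguments.
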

\begin{proof}
See Appendix~\ref{appendix. proof of Q1 = 0}.
\end{proof}

\begin{theorem}
\label{proposition. Q1 = 0}
In problem (\ref{eq: the minimization problem in Map only}),
$\boldsymbol{\mathcal{Q}} \vecf{1} = \vecf{0}$
which means $\vecf{1}$ is an eigenvector of $\boldsymbol{\mathcal{Q}}$ corresponding to eigenvalue $0$.
\end{theorem}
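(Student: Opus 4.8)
The plan is to prove the identity summand-by-summand, reducing $\boldsymbol{\mathcal{Q}} \vecf{1} = \vecf{0}$ to the already-established fact that each $\matf{Q}_t$ kills its own all-one vector. Since $\boldsymbol{\mathcal{Q}} = \sum_{t=1}^{n} \matf{\Gamma}_t \matf{Q}_t \matf{\Gamma}_t^{\trans}$, it suffices to show that $\matf{\Gamma}_t \matf{Q}_t \matf{\Gamma}_t^{\trans} \vecf{1}_m = \vecf{0}$ for every $t$, after which linearity of the sum finishes the argument. The two ingredients I would assemble are: (a) the visibility-matrix identity $\matf{\Gamma}_t^{\trans} \vecf{1}_m = \vecf{1}_{m_t}$ noted at the start of Section~\ref{sec. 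Deformable SLAM}, and (b) the second bullet of Proposition~\ref{proposition. properties of Q_t}, namely $\matf{Q}_t \vecf{1}_{m_t} = \vecf{0}$.

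Concretely, I would first recall why $\matf{\Gamma}_t^{\trans} \vecf{1}_m = \vecf{1}_{m_t}$. Each column of $\matf{\Gamma}_t$ is a standard basis vector $\vecf{e}_{j_i} \in \mathbb{R}^{m}$, so the $i$-th entry of $\matf{\Gamma}_t^{\trans} \vecf{1}_m$ equals the sum of the entries in the $i$-th column of $\matf{\Gamma}_t$, which is exactly $1$; hence the vector is $\vecf{1}_{m_t}$. Then I would chain the two facts:
$$
\matf{\Gamma}_t \matf{Q}_t \matf{\Gamma}_t^{\trans} \vecf{1}_m
=
\matf{\Gamma}_t \matf{Q}_t \vecf{1}_{m_t}
=
\matf{\Gamma}_t \vecf{0}
=
\vecf{0}.
$$
Summing this over $t = 1, \dots, n$ gives $\boldsymbol{\mathcal{Q}} \vecf{1} = \sum_{t=1}^{n} \matf{\Gamma}_t \matf{Q}_t \matf{\Gamma}_t^{\trans} \vecf{1} = \vecf{0}$, and since $\vecf{0} = 0 \cdot \vecf{1}$ this exhibits $\vecf{1}$ as an eigenvector of $\boldsymbol{\mathcal{Q}}$ with eigenvalue $0$, as claimed.

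There is essentially no analytical obstacle here: once Proposition~\ref{proposition. properties of Q_t} supplies $\matf{Q}_t \vecf{1}_{m_t} = \vecf{0}$, the theorem is a one-line consequence of how the visibility matrix $\matf{\Gamma}_t$ maps the global all-one vector to the local one. The only point deserving care is bookkeeping of dimensions, distinguishing $\vecf{1}_m$ (global, length $m$) from $\vecf{1}_{m_t}$ (local, length $m_t$), so that the substitution $\matf{\Gamma}_t^{\trans} \vecf{1}_m = \vecf{1}_{m_t}$ is applied to the correct vector before invoking $\matf{Q}_t \vecf{1}_{m_t} = \vecf{0}$. The genuine mathematical work has already been done in establishing Proposition~\ref{proposition. properties of Q_t}; this theorem simply propagates that per-block property through the visibility-weighted sum that defines $\boldsymbol{\mathcal{Q}}$.
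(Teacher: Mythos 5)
Your proof is correct and follows exactly the paper's own argument: the paper's proof invokes precisely the same two facts, $\matf{\Gamma}_t^{\trans} \vecf{1}_{m} = \vecf{1}_{m_t}$ and $\matf{Q}_t \vecf{1}_{m_t} = \vecf{0}$ from Proposition~\ref{proposition. properties of Q_t}, and concludes by linearity over the sum defining $\boldsymbol{\mathcal{Q}}$. Your version merely spells out the dimension bookkeeping that the paper compresses into one line.
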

\begin{proof}
This is obvious as
$
\matf{\Gamma}_t^{\trans}
\vecf{1}_{m} = \vecf{1}_{m_t}
$
and
$\matf{Q}_t \vecf{1}_{m_t} = \vecf{0}$.
\end{proof}

\begin{remark}
The expression of $\matf{Q}_t$ in equation (\ref{eq. expression of Qt}) is much more elegant than the one in \citet*{Bai-RSS-22}.
See Appendix~\ref{appendix. Qt expression in comparison to RSS version} for details.
\end{remark}

\subsection{Globally Optimal Estimate of $\matf{M}$}

We recapitulate necessary results to describe the globally optimal solution to problem (\ref{eq: the minimization problem in Map only}).

\begin{definition}[(The $d$ top eigenvectors and the $d$ bottom eigenvectors)]
\label{definition. The top eigenvectors and the bottom eigenvectors}
We consider a symmetric matrix $\matf{\Pi} \in \mathbb{R}^{m \times m}$ and its eigenvalue decomposition:
$$
\matf{\Pi} = \matf{U} \boldsymbol{\Sigma} \matf{U}^{\trans} 
= \sum_{k=1}^{m} \sigma_k \vecf{u}_k \vecf{u}_k^{\trans}
,$$
with
$\matf{U}
=
\left[\vecf{u}_1, \vecf{u}_2, \dots, \vecf{u}_m\right]$ being orthonormal,
and
$\boldsymbol{\Sigma} = \mathrm{diag}\left(\sigma_1, \sigma_2, \dots, \sigma_m\right)$ whose diagonal elements are arranged in the non-ascending order as
$\sigma_1 \ge  \sigma_2 \ge \dots \ge \sigma_m$.
We term:
$$\vecf{u}_1, \vecf{u}_2, \dots, \vecf{u}_d,$$
in sequence the $d$ \textit{top eigenvectors} of $\matf{\Pi}$, and:
$$\vecf{u}_m, \vecf{u}_{m-1}, \dots, \vecf{u}_{m-d+1},$$
in sequence the $d$ \textit{bottom eigenvectors} of $\matf{\Pi}$.
\end{definition}

\begin{lemma}
\label{theorem: general result: PCA with constraint XT u = 0}
We consider a symmetric matrix $\matf{\Pi} \in \mathbb{R}^{m \times m}$, and $\matf{X} \in \mathbb{R}^{m \times d}$.
Let $\boldsymbol{\Lambda} = \mathbf{diag}\left(\lambda_1, \lambda_2, \dots, \lambda_d\right)$ be a diagonal matrix with
$
\lambda_1 \ge  \lambda_2 \ge \dots \ge \lambda_d \ge 0
$.
If $\vecf{u}$ is an eigenvector of the symmetric matrix $\matf{\Pi}$,
then we have:
\begin{enumerate}
\item The globally optimal solution of:
\begin{equation}
\label{eq: general results: argmax, PCA in S, with S1=0}
\begin{aligned}
 \max_{\matf{X}} \quad  
 &
\mathrm{tr}\left( \matf{X}^{\trans} 
\matf{\Pi}
\matf{X} \boldsymbol{\Lambda} \right)
\quad 
\\[0pt] &
\mathrm{s.t.} \ 
\matf{X}^{\trans} \matf{X} = \matf{I},
\ 
\matf{X}^{\trans}  \vecf{u} = \vecf{0}
,
\end{aligned}
\end{equation}
is $\matf{X} = [\vecf{x}_1,\,\vecf{x}_2,\, \dots, \vecf{x}_d]$, where
$\vecf{x}_1,\,\vecf{x}_2,\, \dots, \vecf{x}_d$ are the $d$ top eigenvectors of $\matf{\Pi}$ excluding $\vecf{u}$.
\vspace{8pt}
\item The globally optimal solution of:
\begin{equation}
\label{eq: general results: argmin, PCA in S, with S1=0}
\begin{aligned}
 \min_{\matf{X}} \quad
 &
\mathrm{tr}\left( \matf{X}^{\trans} 
\matf{\Pi}
\matf{X} \boldsymbol{\Lambda} \right)
\quad
\\[0pt] & 
\mathrm{s.t.} \ 
\matf{X}^{\trans} \matf{X} = \matf{I},
\ 
\matf{X}^{\trans}  \vecf{u} = \vecf{0}
,
\end{aligned}
\end{equation}
is $\matf{X} = [\vecf{x}_1,\,\vecf{x}_2,\, \dots, \vecf{x}_d]$, where
$\vecf{x}_1,\,\vecf{x}_2,\, \dots, \vecf{x}_d$ are the $d$ bottom eigenvectors of $\matf{\Pi}$ excluding $\vecf{u}$.
\end{enumerate}

\end{lemma}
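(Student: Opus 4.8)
The plan is to reduce the matrix objective to a $\boldsymbol{\Lambda}$-weighted sum of Rayleigh quotients and then combine Ky Fan's trace extremal principle with a summation-by-parts argument. Writing $\matf{X} = [\vecf{x}_1, \dots, \vecf{x}_d]$ with $\matf{X}^{\trans}\matf{X} = \matf{I}$, a direct expansion gives $\mathrm{tr}(\matf{X}^{\trans}\matf{\Pi}\matf{X}\boldsymbol{\Lambda}) = \sum_{k=1}^d \lambda_k\, \vecf{x}_k^{\trans}\matf{\Pi}\vecf{x}_k$, so the objective is a weighted sum of quadratic forms over an orthonormal system that is additionally constrained to lie in $\vecf{u}^{\perp}$. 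The key structural observation is that, since $\vecf{u}$ is assumed to be an eigenvector of the symmetric $\matf{\Pi}$, the subspace $\vecf{u}^{\perp}$ is $\matf{\Pi}$-invariant; hence $\matf{\Pi}$ restricts to a self-adjoint operator on $\vecf{u}^{\perp}$ whose eigenpairs are precisely those of $\matf{\Pi}$ with $\vecf{u}$ removed. I would denote by $\sigma_1' \ge \sigma_2' \ge \cdots$ the eigenvalues of this restriction (equivalently, the eigenvalues of $\matf{\Pi}$ excluding $\vecf{u}$), so the whole problem lives inside $\vecf{u}^{\perp}$ and the constraint $\matf{X}^{\trans}\vecf{u}=\vecf{0}$ becomes transparent.

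For the maximization (\ref{eq: general results: argmax, PCA in S, with S1=0}), I would apply Abel summation (summation by parts) with the convention $\lambda_{d+1}=0$ to rewrite $\sum_{k=1}^d \lambda_k a_k = \sum_{k=1}^d (\lambda_k - \lambda_{k+1}) \big(\sum_{i=1}^k a_i\big)$, where $a_i = \vecf{x}_i^{\trans}\matf{\Pi}\vecf{x}_i$. Each partial sum $\sum_{i=1}^k a_i = \mathrm{tr}(\matf{X}_k^{\trans}\matf{\Pi}\matf{X}_k)$, with $\matf{X}_k = [\vecf{x}_1,\dots,\vecf{x}_k]$ orthonormal and lying in $\vecf{u}^{\perp}$, is bounded above by the sum of the $k$ largest eigenvalues of the restriction, $\sigma_1' + \cdots + \sigma_k'$, by Ky Fan's maximum principle applied on $\vecf{u}^{\perp}$. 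Because the coefficients $\lambda_k - \lambda_{k+1}$ are nonnegative (this is exactly where the orderings $\lambda_1 \ge \cdots \ge \lambda_d$ and the hypothesis $\lambda_d \ge 0$ enter), these partial-sum bounds combine monotonically, and a reverse summation-by-parts collapses the right-hand side back to $\sum_{k=1}^d \lambda_k \sigma_k'$. This bound is attained by choosing $\vecf{x}_k$ to be the eigenvector of the restriction for $\sigma_k'$, i.e. the $d$ top eigenvectors of $\matf{\Pi}$ excluding $\vecf{u}$, taken in sequence; one checks that this single choice simultaneously saturates every partial-sum inequality because the chosen systems $\matf{X}_k$ are nested and eigenvector-aligned, which certifies global optimality.

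The minimization (\ref{eq: general results: argmin, PCA in S, with S1=0}) follows by the same template, either by replacing Ky Fan's maximum principle with its minimum counterpart (lower-bounding each partial trace by the sum of the $k$ smallest eigenvalues of the restriction) or simply by applying the maximization result to $-\matf{\Pi}$; either route yields the $d$ bottom eigenvectors of $\matf{\Pi}$ excluding $\vecf{u}$. I expect the main obstacle to be bookkeeping rather than conceptual: one must argue that the nonnegatively-weighted combination of the individual Ky Fan bounds is tight, i.e. that a \emph{single} feasible $\matf{X}$ meets all $d$ partial-sum bounds at once, and must handle the non-uniqueness arising from repeated eigenvalues, where any orthonormal basis of the relevant eigenspace is equally optimal. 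The reduction to $\vecf{u}^{\perp}$ removes the only genuinely awkward constraint, so once that is in place the remaining inequalities are standard.
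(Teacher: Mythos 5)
Your proof is correct, but it takes a genuinely different route from the paper's. The paper handles the constraint $\matf{X}^{\trans}\vecf{u}=\vecf{0}$ by a rank-one shift: on the feasible set the cost equals $\mathrm{tr}\bigl( \matf{X}^{\trans} ( \matf{\Pi} - c\, \vecf{u}\vecf{u}^{\trans} ) \matf{X} \boldsymbol{\Lambda} \bigr)$ for any scalar $c$, and choosing $c > \sigma_1 - \sigma_m$ pushes $\vecf{u}$ to the strictly bottom (respectively, with $+c$, top) of the shifted spectrum; the constraint is then dropped, the relaxed problem is the Brockett cost on the Stiefel manifold (solved in the paper's Appendix~A via the critical-point characterization and the Hardy--Littlewood--Polya rearrangement inequality), and the relaxation is tight because the relaxed optimizer, being built from eigenvectors of $\matf{\Pi} - c\,\vecf{u}\vecf{u}^{\trans}$ with eigenvalues distinct from that of $\vecf{u}$, is automatically orthogonal to $\vecf{u}$. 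You instead exploit that $\vecf{u}^{\perp}$ is $\matf{\Pi}$-invariant, restrict to the induced self-adjoint operator there, and prove the extremal bound directly by Abel summation combined with Ky Fan's partial-trace principle, with attainment certified by the nested frames $\matf{X}_k$ saturating every Ky Fan bound simultaneously. Both arguments are sound; yours is more self-contained in that it never needs manifold critical-point analysis (and in the degenerate case $\vecf{u}$ absent it re-derives the Brockett extremal values themselves), while the paper's shift trick is shorter once the Brockett lemma is on the table and sidesteps the restriction-to-subspace bookkeeping entirely. One point worth making explicit in your write-up, which you only gesture at: ``the eigenvalues of $\matf{\Pi}$ excluding $\vecf{u}$'' means removing \emph{one copy} of the eigenvalue attached to $\vecf{u}$ when it is repeated, i.e.\ the spectrum of $\matf{\Pi}\vert_{\vecf{u}^{\perp}}$ is obtained from that of $\matf{\Pi}$ by deleting a single multiplicity --- your invariant-subspace framing actually handles this more transparently than the paper's shift argument, which in the repeated-eigenvalue case shifts only the $\vecf{u}$-direction and leaves the remaining copies in place, as it should.
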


\begin{proof}
An initial version of the proof was given in \citet*{bai2022ijcv}.
Here we provide a conciser version without any further assumption on $\matf{\Pi}$,
{\color{black}
see Appendix~\ref{appendix. proof of general result: PCA with constraint XT u = 0}.
Some preliminaries are provided in Appendix~\ref{section. Brockett Cost Function on the Stiefel Manifold}.
}
\end{proof}

\begin{theorem}
\label{theorem: standard result: minimization in S, with S1=0}
The globally optimal solution to problem (\ref{eq: the minimization problem in Map only})
is in closed-form:
$$\matf{M} = \sqrt{\boldsymbol{\Lambda}} \matf{X}^{\trans}
,
\quad
\mathrm{where\ \ }
\matf{X} = [\vecf{x}_1,\,\vecf{x}_2,\, \dots, \vecf{x}_d]  \in \mathbb{R}^{m \times d}
,
$$ where $\vecf{x}_1,\,\vecf{x}_2,\, \dots, \vecf{x}_d$ in sequence are the $d$ bottom eigenvectors of $\boldsymbol{\mathcal{Q}}$ excluding the vector $\boldsymbol{1}$.
\end{theorem}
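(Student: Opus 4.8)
The plan is to reduce problem~(\ref{eq: the minimization problem in Map only}) to the canonical trace-optimization handled by Lemma~\ref{theorem: general result: PCA with constraint XT u = 0}, through the change of variables $\matf{M} = \sqrt{\boldsymbol{\Lambda}}\,\matf{X}^{\trans}$ with $\matf{X} \in \mathbb{R}^{m \times d}$. First I would verify that this parametrization is compatible with the two constraints. When $\boldsymbol{\Lambda}$ is positive definite, the constraint $\matf{M}\matf{M}^{\trans} = \sqrt{\boldsymbol{\Lambda}}\,\matf{X}^{\trans}\matf{X}\,\sqrt{\boldsymbol{\Lambda}} = \boldsymbol{\Lambda}$ is equivalent to the orthonormality $\matf{X}^{\trans}\matf{X} = \matf{I}$, obtained by pre- and post-multiplying with $\sqrt{\boldsymbol{\Lambda}}^{-1}$; similarly $\matf{M}\vecf{1} = \sqrt{\boldsymbol{\Lambda}}\,\matf{X}^{\trans}\vecf{1} = \vecf{0}$ is equivalent to $\matf{X}^{\trans}\vecf{1} = \vecf{0}$. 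Conversely any feasible $\matf{M}$ yields $\matf{X} = \matf{M}^{\trans}\sqrt{\boldsymbol{\Lambda}}^{-1}$, so the map is a bijection between the two feasible sets.

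Next I would rewrite the objective. By the cyclic property of the trace and the symmetry of $\sqrt{\boldsymbol{\Lambda}}$, we have $\mathrm{tr}(\matf{M}\boldsymbol{\mathcal{Q}}\matf{M}^{\trans}) = \mathrm{tr}(\sqrt{\boldsymbol{\Lambda}}\,\matf{X}^{\trans}\boldsymbol{\mathcal{Q}}\matf{X}\,\sqrt{\boldsymbol{\Lambda}}) = \mathrm{tr}(\matf{X}^{\trans}\boldsymbol{\mathcal{Q}}\matf{X}\,\boldsymbol{\Lambda})$. Thus problem~(\ref{eq: the minimization problem in Map only}) becomes exactly the minimization in equation~(\ref{eq: general results: argmin, PCA in S, with S1=0}) of Lemma~\ref{theorem: general result: PCA with constraint XT u = 0}, with $\matf{\Pi} = \boldsymbol{\mathcal{Q}}$ and $\vecf{u} = \vecf{1}$. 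Since Theorem~\ref{proposition. Q1 = 0} establishes $\boldsymbol{\mathcal{Q}}\vecf{1} = \vecf{0}$, the vector $\vecf{1}$ is an eigenvector of $\boldsymbol{\mathcal{Q}}$, so the hypothesis of the lemma is satisfied. Applying its second (minimization) part identifies the optimal $\matf{X}$ as the $d$ bottom eigenvectors of $\boldsymbol{\mathcal{Q}}$ excluding $\vecf{1}$, and substituting back gives $\matf{M} = \sqrt{\boldsymbol{\Lambda}}\,\matf{X}^{\trans}$, as claimed.

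The main obstacle I anticipate is the rank-deficient case, where some $\lambda_k = 0$ and $\sqrt{\boldsymbol{\Lambda}}^{-1}$ does not exist, so the clean equivalence of constraints above breaks down. I would handle this by observing that a vanishing diagonal entry $\lambda_k = 0$ forces the $k$-th row of any feasible $\matf{M}$ to be zero (its squared Frobenius norm equals $\lambda_k$), and that the corresponding summand $\lambda_k\,\vecf{x}_k^{\trans}\boldsymbol{\mathcal{Q}}\vecf{x}_k$ drops out of $\mathrm{tr}(\matf{X}^{\trans}\boldsymbol{\mathcal{Q}}\matf{X}\,\boldsymbol{\Lambda})$. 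Hence the minimization over $\matf{M}$ still coincides with the minimization over $\matf{X}$ on the block where $\boldsymbol{\Lambda}$ is positive definite, while the eigenvectors attached to the zero $\lambda_k$ only serve to complete an orthonormal frame orthogonal to $\vecf{1}$, which is always possible provided $d \le m-1$. This confirms that the closed-form $\matf{M} = \sqrt{\boldsymbol{\Lambda}}\,\matf{X}^{\trans}$ remains valid and globally optimal.
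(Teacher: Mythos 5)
Your proposal is correct and follows essentially the same route as the paper: substitute $\matf{M} = \sqrt{\boldsymbol{\Lambda}}\,\matf{X}^{\trans}$ to recast problem~(\ref{eq: the minimization problem in Map only}) as the Brockett-type minimization~(\ref{eq: general results: argmin, PCA in S, with S1=0}), invoke Theorem~\ref{proposition. Q1 = 0} to confirm $\vecf{1}$ is an eigenvector of $\boldsymbol{\mathcal{Q}}$, and apply Lemma~\ref{theorem: general result: PCA with constraint XT u = 0}. Your additional handling of the singular case $\lambda_k = 0$, where the paper's implicit bijection between feasible sets would need $\sqrt{\boldsymbol{\Lambda}}^{-1}$, is a sound refinement that the paper leaves unstated (its Lemma already permits $\lambda_d = 0$, but the equivalence of constraint sets does require your completion-of-frame argument).
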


\begin{proof}
In problem (\ref{eq: the minimization problem in Map only}),
by letting $\matf{M} = \sqrt{\boldsymbol{\Lambda}} \matf{X}^{\trans}$, we have:
\begin{equation}
\begin{aligned}
\min_{\matf{X}}\quad 
& 
\mathrm{tr}\left( \matf{X}^{\trans} 
\boldsymbol{\mathcal{Q}}
\matf{X} \boldsymbol{\Lambda} \right)
\\[0pt] & 
\mathrm{s.t.} \ 
\matf{X}^{\trans} \matf{X} = \matf{I},
\ 
\matf{X}^{\trans}  \vecf{1} = \vecf{0}
.
\end{aligned}
\label{eq. formulation. reduced problem in X}
\end{equation}
From Proposition~\ref{proposition. Q1 = 0},
we see $\vecf{1}$ is an eigenvector of $\boldsymbol{\mathcal{Q}}$ (with eigenvalue $0$).
The result is immediate by applying Lemma~\ref{theorem: general result: PCA with constraint XT u = 0}.
\end{proof}

\begin{remark}[(Shifting eigenvectors)]
\label{remark. shifting eigenvectors of 1}
Since $\boldsymbol{\mathcal{Q}} \vecf{1} = \vecf{0}$, we can shift the eigenvector $\vecf{1}$ of $\boldsymbol{\mathcal{Q}}$ to the top by letting:
$$
\boldsymbol{\mathcal{Q}}' = \boldsymbol{\mathcal{Q}} + n \vecf{1} \vecf{1}^{\trans}  
,
$$
and solve for the $d$ bottom eigenvectors of $\boldsymbol{\mathcal{Q}}'$ to form $\matf{X}$.
\end{remark}

\subsection{Globally Optimal Estimate of the Deformable Transformation}

Upon obtaining the estimate of $\matf{M}$, we can decide the optimal transformation parameters.
From equation (\ref{eq. At, at, Omegat in M}), we set $\matf{F}_t = \matf{O}$, and take the specific solution:
\begin{equation}
\left[
[\matf{A}_t, \, \vecf{a}_t], \, \boldsymbol{\Omega}_t^{\trans} 
\right]
= 
\matf{M} \matf{\Gamma}_t
\begin{bmatrix}
  \matf{\tilde{P}}_t^{\trans} &  \matf{K}_t
\end{bmatrix}
\boldsymbol{\Delta}_t^{\dagger}
.
\end{equation}
We expand $\boldsymbol{\Delta}_t^{\dagger}$ in the term $\begin{bmatrix}
\matf{\tilde{P}}_t^{\trans} & \matf{K}_t
\end{bmatrix}
\boldsymbol{\Delta}_t^{\dagger}$, see equation (\ref{eq. the way to obtain Ft, Gt}) in Appendix~\ref{appendix. derivation of constant Ft, Gt and Qt}, and write the final result as:
\begin{align}
%\begin{subnumcases}
{}
[
\matf{A}_t, \, \vecf{a}_t
]
& 
=
\matf{M} \matf{\Gamma}_t 
\left(
	\matf{I} - \matf{H}_t \matf{K}_t
\right)
\matf{\tilde{P}}_t^{\dagger}
\label{eq. result [A, t] in M}
\\[5pt]
\boldsymbol{\Omega}_t^{\trans} 
&
=
\matf{M} \matf{\Gamma}_t  \matf{H}_t
,
\label{eq. result Omega in M}
%\end{subnumcases}
\end{align}
with:
$$
\matf{H}_t 
\defeq
\left( \matf{I} - \boldsymbol{\mathcal{P}}_t \right)
\matf{K}_t \matf{S}_t^{-1}
.
$$

\begin{proposition}
\label{proposition. properties of Ht}
If $ \matf{K}_t $ is chosen positive definite and $\mu_t > 0$, then
$\matf{H}_t$ is symmetric positive definite where:
\begin{itemize}
\item $ \matf{H}_t = \matf{H}_t^{\trans}$
\vspace{8pt}
\item $\matf{Q}_t = \mu_t  \matf{H}_t$
\end{itemize}
\end{proposition}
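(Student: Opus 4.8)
The plan is to treat the three assertions in order, reusing the closed forms (\ref{eq. expression of Qt}) and (\ref{eq. expression of St}). Throughout I abbreviate the orthogonal projector $\matf{R}_t \defeq \matf{I} - \boldsymbol{\mathcal{P}}_t$, which is symmetric and idempotent ($\matf{R}_t = \matf{R}_t^{\trans} = \matf{R}_t^2$), and set $\matf{B}_t \defeq \matf{R}_t \matf{K}_t$, so that $\matf{B}_t^{\trans} = \matf{K}_t \matf{R}_t$ and $\matf{B}_t^{\trans}\matf{B}_t = \matf{K}_t \matf{R}_t \matf{K}_t$. With this notation $\matf{H}_t = \matf{B}_t \matf{S}_t^{-1}$ and $\matf{S}_t = \matf{B}_t^{\trans}\matf{B}_t + \mu_t \matf{K}_t$, which is symmetric and, since $\matf{K}_t \succ \matf{O}$ and $\mu_t > 0$, positive definite and hence invertible (as already noted below (\ref{eq. expression of St})).

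\textbf{Symmetry.} I would establish $\matf{H}_t = \matf{H}_t^{\trans}$ through the commutation identity $\matf{S}_t \matf{B}_t = \matf{B}_t^{\trans}\matf{S}_t$. Expanding both sides with $\matf{S}_t = \matf{K}_t\matf{R}_t\matf{K}_t + \mu_t \matf{K}_t$ and $\matf{B}_t = \matf{R}_t\matf{K}_t$, each reduces to $\matf{K}_t\matf{R}_t\matf{K}_t\matf{R}_t\matf{K}_t + \mu_t\matf{K}_t\matf{R}_t\matf{K}_t$ (using $\matf{R}_t^2 = \matf{R}_t$), so the two coincide. Multiplying $\matf{S}_t\matf{B}_t = \matf{B}_t^{\trans}\matf{S}_t$ on the left and right by $\matf{S}_t^{-1}$ then yields $\matf{B}_t\matf{S}_t^{-1} = \matf{S}_t^{-1}\matf{B}_t^{\trans}$, i.e. $\matf{H}_t = \matf{H}_t^{\trans}$.

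\textbf{The identity $\matf{Q}_t = \mu_t\matf{H}_t$.} I would eliminate $\mu_t\matf{K}_t$ using $\matf{S}_t = \matf{K}_t\matf{R}_t\matf{K}_t + \mu_t\matf{K}_t$, rewriting $\mu_t\matf{H}_t = \matf{R}_t(\mu_t\matf{K}_t)\matf{S}_t^{-1} = \matf{R}_t(\matf{S}_t - \matf{K}_t\matf{R}_t\matf{K}_t)\matf{S}_t^{-1} = \matf{R}_t - \matf{B}_t\matf{B}_t\matf{S}_t^{-1}$. Now I invoke the symmetry just proved, in the form $\matf{B}_t\matf{S}_t^{-1} = \matf{S}_t^{-1}\matf{B}_t^{\trans}$, to turn $\matf{B}_t\matf{B}_t\matf{S}_t^{-1} = \matf{B}_t(\matf{B}_t\matf{S}_t^{-1})$ into $\matf{B}_t\matf{S}_t^{-1}\matf{B}_t^{\trans} = \matf{R}_t\matf{K}_t\matf{S}_t^{-1}\matf{K}_t\matf{R}_t$, which is exactly the second term of (\ref{eq. expression of Qt}); hence $\mu_t\matf{H}_t = \matf{R}_t - \matf{R}_t\matf{K}_t\matf{S}_t^{-1}\matf{K}_t\matf{R}_t = \matf{Q}_t$.

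\textbf{Positive definiteness.} This is the delicate step and the main obstacle. From the two identities and Proposition~\ref{proposition. properties of Q_t}, $\matf{H}_t = \mu_t^{-1}\matf{Q}_t$ is symmetric and nonnegative, so its quadratic form vanishes exactly on its kernel; the task is therefore to show that kernel is trivial on the subspace where $\matf{H}_t$ genuinely operates. I would use the symmetric factorization $\matf{H}_t = \matf{S}_t^{-1}\matf{B}_t^{\trans} = \matf{S}_t^{-1}\matf{K}_t\matf{R}_t$, whose range equals $\mathcal{V} \defeq \mathrm{range}(\matf{R}_t)$ — the subspace in which the columns of the dual variable $\boldsymbol{\Omega}_t$ actually reside, cf. (\ref{eq. result Omega in M}). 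For any $\vecf{x} \in \mathcal{V}$ one has $\matf{R}_t\vecf{x} = \vecf{x}$, hence $\matf{H}_t\vecf{x} = \matf{S}_t^{-1}\matf{K}_t\vecf{x}$; since $\matf{S}_t^{-1}$ and $\matf{K}_t$ are both invertible, $\matf{H}_t\vecf{x} = \vecf{0}$ forces $\vecf{x} = \vecf{0}$. Combining this trivial-kernel property with nonnegativity gives $\vecf{x}^{\trans}\matf{H}_t\vecf{x} > 0$ for every nonzero $\vecf{x} \in \mathcal{V}$, which is the asserted positive definiteness. The subtle point I would flag is that the strict inequality is secured only after identifying $\mathcal{V}$ as the effective domain of $\matf{H}_t$, and that the strict positivity rests entirely on the invertibility of $\matf{K}_t$ (the positive-definite-kernel assumption) together with $\mu_t > 0$, both entering through $\matf{S}_t$.
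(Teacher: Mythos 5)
Your two identity proofs are correct and reach the same algebraic content as the paper's Appendix~\ref{appendix. properties of Ht}, by a slightly different route. The paper first simplifies $\matf{K}_t \matf{S}_t^{-1} \matf{K}_t = \left( (\matf{I}-\boldsymbol{\mathcal{P}}_t) + \mu_t \matf{K}_t^{-1} \right)^{-1}$ (a Woodbury-type step that explicitly uses $\matf{K}_t^{-1}$), writes out the left- and right-inverse identities for this matrix, and multiplies each by $\matf{I}-\boldsymbol{\mathcal{P}}_t$ on the appropriate side; this produces $\matf{Q}_t = \mu_t \matf{S}_t^{-1}\matf{K}_t(\matf{I}-\boldsymbol{\mathcal{P}}_t) = \mu_t (\matf{I}-\boldsymbol{\mathcal{P}}_t)\matf{K}_t\matf{S}_t^{-1}$ in one stroke, so symmetry and $\matf{Q}_t = \mu_t\matf{H}_t$ fall out simultaneously. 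You instead prove symmetry first, from the commutation $\matf{S}_t\matf{B}_t = \matf{B}_t^{\trans}\matf{S}_t$ (which, note, needs only the symmetry of the projector, not idempotence --- your parenthetical invoking $\matf{R}_t^2 = \matf{R}_t$ is never actually used in the expansion), and then obtain the second bullet by eliminating $\mu_t\matf{K}_t = \matf{S}_t - \matf{K}_t\matf{R}_t\matf{K}_t$. Your route avoids $\matf{K}_t^{-1}$ and the Woodbury identity altogether, which is marginally more elementary; the paper's version buys both bullets from a single computation. One cosmetic caveat: $\matf{R}_t$ already denotes the rotation at time $t$ in this paper, so your shorthand for $\matf{I}-\boldsymbol{\mathcal{P}}_t$ would need renaming before splicing in.

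On definiteness you are more careful than the paper, and rightly so: the literal claim ``$\matf{H}_t$ is symmetric positive definite'' cannot hold, since the two bullets combined with Proposition~\ref{proposition. properties of Q_t} force $\matf{H}_t \vecf{1}_{m_t} = \mu_t^{-1}\matf{Q}_t\vecf{1}_{m_t} = \vecf{0}$; indeed $\matf{H}_t$ annihilates the entire range of $\matf{\tilde{P}}_t^{\trans}$, so it is only positive semidefinite on $\mathbb{R}^{m_t}$. The paper's own appendix proof quietly establishes only the two bullet identities and never addresses definiteness at all. Your repair --- strict positivity on $\mathcal{V} = \mathrm{range}(\matf{I}-\boldsymbol{\mathcal{P}}_t)$ via the trivial-kernel argument $\matf{H}_t\vecf{x} = \matf{S}_t^{-1}\matf{K}_t\vecf{x}$ for $\vecf{x}\in\mathcal{V}$ --- is valid and is the strongest statement available. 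Two minor quibbles: the closing phrase ``which is the asserted positive definiteness'' overstates the match, since what you prove is the corrected statement rather than the proposition as written; and your identification of $\mathrm{range}(\matf{H}_t)$ with $\mathcal{V}$ should be justified from the factorization $\matf{H}_t = (\matf{I}-\boldsymbol{\mathcal{P}}_t)\matf{K}_t\matf{S}_t^{-1}$ (or by invoking the symmetry already proved), because $\matf{S}_t^{-1}\matf{K}_t\matf{R}_t$ by itself only exhibits the range as $\matf{S}_t^{-1}\matf{K}_t\mathcal{V}$. Neither quibble affects the correctness of your kernel argument.
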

\begin{proof}
See Appendix~\ref{appendix. properties of Ht}.
\end{proof}

The optimal KBT $\vecf{y}_t (\vecf{p})$ in equation (\ref{eq: transformation of a point by affine-kernel}), for an arbitrary query point $\vecf{p}$, can be written as:
\begin{equation}
\vecf{y}_t (\vecf{p})
 =
\matf{M} \matf{\Gamma}_t 
\left(
\left(
	\matf{I} - \matf{H}_t \matf{K}_t
\right)
\matf{\tilde{P}}_t^{\dagger}
\begin{bmatrix}
\vecf{p} \\[5pt] 1
\end{bmatrix}
+ \matf{H}_t \vecf{k}_t(\vecf{p})
\right)
.
\label{eq. optimal estimation of y_t(p)}
\end{equation}

Since $\matf{M} = \sqrt{\boldsymbol{\Lambda}} \matf{X}^{\trans}$, we establish the estimate of both $\matf{M}$ and $\vecf{y}_t (\cdot)$ up to an unknown $\boldsymbol{\Lambda}$.
It should be noted that any $\boldsymbol{\Lambda}$ admits a globally optimal solution to problem (\ref{eq: deformable SLAM formulation using Kernel and dual variable}).
Thus this is what we can maximally achieve by solving problem (\ref{eq: deformable SLAM formulation using Kernel and dual variable}).

\subsection{Coordinate Transformation of Data}
\label{subsection. Coordinate Transformation of Data}

In the data acquisition process, the point-cloud data $\matf{P}_t$ can be expressed in any user defined coordinate frames.
\begin{definition}[(Coordinate transformation)]
We refer to the \textit{coordinate transformation of data $\matf{P}_t $} as
$\mathbf{\breve{P}}_t = \mathbf{\breve{R}}_t \matf{P}_t + \mathbf{\breve{t}}_t  \vecf{1}^{\trans}$, with $(\mathbf{\breve{R}}_t,\, \mathbf{\breve{t}}_t)$ being any arbitrary rigid transformation.
\end{definition}

Ideally, we want the estimate of $\matf{M}$ to be invariant under coordinate transformations of data.
By equation (\ref{eq. expression of Qt}),
$\matf{Q}_t$ can be expressed with $\matf{I} - \boldsymbol{\mathcal{P}}_{t}$ and $\matf{K}_t$.
If both $\boldsymbol{\mathcal{P}}_{t}$ and $\matf{K}_t$ are invariant to the coordinate transformation of $\matf{P}_t $,
then $\matf{Q}_t$ is invariant to the coordinate transformation, thus so will be $\boldsymbol{\mathcal{Q}}$.

\begin{lemma}[(Lemma 5 in \citet*{bai2022ijcv})]
\label{lemma. invariance of orthogonal projection matrix under coordinate transformations}
The orthogonal projection matrix
$
\boldsymbol{\mathcal{P}}_{t}
=
\matf{\tilde{P}}_t^{\trans}
(
\matf{\tilde{P}}_t \matf{\tilde{P}}_t^{\trans}
)^{\dagger}
\matf{\tilde{P}}_t
$
remains unchanged under any coordinate transformation of $\matf{P}_t$.
\end{lemma}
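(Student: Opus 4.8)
The plan is to recognize $\boldsymbol{\mathcal{P}}_t$ as an orthogonal projector and then argue that the subspace onto which it projects is untouched by a coordinate transformation of the data. First I would invoke the standard fact that, for any matrix $\matf{A}$, the product $\matf{A}^{\trans}(\matf{A}\matf{A}^{\trans})^{\dagger}\matf{A}$ is precisely the orthogonal projector onto the row space of $\matf{A}$ (equivalently, the range of $\matf{A}^{\trans}$). Applied with $\matf{A} = \matf{\tilde{P}}_t$, this identifies $\boldsymbol{\mathcal{P}}_t$ as the orthogonal projection onto $\mathrm{row}(\matf{\tilde{P}}_t)$, a matrix that is determined entirely by that subspace and by nothing else about $\matf{\tilde{P}}_t$.

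Next I would write the coordinate transformation in homogeneous form. Since $\matf{\tilde{P}}_t = [\matf{P}_t^{\trans},\, \vecf{1}]^{\trans}$ stacks $\matf{P}_t$ on top of $\vecf{1}^{\trans}$, the transformed data $\mathbf{\breve{P}}_t = \mathbf{\breve{R}}_t \matf{P}_t + \mathbf{\breve{t}}_t \vecf{1}^{\trans}$ yield the augmented matrix
\begin{equation*}
\begin{bmatrix} \mathbf{\breve{R}}_t \matf{P}_t + \mathbf{\breve{t}}_t \vecf{1}^{\trans} \\ \vecf{1}^{\trans} \end{bmatrix}
=
\underbrace{\begin{bmatrix} \mathbf{\breve{R}}_t & \mathbf{\breve{t}}_t \\ \vecf{0}^{\trans} & 1 \end{bmatrix}}_{\matf{T}}
\begin{bmatrix} \matf{P}_t \\ \vecf{1}^{\trans} \end{bmatrix}
= \matf{T}\, \matf{\tilde{P}}_t,
\end{equation*}
where $\matf{T}$ is the $(d+1)\times(d+1)$ homogeneous transformation matrix. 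The crucial observation is that $\matf{T}$ is invertible, since it is block upper-triangular with $\det \matf{T} = \det \mathbf{\breve{R}}_t = 1 \neq 0$.

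The main step is then immediate: left-multiplication by the invertible matrix $\matf{T}$ replaces each row of $\matf{\tilde{P}}_t$ by a linear combination of its rows, and because $\matf{T}$ is invertible this operation preserves the row space exactly, so that $\mathrm{row}(\matf{T}\,\matf{\tilde{P}}_t) = \mathrm{row}(\matf{\tilde{P}}_t)$. Since the orthogonal projector onto a subspace is uniquely determined by that subspace, the projector built from the transformed data coincides with $\boldsymbol{\mathcal{P}}_t$, which is exactly the claim.

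I expect the only genuine pitfall to be the temptation to prove the result by brute-force substitution, expanding the transformed projector as $\matf{\tilde{P}}_t^{\trans}\matf{T}^{\trans}(\matf{T}\,\matf{\tilde{P}}_t\matf{\tilde{P}}_t^{\trans}\matf{T}^{\trans})^{\dagger}\matf{T}\,\matf{\tilde{P}}_t$ and hoping the $\matf{T}$ factors cancel. This route is treacherous because the pseudo-inverse does \emph{not} satisfy $(\matf{T}\matf{B}\matf{T}^{\trans})^{\dagger} = \matf{T}^{-\trans}\matf{B}^{\dagger}\matf{T}^{-1}$ unless $\matf{T}$ is orthogonal, and here $\matf{T}$ carries a nonzero translation block and is therefore not orthogonal. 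The subspace argument sidesteps the pseudo-inverse algebra entirely, so that is the route I would commit to.
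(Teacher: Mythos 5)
Your proof is correct and is essentially the paper's own argument: the paper likewise factors the augmented data as $\mathbf{\tilde{\breve{P}}}_t = \matf{T} \matf{\tilde{P}}_t$ with the invertible homogeneous matrix $\matf{T}$, observes that $\mathbf{\tilde{\breve{P}}}_t^{\trans}$ and $\matf{\tilde{P}}_t^{\trans}$ therefore have the same range space, and concludes by uniqueness of the orthogonal projector onto a subspace (citing \citet*{meyer2000matrix}). One incidental remark: the paper also includes, as a secondary check, precisely the direct-substitution computation you warn against, asserting $(\mathbf{\tilde{\breve{P}}}_t \mathbf{\tilde{\breve{P}}}_t^{\trans})^{\dagger} = \matf{T}^{-\trans}(\matf{\tilde{P}}_t \matf{\tilde{P}}_t^{\trans})^{\dagger}\matf{T}^{-1}$ --- your caution is well placed, since this pseudo-inverse identity is only guaranteed when $\matf{\tilde{P}}_t$ has full row rank (where $\dagger$ reduces to the true inverse) and can fail for degenerate point-clouds, even though the projector equality itself always survives via the subspace argument you committed to.
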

\begin{proof}
See Appendix~\ref{appendix. proof of the invariance of the orthogonal projection matrix under coordinate transformations}.
\end{proof}

\begin{proposition}
\label{proposition. invariance of Kt by using the kernel function as RBFs}
If the kernel function $k(\cdot , \cdot)$ is chosen as the RBFs, \textit{i.e.,}~$k (\vecf{x}_i, \vecf{x}_j) = \phi (  \left\Vert \vecf{x}_i - \vecf{x}_j \right\Vert  )$, where $k (\vecf{x}_i, \vecf{x}_j)$ is only related to the Euclidean distance of $\vecf{x}_i$ and $\vecf{x}_j$, then $\matf{K}_t$ is invariant to the coordinate transformation.
\end{proposition}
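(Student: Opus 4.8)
The plan is to reduce the invariance of $\matf{K}_t$ to the single fact that a rigid transformation preserves pairwise Euclidean distances, which is precisely the quantity an RBF kernel depends on. First I would write the coordinate transformation at the level of individual points: denoting the columns of $\matf{P}_t$ by $\vecf{p}_1, \dots, \vecf{p}_{m_t}$, the transformed point-cloud $\mathbf{\breve{P}}_t = \mathbf{\breve{R}}_t \matf{P}_t + \mathbf{\breve{t}}_t \vecf{1}^{\trans}$ has columns $\mathbf{\breve{p}}_i = \mathbf{\breve{R}}_t \vecf{p}_i + \mathbf{\breve{t}}_t$.

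Next I would compute the pairwise differences. Forming $\mathbf{\breve{p}}_i - \mathbf{\breve{p}}_j = \mathbf{\breve{R}}_t (\vecf{p}_i - \vecf{p}_j)$, the translation $\mathbf{\breve{t}}_t$ cancels identically. Because $\mathbf{\breve{R}}_t \in \mathrm{SO}(d)$ is orthogonal, it preserves the Euclidean norm, so $\| \mathbf{\breve{p}}_i - \mathbf{\breve{p}}_j \| = \| \mathbf{\breve{R}}_t (\vecf{p}_i - \vecf{p}_j) \| = \| \vecf{p}_i - \vecf{p}_j \|$. Substituting this into the RBF form $k(\vecf{x}_i, \vecf{x}_j) = \phi(\| \vecf{x}_i - \vecf{x}_j \|)$ yields $k(\mathbf{\breve{p}}_i, \mathbf{\breve{p}}_j) = \phi(\| \mathbf{\breve{p}}_i - \mathbf{\breve{p}}_j \|) = \phi(\| \vecf{p}_i - \vecf{p}_j \|) = k(\vecf{p}_i, \vecf{p}_j)$, so every entry of the kernel matrix is unchanged and hence $\matf{K}_t$ itself is invariant.

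There is no substantial obstacle here; the argument is essentially a one-line consequence of the isometry property of $\mathrm{SO}(d)$. The only points worth stating explicitly are that the translation cancels in the difference (so the result does not depend on $\mathbf{\breve{t}}_t$), and that it is orthogonality of $\mathbf{\breve{R}}_t$, rather than mere invertibility, that preserves the norm. It is instructive to contrast this with Lemma~\ref{lemma. invariance of orthogonal projection matrix under coordinate transformations}: there the invariance of $\boldsymbol{\mathcal{P}}_t$ holds under arbitrary affine changes of frame, whereas here the RBF dependence on distance alone is exactly what restricts us to rigid motions, since a general affine map would distort the distances entering $\phi(\cdot)$ and break the invariance.
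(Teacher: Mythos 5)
Your proof is correct and is exactly the intended argument: the paper states this proposition without an explicit proof, treating it as immediate from the fact that a rigid transformation $\mathbf{\breve{p}}_i = \mathbf{\breve{R}}_t \vecf{p}_i + \mathbf{\breve{t}}_t$ preserves pairwise distances, so each entry $\phi(\Vert \vecf{p}_i - \vecf{p}_j \Vert)$ of $\matf{K}_t$ is unchanged. Your closing remark is also apt: the projector invariance of Lemma~\ref{lemma. invariance of orthogonal projection matrix under coordinate transformations} only needs invertibility of the block matrix in the appendix argument, whereas the kernel-matrix invariance genuinely requires orthogonality of $\mathbf{\breve{R}}_t$.
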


\begin{proposition}
\label{proposition. invariance of M by using RBFs}
If the kernel function $k(\cdot , \cdot)$ is chosen as the RBFs,
then matrix $\boldsymbol{\mathcal{Q}}$ in problem (\ref{eq: the minimization problem in Map only}) remains unchanged. In this case, the optimal estimate of $\matf{M}$ remains unchanged under the coordinate transformation.
\end{proposition}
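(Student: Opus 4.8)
The plan is to reduce everything to the invariance of the per-frame matrix $\matf{Q}_t$, since the aggregated matrix $\boldsymbol{\mathcal{Q}} = \sum_{t=1}^{n} \matf{\Gamma}_t \matf{Q}_t \matf{\Gamma}_t^{\trans}$ is assembled from the $\matf{Q}_t$ using only the visibility matrices $\matf{\Gamma}_t$, and the latter are purely combinatorial objects (selections of standard basis vectors) that do not depend on the coordinate frame in which the data $\matf{P}_t$ are expressed. Hence, if each $\matf{Q}_t$ is unchanged under a coordinate transformation $\mathbf{\breve{P}}_t = \mathbf{\breve{R}}_t \matf{P}_t + \mathbf{\breve{t}}_t \vecf{1}^{\trans}$, then $\boldsymbol{\mathcal{Q}}$ is unchanged as well, and it only remains to propagate this to $\matf{M}$.

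First I would exploit the closed-form expression of $\matf{Q}_t$ in equation (\ref{eq. expression of Qt}), which shows that $\matf{Q}_t$ is built entirely from the orthogonal projector $\matf{I} - \boldsymbol{\mathcal{P}}_t$ and the kernel matrix $\matf{K}_t$ (the latter also entering $\matf{S}_t$ through equation (\ref{eq. expression of St})). Concretely, $\matf{Q}_t$ is a fixed algebraic expression in the two ingredients $\boldsymbol{\mathcal{P}}_t$ and $\matf{K}_t$, so it suffices to check that both ingredients are invariant. Invariance of $\boldsymbol{\mathcal{P}}_t$ is exactly Lemma~\ref{lemma. invariance of orthogonal projection matrix under coordinate transformations}, and invariance of $\matf{K}_t$ for RBF kernels is Proposition~\ref{proposition. invariance of Kt by using the kernel function as RBFs}, since a rigid coordinate change preserves all pairwise Euclidean distances $\Vert \vecf{p}_i - \vecf{p}_j \Vert$ that determine the entries of $\matf{K}_t$.

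With both $\boldsymbol{\mathcal{P}}_t$ and $\matf{K}_t$ invariant, I would substitute them into (\ref{eq. expression of St}) and (\ref{eq. expression of Qt}) to conclude that $\matf{S}_t$, and then $\matf{Q}_t$, are invariant, whence $\boldsymbol{\mathcal{Q}}$ is invariant. The final step transfers this to $\matf{M}$: by Theorem~\ref{theorem: standard result: minimization in S, with S1=0} the optimal $\matf{M} = \sqrt{\boldsymbol{\Lambda}} \matf{X}^{\trans}$ is determined solely by the $d$ bottom eigenvectors of $\boldsymbol{\mathcal{Q}}$ (excluding $\vecf{1}$); since $\boldsymbol{\mathcal{Q}}$ itself does not change, neither does its eigenstructure, so $\matf{M}$ is preserved.

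The step I expect to be the only genuine subtlety is this last one, namely arguing that ``same $\boldsymbol{\mathcal{Q}}$'' really yields ``same $\matf{M}$''. Eigenvectors are intrinsically ambiguous --- up to sign for simple eigenvalues, and up to an orthogonal rotation within any degenerate eigenspace --- so $\matf{X}$ is not literally unique. The honest statement is that this ambiguity is inherent to the eigendecomposition and is entirely unrelated to the coordinate frame of the data: for one and the same matrix $\boldsymbol{\mathcal{Q}}$, any fixed eigenvector convention returns the same $\matf{X}$, and the residual sign/rotation freedom is the usual gauge that does not depend on $(\mathbf{\breve{R}}_t,\, \mathbf{\breve{t}}_t)$. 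Everything else --- the assembly of $\boldsymbol{\mathcal{Q}}$ and the two invoked invariance results --- is routine substitution.
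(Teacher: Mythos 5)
Your proposal is correct and follows essentially the same route as the paper, whose justification is exactly the prose chain preceding the proposition: $\matf{Q}_t$ depends only on $\matf{I}-\boldsymbol{\mathcal{P}}_t$ and $\matf{K}_t$ via equations (\ref{eq. expression of Qt}) and (\ref{eq. expression of St}), both invariant by Lemma~\ref{lemma. invariance of orthogonal projection matrix under coordinate transformations} and Proposition~\ref{proposition. invariance of Kt by using the kernel function as RBFs}, hence $\boldsymbol{\mathcal{Q}}=\sum_t \matf{\Gamma}_t\matf{Q}_t\matf{\Gamma}_t^{\trans}$ is invariant and the optimal $\matf{M}$ of Theorem~\ref{theorem: standard result: minimization in S, with S1=0} is preserved. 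Your closing remark on the sign and degenerate-eigenspace gauge of the eigenvectors is a valid refinement the paper leaves implicit (it only surfaces later, in Sections~\ref{section: solve rigid transformation R, t and prior Lambda together from given map principal axes X} and~\ref{section. Degeneracy}), and you correctly note it is intrinsic to the eigendecomposition rather than to the choice of $(\mathbf{\breve{R}}_t,\,\mathbf{\breve{t}}_t)$.
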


\section{Global Scale Ambiguity $\sqrt{\boldsymbol{\Lambda}}$}
\label{section: solve rigid transformation R, t and prior Lambda together from given map principal axes X}

\begin{figure*}[t]
	\centering		
	\includegraphics[width=0.85\textwidth]{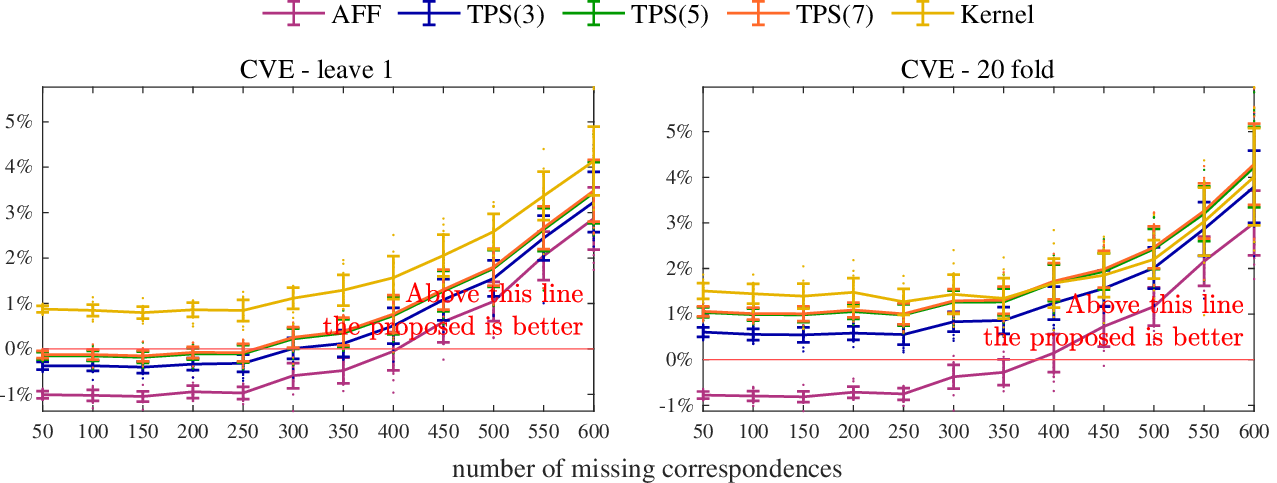}		
	\caption{Performance improvements of the proposed $\sqrt{\boldsymbol{\Lambda}}$ estimation method over the one in~\citet*{bai2022ijcv}.
	We gradually remove the correspondences in the HandBag dataset~\citet*{gallardo2017dense} (with $155\times8$ correspondences in total), and report statistics of $20$-trial Monte-Carlo runs based on leave-$1$ and $20$-fold cross-validations~\citet*{Bai-RSS-22}, for five transformation models: affine, TPS with $3 \times 3$, $5 \times 5$ and $7 \times 7$  control points, and the proposed KBT.}
\label{fig. performance improvements of the proposed Lambda estimation method over the one in IJCV}
\end{figure*}

In \citet*{bai2022ijcv}, the authors proposed a method to estimate the global scale ambiguities (\textit{i.e.,}~the diagonal elements of $\sqrt{\boldsymbol{\Lambda}}$) using pairwise rigid Procrustes analysis.
The method in \citet*{bai2022ijcv} requires the existence of some points to be globally visible across all point-clouds.
In this section, we propose a novel method to estimate $\boldsymbol{\Lambda}$ which does not require such visibility assumptions.

In addition, the method in this work solves $\sqrt{\boldsymbol{\Lambda}}$ by a global optimization formulation, whereas the method in \citet*{bai2022ijcv} relies on local pairwise registrations.
Thus the $\sqrt{\boldsymbol{\Lambda}}$ estimation method proposed in this work can be superior even if the globally visible correspondences are available.
We provide a justification to this claim in Figure~\ref{fig. performance improvements of the proposed Lambda estimation method over the one in IJCV}.

\subsection{As Rigid as Possible}

We want the deformable transformation to be as-rigid-as-possible, that means we want $ \matf{\Phi}_t (\cdot)$ in equation (\ref{eq. deformation transformation y_t}) to be close to an identity mapping.
In particular, without the deformation $ \matf{\Phi}_t (\cdot)$, we have:
$$
\vecf{y}_t (\matf{P}_t)
=
\matf{R}_t \matf{P}_t + \vecf{t}_t \vecf{1}^{\trans}
=
\matf{M} \matf{\Gamma}_t
=
\sqrt{\boldsymbol{\Lambda}} \matf{X}^{\trans} \matf{\Gamma}_t 
.
$$
This motivates us to characterize $\sqrt{\boldsymbol{\Lambda}}$ by an optimization formulation as follows:
\begin{equation}
\label{eq: formulation for rigid transformation, as-rigid-as-possible, general form}
\min_{ \{ \matf{R}_t,\, \vecf{t}_t \} ,\, \boldsymbol{\Lambda},\, \matf{R}_g}
\quad 
\sum_{t=1}^{n}
\, 
\left\Vert
\matf{R}_t \matf{P}_t + \vecf{t}_t \vecf{1}^{\trans}
- 
\sqrt{\boldsymbol{\Lambda}} \matf{R}_g
\matf{G}_t
\right\Vert_{\mathcal{F}}^2
,
\end{equation}
with:
$$
\matf{G}_t = \matf{X}^{\trans} \matf{\Gamma}_t 
,
$$
and $(\matf{R}_t, \, \vecf{t}_t)$ denoting the rigid transformation.
Here we have introduced an orthonormal matrix $\matf{R}_g  \in \mathrm{O}(d)$ for a reason we will explain later in Section~\ref{subsection. Zero-deformation and noise-free}.
At the moment, it suffices to think of $\matf{R}_g$ as an identity matrix.

\subsection{Reduced Formulation}

In formulation (\ref{eq: formulation for rigid transformation, as-rigid-as-possible, general form}), we notice $\vecf{t}_t$ is linearly dependent on the other parameters $\matf{R}_t,\, \boldsymbol{\Lambda},\, \matf{R}_g$.
Thus formulation (\ref{eq: formulation for rigid transformation, as-rigid-as-possible, general form}) admits a separable structure which allows us to eliminate $\vecf{t}_t$ from the formulation~\citet*{golub2003separable}.
In specific, given $\matf{R}_t$, $\boldsymbol{\Lambda}$ and $\matf{R}_g$, the estimates of $\vecf{t}_t$ are expressed as:
\begin{equation}
\label{eq. linear dependence of t on Rt, Rg, Lambda}
\vecf{t}_t = 
- \frac{1}{m_t}\left(
\matf{R}_t \matf{P}_t 
-
\sqrt{\boldsymbol{\Lambda}} \matf{R}_g
\matf{G}_t
\right)
\vecf{1}
,
\quad
\left( t \in [1 : n] \right)
.
\end{equation}
After substituting equation (\ref{eq. linear dependence of t on Rt, Rg, Lambda}) into formulation (\ref{eq: formulation for rigid transformation, as-rigid-as-possible, general form}), we obtain a reduced problem:
\begin{align}
\min_{ \{ \matf{R}_t \},\, \boldsymbol{\Lambda},\, \matf{R}_g}
\quad 
\sum_{t=1}^{n}
\, &
\left\Vert
\matf{R}_t \matf{\bar{P}}_t
- 
\sqrt{\boldsymbol{\Lambda}}  \matf{R}_g  \matf{\bar{G}}_t
\right\Vert_{\mathcal{F}}^2
,
\label{eq: formulation for rigid transformation, as-rigid-as-possible, in R only}
\end{align}
with $\matf{\bar{P}}_t
=
\matf{P}_t - \frac{1}{m_t} \matf{P}_t \vecf{1} \vecf{1}^{\trans}$
and
$
\matf{\bar{G}}_t
=
\matf{G}_t - \frac{1}{m_t} \matf{G}_t \vecf{1} \vecf{1}^{\trans}
$.

\subsection{Closed-form Evaluation of $\sqrt{\boldsymbol{\Lambda}}$ and $\matf{R}_g$}
\label{subsection. closed-form Lambda and Rg}

From formulation (\ref{eq: formulation for rigid transformation, as-rigid-as-possible, in R only}), we consider an affine relaxation of $\sqrt{\boldsymbol{\Lambda}}  \matf{R}_g$, and establish its linear dependence on $\matf{R}_t$ as:
\begin{align}
\sqrt{\boldsymbol{\Lambda}}  \matf{R}_g
\gets
\matf{R}_t
\matf{\bar{P}}_t \matf{\bar{G}}_t^{\dagger}
,
\quad
\left( t \in [1 : n] \right)
.
\label{eq. affine relaxation of Rt}
\end{align}
From relaxation (\ref{eq. affine relaxation of Rt}), we then compute
$(\sqrt{\boldsymbol{\Lambda}}  \matf{R}_g)^{\trans} \sqrt{\boldsymbol{\Lambda}}  \matf{R}_g$
and apply the orthonormal constraint $\matf{R}_t^{\trans} \matf{R}_t = \matf{I}$, as:
\begin{align*}
\matf{R}_g^{\trans}  \boldsymbol{\Lambda} \matf{R}_g 
=
\left( \matf{R}_t \matf{\bar{P}}_t \matf{\bar{G}}_t^{\dagger} \right)^{\trans}
\matf{R}_t \matf{\bar{P}}_t \matf{\bar{G}}_t^{\dagger}
=
(
\matf{\bar{P}}_t \matf{\bar{G}}_t^{\dagger}
)^{\trans} 
\,
\matf{\bar{P}}_t \matf{\bar{G}}_t^{\dagger}
.
\end{align*}
Given $n$ point-clouds, we take the average with respect to $t$ which corresponds to the maximum likelihood estimate:
\begin{equation}
\label{eq. eigenvalue decomposition of Rg Lamda Rg = L}
\matf{R}_g^{\trans}
\boldsymbol{\Lambda}
\matf{R}_g
=
\frac{1}{n} \sum_{t=1}^{n} 
(
\matf{\bar{P}}_t \matf{\bar{G}}_t^{\dagger}
)^{\trans} 
\matf{\bar{P}}_t \matf{\bar{G}}_t^{\dagger}
\defeq
\boldsymbol{\mathcal{L}}
.
\end{equation}
We see the lefthand of equation (\ref{eq. eigenvalue decomposition of Rg Lamda Rg = L}) forms the eigenvalue decomposition of $\boldsymbol{\mathcal{L}}$.
We thus compute the diagonals of $\boldsymbol{\Lambda}$ as the eigenvalues of $\boldsymbol{\mathcal{L}}$, and the rows of $\matf{R}_g$ as the corresponding eigenvectors.
We arrange the eigenvalues of $\boldsymbol{\mathcal{L}}$ in the non-descending order.
We notice $\boldsymbol{\mathcal{L}}$ is positive definite (or semi-definite), thus the eigenvalues of $\boldsymbol{\mathcal{L}}$ are non-negative.
Therefore $\sqrt{\boldsymbol{\Lambda}}$ is well-defined in the real domain.

\begin{remark}
The idea to factorize $\sqrt{\boldsymbol{\Lambda}}$ from equation (\ref{eq. eigenvalue decomposition of Rg Lamda Rg = L}) is maturer than the initial version in \citet*{Bai-RSS-22}.
In particular, the eigenvalue decomposition in equation (\ref{eq. eigenvalue decomposition of Rg Lamda Rg = L}) was not realized in \citet*{Bai-RSS-22}.
Critically, the method in \citet*{Bai-RSS-22} may lead to negative diagonals in $\boldsymbol{\Lambda}$, causing undefined $\sqrt{\boldsymbol{\Lambda}}$.
\end{remark}

\vspace{5pt}
\noindent
\textbf{Initialization of $\matf{R}_t$ and $\vecf{t}_t$.}
Given $\sqrt{\boldsymbol{\Lambda}}$ and $\matf{R}_g$, the rotation $\matf{R}_t$ can be solved from formulation (\ref{eq: formulation for rigid transformation, as-rigid-as-possible, in R only}) in closed-form by the special orthogonal Procrustes analysis \citet*{arun1987least, horn1988closed} between $\matf{\bar{P}}_t$ and $\sqrt{\boldsymbol{\Lambda}} \matf{R}_g \matf{\bar{G}}_t$.
Afterwards, we compute $\vecf{t}_t$ from equation (\ref{eq. linear dependence of t on Rt, Rg, Lambda}).

\subsection{Iterative Refinement}

We can solve formulation (\ref{eq: formulation for rigid transformation, as-rigid-as-possible, in R only}) exactly using iterative NLS optimization techniques, \textit{e.g.,}~by Gauss-Newton or Levenberg-Marquardt.
The rotation $\matf{R}_t$ can be readily handled with Lie group techniques.
The diagonal elements of $\sqrt{\boldsymbol{\Lambda}}$ are constrained to be non-negative, thus requiring special consideration.

\vspace{5pt}
\noindent
\textbf{Reflection.}
We notice that the columns of $\matf{X}$ (as the eigenvectors of $\boldsymbol{\mathcal{Q}}$), and thus the rows of $\matf{G}_t$ (and $\matf{\bar{G}}_t$), are defined up to signs.
This means that if we flip the sign of one column in $\matf{X}$, the solution is still optimal.
Using a specific $\matf{X}$, the optimal $\matf{R}_t$ in formulation (\ref{eq: formulation for rigid transformation, as-rigid-as-possible, in R only}) may have negative determinants $\mathrm{det} (\matf{R}_t) = -1$, which is called a \textit{reflection}.

\vspace{5pt}
We thus extend formulation (\ref{eq: formulation for rigid transformation, as-rigid-as-possible, in R only}) using $ \boldsymbol{\eta} \in \mathbb{R}^{d}$ to handle the possible reflections caused by the specification of $\matf{X}$:
\begin{align}
\min_{ \{ \matf{R}_t \in \mathrm{SO}(d) \},  \,  \boldsymbol{\eta} \in \mathbb{R}^{d} }
\quad 
\sum_{t=1}^{n}
\, &
\left\Vert
\matf{R}_t \matf{\bar{P}}_t
- 
\mathbf{diag} ( \boldsymbol{\eta} )
\matf{\bar{G}}_t
\right\Vert_{\mathcal{F}}^2
,
\label{eq. formulation of Rt in SO3 and Eta in R3}
\end{align}
where $\mathbf{diag} ( \boldsymbol{\eta} )$ is a diagonal matrix taking the components in $\boldsymbol{\eta}$.
We further denote $ \mathbf{sign}(\boldsymbol{\eta}) $ a vector containing the signs of the components in $\boldsymbol{\eta}$.

If there exist reflections, the optimal $\boldsymbol{\eta}$ in formulation (\ref{eq. formulation of Rt in SO3 and Eta in R3}) can have negative components.
In this case, we flip the sign of the columns of $\matf{X}$ accordingly.
In general, we set:
\begin{align*}
\matf{X}  & \gets  \matf{X} \,  \mathbf{diag} ( \mathbf{sign}(\boldsymbol{\eta}) )
\\[5pt]
\sqrt{\boldsymbol{\Lambda}} & \gets \mathbf{diag} ( \boldsymbol{\eta} ) \, \mathbf{diag} ( \mathbf{sign}(\boldsymbol{\eta}) )
.
\end{align*}
These operations preserve the optimality of both $\matf{X}$ and $\sqrt{\boldsymbol{\Lambda}}$.

\begin{remark}
It should be noted that the rigid transformations $(\matf{R}_t, \, \vecf{t}_t)$ solved from formulation (\ref{eq: formulation for rigid transformation, as-rigid-as-possible, general form}) are different from the ambiguous poses defined in equation (\ref{eq. entangled (R, t) and Phi}). In essence, formulation (\ref{eq: formulation for rigid transformation, as-rigid-as-possible, general form}) approximately solves Rigid-GPA, by constraining $\matf{M}$ as
$\matf{M} = \sqrt{\boldsymbol{\Lambda}} \matf{X}^{\trans} $.
Thus the optimal $(\matf{R}_t, \, \vecf{t}_t)$ obtained from formulation (\ref{eq: formulation for rigid transformation, as-rigid-as-possible, general form}) are similar to the poses obtained from Rigid-GPA, as shown in Figure~\ref{fig:trajectory_visualization}.
\end{remark}

\begin{figure}[h]
\includegraphics[width=0.48\textwidth]{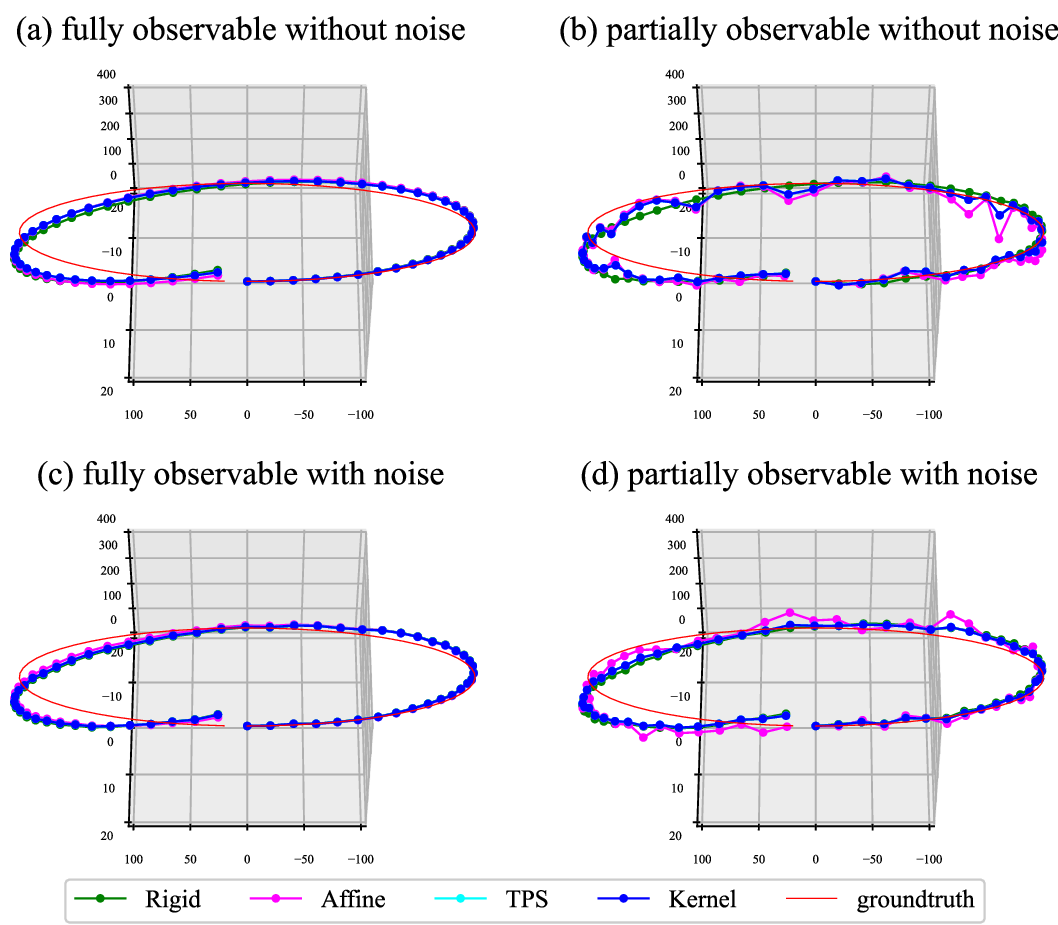}
\caption{Estimated trajectories of Rigid-GPA, Affine-GPA, TPS-GPA, and Kernel-GPA on the liver dataset.}
\label{fig:trajectory_visualization}
\end{figure}

\section{Degeneracies}
\label{section. Degeneracy}

\subsection{Zero-deformation and Noise-free}
\label{subsection. Zero-deformation and noise-free}

We consider the case where matrix $\boldsymbol{\mathcal{Q}}$ in problem (\ref{eq: the minimization problem in Map only}) has $d+1$ zero eigenvalues, where one of them corresponds to the eigenvector $\vecf{1}$ (Theorem~\ref{proposition. Q1 = 0}).
Following Remark~\ref{remark. shifting eigenvectors of 1}, we can drop the eigenvector $\vecf{1}$ easily by solving the $d$ bottom eigenvectors of
$\boldsymbol{\mathcal{Q}}' = \boldsymbol{\mathcal{Q}} + n \vecf{1} \vecf{1}^{\trans} $ instead to form the columns of $\vecf{X}$.
Note that in this case:
\begin{equation*}
 \boldsymbol{\mathcal{Q}} \vecf{X} = \matf{O} = \matf{X} \matf{U}_g \, \mathbf{diag} (\vecf{0}) 
,
\end{equation*}
where $\matf{U}_g$ is an arbitrary orthonormal matrix,
\textit{i.e.,}~$\matf{U}_g \matf{U}_g^{\trans} = \matf{U}_g^{\trans} \matf{U}_g = \matf{I}$.
We see that any $\matf{X}_g \defeq \matf{X} \matf{U}_g$ is a valid solution as $\matf{X}_g \matf{X}_g^{\trans} = \matf{I}$.
Thus, the optimal solution of problem (\ref{eq: the minimization problem in Map only}) will be defined up to an arbitrary $\matf{U}_g$ as:
$$
\matf{M} = \sqrt{\boldsymbol{\Lambda}} \matf{U}_g^{\trans} \matf{X}^{\trans}
.
$$
In this case, $\matf{U}_g$ is fundamentally ambiguous, which means there is no way to decide $\matf{U}_g$ from problem (\ref{eq: the minimization problem in Map only}) directly.

This is the reason why we introduce $\matf{R}_g$ in formulation (\ref{eq: formulation for rigid transformation, as-rigid-as-possible, general form}, \ref{eq: formulation for rigid transformation, as-rigid-as-possible, in R only}), where we essentially denote $\matf{R}_g = \matf{U}_g^{\trans}$.
If such a degeneracy occurs,
we can factorize $\matf{R}_g$ (and thus $\matf{U}_g$) from equation (\ref{eq. eigenvalue decomposition of Rg Lamda Rg = L}) by the eigenvalue decomposition.

\subsection{Flat Point-cloud in 3D Space}
\label{sec. degeneracy. flat point-cloud}

\vspace{5pt}
\noindent
\textbf{The cost function.}
We consider the case of $d=3$ and denote $\matf{X} = [\vecf{x}_1,\, \vecf{x}_2,\, \vecf{x}_3]$.
With some matrix manipulations, see Appendix~\ref{appendix. expansion of tr(x Q x Lambda)}, we show that the cost of problem (\ref{eq. formulation. reduced problem in X}) can be rewritten as:
\begin{align}
\mathrm{tr}\left( \matf{X}^{\trans} 
\boldsymbol{\mathcal{Q}}
\matf{X} \boldsymbol{\Lambda} \right)
& =
\notag
\\[0pt]
&
\sum_{k = 1}^{3}
\lambda_k
\left\Vert
\begin{bmatrix}
\sqrt{ \matf{Q}_1 } \,
( \matf{I} - \boldsymbol{\mathcal{P}}_1 )
\matf{\Gamma}_1^{\trans}
\\[5pt]
\vdots
\\[5pt]
\sqrt{ \matf{Q}_n } \,
( \matf{I} - \boldsymbol{\mathcal{P}}_n )
\matf{\Gamma}_n^{\trans}
\end{bmatrix}
\vecf{x}_k
\right\Vert_{\mathcal{F}}^2
.
\label{eq. the cost expansion of tr(X Q X Lambda)}
\end{align}
Matrix $ \boldsymbol{\mathcal{P}}_t $ is the orthogonal projector to the range space of $\matf{\tilde{P}}_t^{\trans}$~\citet*{meyer2000matrix}.
In particular, if a vector $\vecf{y}$ lies in the range space of $\matf{\tilde{P}}_t^{\trans}$, then
$ \left( \matf{I} - \boldsymbol{\mathcal{P}}_t \right) \vecf{y} = \vecf{0} $.
Hence, this cost is zero (and thus minimized) if each of $\matf{\Gamma}_t^{\trans} \vecf{x}_k$ can be chosen from the respective range space of $\matf{\tilde{P}}_t^{\trans}$,
which is usually impossible due to the existence of noise and deformations.

\vspace{5pt}
\noindent
\textbf{The canonical planar point-cloud.}
If the point-cloud $\matf{P}_t$ is flat, then $\matf{P}_t$ can be rigidly transformed to the $xy-$plane.
In addition, from Theorem~\ref{proposition. any point-cloud can satisfy the transformation constraints}, we conclude that for a flat $\matf{P}_t$, there exists a rigid transformation $(\matf{R}_c, \vecf{t}_c)$ and a canonical 2D point-cloud $\mathbf{P}_{txy}$ in the $xy-$plane such that:
\begin{equation*}
\matf{\tilde{P}}_t
=
\begin{bmatrix}
\matf{P}_t \\[5pt]
\vecf{1}^{\trans}
\end{bmatrix}
=
\begin{bmatrix}
\matf{R}_c & \vecf{t}_c \\[5pt]
\vecf{0}^{\trans} & 1
\end{bmatrix}
\begin{bmatrix}
\mathbf{P}_{txy} \\[5pt]
\vecf{0}^{\trans} \\[5pt]
\vecf{1}^{\trans}
\end{bmatrix}
,
\quad
\mathrm{with}\ 
\mathbf{P}_{txy}
=
\begin{bmatrix}
\vecf{u}_{tx}^{\trans} \\[5pt]
\vecf{u}_{ty}^{\trans} 
\end{bmatrix}
,
\label{eq. the basis of a rank deficient tildePt}
\end{equation*}
where $\vecf{u}_{tx}^{\trans} \vecf{u}_{ty} = 0$, $\vecf{u}_{tx}^{\trans} \vecf{1} = 0$, $\vecf{u}_{ty}^{\trans} \vecf{1} = 0$.
Note that $\matf{u}_{tx}$, $\matf{u}_{ty}$, and $\vecf{1}$ form an orthogonal basis of the range space of $\matf{\tilde{P}}_t^{\trans}$.

\vspace{5pt}
\noindent
\textbf{The solution of $\matf{X}$.}
Vector $\vecf{1}$ lies in the range space of each $\matf{\tilde{P}}_t^{\trans}$. However, due to the constraint $\matf{X}^{\trans} \vecf{1} = \vecf{0}$, we require the columns of $\matf{X}$ to be orthogonal to $\vecf{1}$. As a result, $\vecf{1}$ must be excluded from $\matf{X}$.
Hence, the columns of $\matf{X}$ are essentially constructed based on the ``closeness'' to the range space of each $\matf{P}_t^{\trans}$, or equivalently to the range space of each $\mathbf{P}_{txy}^{\trans}$,
by evaluating the cost (\ref{eq. the cost expansion of tr(X Q X Lambda)}).
Note that since each $\mathbf{P}_{txy}^{\trans}$ has a two dimensional range space, the last column of $\matf{X}$, \textit{i.e.,}~$\vecf{x}_3$ will be pushed toward the null space of $\mathbf{P}_{txy}$ by the orthogonality constraint $\vecf{x}_1^{\trans} \vecf{x}_3 = 0$ and $\vecf{x}_2^{\trans} \vecf{x}_3 = 0$.

\vspace{5pt}
\noindent
\textbf{The solution of $\sqrt{\boldsymbol{\Lambda}}$.}
After solving $\matf{X}$, we leverage formulation (\ref{eq. formulation of Rt in SO3 and Eta in R3}) to estimate $\sqrt{\boldsymbol{\Lambda}}$.
In particular,
we consider the following problem by using the canonical 2D point-clouds $\mathbf{P}_{txy}$ in the $xy-$plane, as:
\begin{equation}
\min_{ \{ \matf{R}_t \in \mathrm{SO}(d) \},\,  \boldsymbol{\eta} \in \mathbb{R}^{d} } \ \ 
\sum_{t=1}^{n}
\left\Vert
\matf{R}_{t}
\begin{bmatrix}
\mathbf{P}_{txy} \\[5pt]
\vecf{0}^{\trans}
\end{bmatrix}
 - 
\mathbf{diag} ( \boldsymbol{\eta} )
\matf{\bar{G}}_t
\right\Vert_{\mathcal{F}}^2
.
\label{eq. Rt and Lambda from canonical point clouds}
\end{equation}
If the optimal $\matf{R}_{t}$ of problem (\ref{eq. Rt and Lambda from canonical point clouds}) implements a rotation in the $xy$-plane,
then the last component in $\sqrt{\boldsymbol{\Lambda}}$ is zero, \textit{i.e.,}~$\lambda_3 = 0$, see Appendix~\ref{appendix. planar case. Rt in xy plane} for more details.
In this case, the optimal $\matf{M}$ will be flat, residing in the $xy-$plane.
This happens if GPA solved from the canonical 2D point-clouds $\mathbf{P}_{txy}$ with $t \in [1:n]$ is optimal in the embedded 3D space.

\begin{remark}
In general, if the 2D data are generated by flattening 3D observations to 2D, \textit{e.g.,}~a) by a projective function or b) by simply ignoring the $z-$coordinates, the optimal reconstruction in the embedded 3D space is usually not flat!
Such an example is the SfM problem.
\end{remark}

\begin{remark}
Similar discussions hold for the case of $d=2$, if the point-clouds degenerate to lines in the plane.
\end{remark}

\section{Implementation}
\label{section. implementation details}

\subsection{Regularization Strength $\mu_t$}
\label{subsection. regularization strength}

We rewrite $\matf{K}_t \matf{S}_t^{-1} $ as:
\begin{equation*}
\matf{K}_t \matf{S}_t^{-1} = 
\frac{1}{\mu_t}
\left(
 \frac{1}{\mu_t} \matf{K}_t  \left( \matf{I} - \boldsymbol{\mathcal{P}}_t \right) +   \matf{I}
\right)^{-1}
.
\end{equation*}
If $\mu_t \rightarrow  + \infty $, then $\frac{1}{\mu_t} \matf{K}_t \rightarrow \matf{O}$.
As a result, $\matf{K}_t \matf{S}_t^{-1} \rightarrow \matf{O}$ and thus $\matf{H}_t  = \left( \matf{I} - \boldsymbol{\mathcal{P}}_t \right) \matf{K}_t \matf{S}_t^{-1} \rightarrow \matf{O} $.
From equations (\ref{eq. expression of Qt}, \ref{eq. result [A, t] in M}, \ref{eq. result Omega in M}), we conclude when $\mu_t \rightarrow  + \infty$, KernelGPA becomes the Affine-GPA:
\begin{equation*}
{ \mathrm{Affine\ GPA} }
\begin{cases}
\matf{Q}_t =  \matf{I} - \boldsymbol{\mathcal{P}}_t
\\[5pt]
[ \matf{A}_t, \, \vecf{a}_t ] = \matf{M} \matf{\Gamma}_t  \matf{\tilde{P}}_t^{\dagger} 
\\[5pt]
\boldsymbol{\Omega}_t^{\trans} = \matf{O} .
\end{cases}
\end{equation*}

For general cases, from equations (\ref{eq. expression of Qt}, \ref{eq. result [A, t] in M}), we notice that both $\matf{Q}_t$ and $[ \matf{A}_t, \, \vecf{a}_t ]$ make use of the kernel matrix $\matf{K}_t$ in the form of $\matf{K}_t \matf{S}_t^{-1} \matf{K}_t$: 
\begin{equation*}
\matf{K}_t \matf{S}_t^{-1} \matf{K}_t = 
\left(
\left( \matf{I} - \boldsymbol{\mathcal{P}}_t \right) +  \left(\frac{1}{\mu_t} \matf{K}_t \right)^{-1}
\right)^{-1}
,
\end{equation*}
where $\mu_t$ controls the influence of $\matf{K}_t $ as $\frac{1}{\mu_t} \matf{K}_t $, and thus the allowed deformation.
The larger $\mu_t$, the smaller the influence of $\matf{K}_t$, and thus the lower the allowed deformation.

We use the same regularization strength for all point-clouds, by setting $\mu_t = \mu$ for $t \in [1 : n]$.

\subsection{Gaussian Kernel}

The proposed KernelGPA can be implemented with a range of kernel functions, up to the choice of the user.
Following Proposition~\ref{proposition. invariance of Kt by using the kernel function as RBFs} and Proposition~\ref{proposition. invariance of M by using RBFs}, we suggest designing the kernel function $k(\cdot, \cdot)$ as the RBFs.
Other than that, we do not pose any extra constraint on the possibilities of $k(\cdot, \cdot)$.

We specifically implement $k(\cdot, \cdot)$ using the Gaussian kernel, which is an RBF taking the form:
\begin{equation}
k (\vecf{x}_i, \vecf{x}_j) = \exp  \left( - \frac{\left\Vert \vecf{x}_i - \vecf{x}_j \right\Vert^2}{2 \sigma^2}  \right)
.
\end{equation}
We decide the \textit{kernel bandwidth} $\sigma$ as $\sigma =  p \bar{d}$, where $\bar{d}$ denotes the mean of the pairwise Euclidean distances between all the discrete training points:
\begin{equation}
\bar{d} = \operatorname{mean} \left( \left\Vert \vecf{x}_i - \vecf{x}_j \right\Vert \right)
, \quad \mathrm{for\ all\ }(i\neq j)
,
\end{equation}
and $p > 0$ is a tunable scale factor.

In our case, for each point-cloud $\matf{P}_t$ and thus each $\matf{K}_t$, we implement a Gaussian kernel with kernel bandwidth $\sigma_t$. We set $\sigma_t =  p \bar{d}_t$ where $\bar{d}_t$ denotes the mean pairwise Euclidean distances between all the corresponding points in $\matf{P}_t$.

%Intuitively, the tuning parameter $p$ controls the off-diagonal sparsity of $\matf{K}_t$, as shown in Figure~\ref{fig:kernel_matrix_sparsity}.
%\begin{figure}[ht]
%	\centering
%	\begin{subfigure}[b]{0.155\textwidth}
%		\centering
%		\includegraphics[width=\textwidth]{figures/liver/0.25_kernel_31.pdf}
%		\caption{$p = 0.25$}
%	\end{subfigure}
%	\hfill
%	\begin{subfigure}[b]{0.155\textwidth}  
%		\centering 
%		\includegraphics[width=\textwidth]{figures/liver/0.5_kernel_31.pdf}
%		\caption{$p = 0.5$}
%	\end{subfigure}
%	\hfill
%	\begin{subfigure}[b]{0.155\textwidth}   
%		\centering 
%		\includegraphics[width=\textwidth]{figures/liver/0.75_kernel_31.pdf}
%		\caption{$p = 0.75$}
%	\end{subfigure}
%	\caption{Illustration of the kernel matrix $ \mathbf{K}_t $ with respect to different values of $p$. The sparsity of $\matf{K}_t$ decreases by using larger $p$.}
%	\label{fig:kernel_matrix_sparsity}
%\end{figure}

\section{Experimental Results}
\label{section. experimental results}

We evaluate the performance of different GPA methods using three datasets:
a) the semi-synthetic liver dataset for smooth organ deformations,
b) the facial expression dataset for structural deformations,
and c) the TOPACS point-clouds extracted from computerized tomography (CT) scans for real medical scenarios.

\subsection{Preliminary}

\vspace{5pt}
\noindent
\textbf{Correspondences.}
The proposed GPA registration is based on correspondences, which can be extracted from RGB-D cameras, segmented meshes or raw point-clouds.
The computational complexity is determined by the dimension of the $\boldsymbol{\mathcal{Q}}$ matrix, and is thus decided by the number of used correspondences.
Since we have assumed low-dimensional deformations, the GPA registration does not require a large number of correspondences.
In contrast, in most cases, the redundancy of correspondences does not improve much the accuracy of the GPA registration, but cause strains on the computation.
Hence, we always suggest using a reasonable amount of correspondences, as long as they are sufficient to capture the underlying motions and deformations.

\vspace{5pt}
\noindent
\textbf{Test points.}
After solving GPA, we obtain an estimate of the deformable transformations $\vecf{y}_t (\cdot)$ and a reference map of used correspondences.
While formulated in the cost function, it is not a good idea to evaluate the residual $\vecf{y}_t (\matf{P}_{t}) - \matf{M} \matf{\Gamma}_t $, because $\vecf{y}_t (\cdot)$ may overfit the correspondences.
Therefore, we use correspondences to solve GPA, and afterwards benchmark the performance of GPA registration using the idea of \textit{test points}.
Importantly, the test points are never used to solve GPA (as the correspondences of these points are typically not available), but usually serve as a dense representation of the geometry of the scene.

\begin{table*}[t]
	\centering
	\caption{The statistics of different GPA methods on the liver dataset.}
	\renewcommand{\arraystretch}{1.0}
	\begin{tabular}{p{2.3cm} p{2cm}  p{2cm} | p{2cm} p{2cm} p{2cm} p{2cm}}
%		\hline
%		\hline
			 & &   & Rigid-GPA & Affine-GPA & TPS-GPA & Kernel-GPA \\
		\hline
		\multirow{3}{8em}{full visibility} & \multirow{3}{8em}{no noise} & min (mm) & 0.389 & 0.140 & 0.022 & \textbf{0.006}\\
		& & max (mm) & 5.208 & 3.008 & 1.567 & \textbf{1.310} \\
		& & mean (mm) & 2.470 & 1.423 & 0.459 & \textbf{0.174} \\
		\hline
		\multirow{3}{8em}{partial visibility} & \multirow{3}{8em}{no noise} & min (mm) & 0.384 & 0.238 & 0.062 & \textbf{0.042} \\
   		& & max (mm) & 5.207 & 3.031 & \textbf{1.518} & 1.868 \\
		& & mean (mm) & 2.480 & 1.435 & 0.503 & \textbf{0.453} \\
		\hline
		\multirow{3}{8em}{full visibility} & \multirow{3}{8em}{with noise} & min (mm) & 1.517 & 1.501 & 1.215 & \textbf{0.557}\\
		& & max (mm) & 5.691 & 3.506 & \textbf{2.404} & 2.448 \\
		& & mean (mm) & 3.062 & 2.229 & \textbf{1.713} & 1.749 \\
		\hline
		\multirow{3}{8em}{partial visibility} & \multirow{3}{8em}{with noise} & min (mm) & 1.531 & 1.499 & 1.328 & \textbf{1.127} \\
		& & max (mm) & 5.717 & 3.524 & \textbf{2.423} & 2.542 \\
		& & mean (mm) & 3.070 & 2.251 & \textbf{1.790} & 1.928 \\
%		\hline{}^\mathrm{\tau}
%		\hline	
	\end{tabular}
 \label{tab:liver_deformation_error}
\end{table*}

\vspace{5pt}
\noindent
\textbf{Consistency by extrapolation.}
We denote the test points as $\matf{\check{P}}_t$ $(t \in [1 : n])$. After solving deformable transformations $\vecf{y}_t (\cdot)$, we evaluate the coherence of the transformed points $\vecf{y}_t (\matf{\check{P}}_t)$ for all $t \in [1 : n]$.
To benchmark the closeness of these transformed points, we need to define a distance metric, based on \textit{e.g.,}~surface-to-surface or nearest neighboring point distances etc.
To simplify the evaluation, we assume the correspondence information for the test points are also known.
We use $\matf{\check{\Gamma}}_t$ to denote the corresponding visibility information of $\matf{\check{P}}_t$.
Such assumption allows us to evaluate the deviation of the transformed points $\vecf{y}_t (\matf{\check{P}}_t)$ directly.

%${}^\mathrm{\tau}\matf{P}_t$   --->  $\matf{\check{P}}_t$
%
%${}^\mathrm{\tau}\matf{\Gamma}_t$  --->  $\matf{\check{\Gamma}}_t$
%
%${}^\mathrm{\tau}\boldsymbol{\delta}$  --->  $\boldsymbol{\check{\delta}}$
%
%${}^\mathrm{\tau}\matf{\hat{M}}$   --->  $\matf{\check{M}}$

\vspace{5pt}
\noindent
\textbf{Evaluation metrics.}
We define the \textit{mean map} of the test points using the mean of $\vecf{y}_t (\matf{\check{P}}_t)$, as:
\begin{equation}
\matf{\check{M}} \defeq
\left(  \sum_{t=1}^n  \vecf{y}_t (\matf{\check{P}}_t) \matf{\check{\Gamma}}_t^{\trans}  \right)
\left( \sum_{t=1}^n  \matf{\check{\Gamma}}_t  \matf{\check{\Gamma}}_t^{\trans} \right)^{\dagger}
,
\end{equation}
where $\sum_{t=1}^n  \matf{\check{\Gamma}}_t  \matf{\check{\Gamma}}_t^{\trans}$ count the total visibilities of each correspondence.
We shall use the mean map $\matf{\check{M}}$ as the reconstruction of the test points.
Then we benchmark the accuracy of the mean map $\matf{\check{M}}$ using the consistencies of the transformed test points.
In specific, for each point in $\matf{\check{M}} $, we define the \textit{point-wise consistencies} of the test points as:
\begin{equation*}
\boldsymbol{\check{\delta}} = 
\sqrt{
\vecf{1}^{\trans}
\left(
\sum_{t=1}^n  \left( \boldsymbol{\Sigma}_t  *  \boldsymbol{\Sigma}_t \right)   \matf{\check{\Gamma}}_t^{\trans}
\right)
\left( \sum_{t=1}^n  \matf{\check{\Gamma}}_t  \matf{\check{\Gamma}}_t^{\trans} \right)^{\dagger}
}
,
\end{equation*}
where
$
\boldsymbol{\Sigma}_t = \vecf{y}_t (\matf{\check{P}}_t) - \matf{\check{M}} \matf{\check{\Gamma}}_t 
,
\quad
\left( t \in [1 : n] \right)	
.
$
Here $\boldsymbol{\Sigma}_t  *  \boldsymbol{\Sigma}_t$ denotes element-wise matrix multiplication, and the outermost square-root is also computed element-wise.

\vspace{5pt}
\noindent
\textbf{Benchmark methods.}
We term GPA with the TPS warp as TPS-GPA, and GPA with the KBT as Kernel-GPA.
We compare Kernel-GPA with the Rigid-GPA, Affine-GPA and TPS-GPA methods. We use in total $125 = 5\times5\times5$ control points for the TPS warp, which are evenly distributed along the principle axes of the point-cloud. The regularization strength of the TPS warp is set to $0.01$ as suggested in~\citet*{bai2022ijcv} for 3D data.

\subsection{Liver}
\label{exp. sec. liver data}

\begin{figure}[t]
	\centering
	\includegraphics[width=0.42\textwidth]{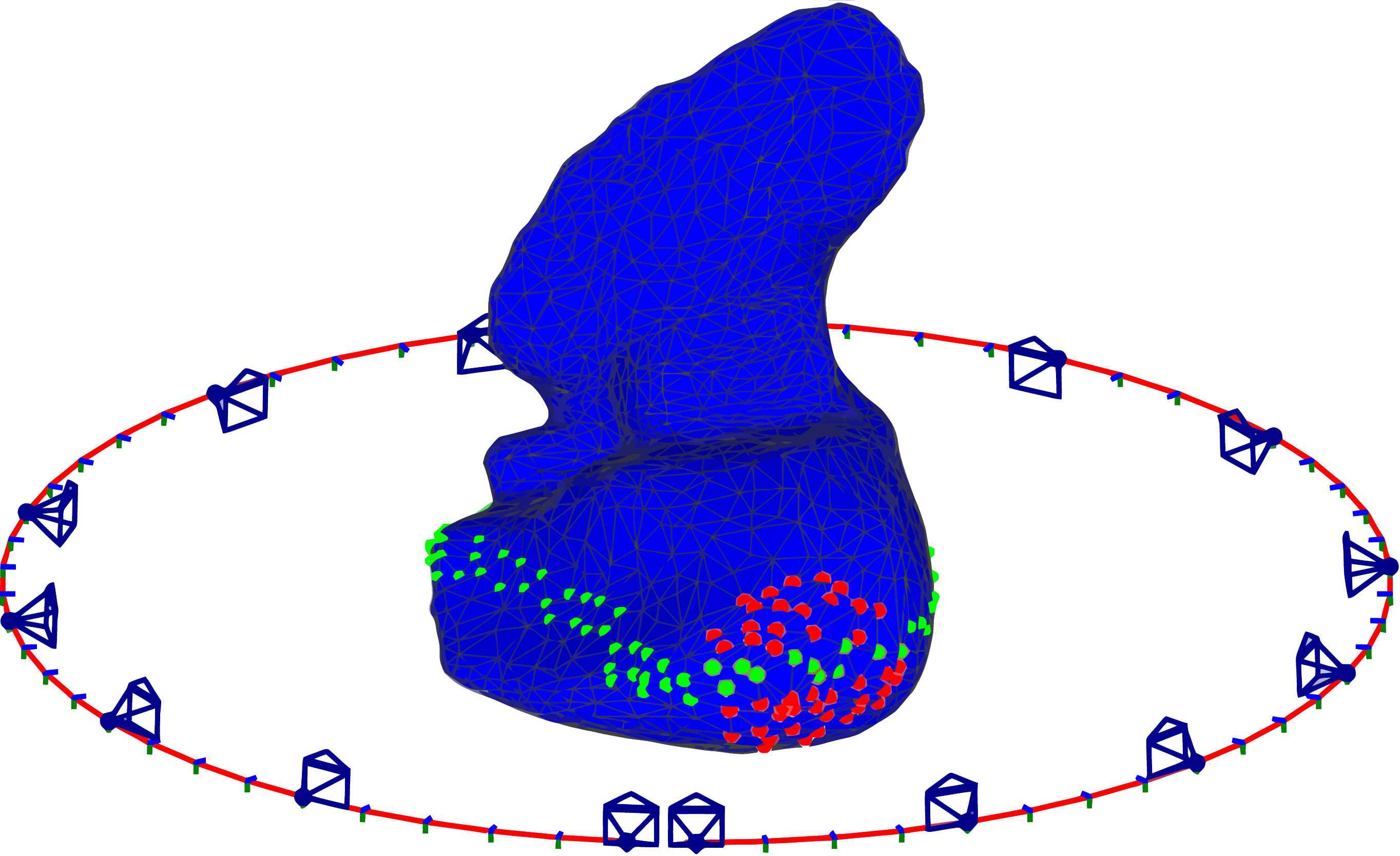}
	\caption{The simulated liver dataset. These dataset contains $60$ meshes, deformed from a template liver mesh by the As-Rigid-As-Possible method~\citet*{ARAP-surface-Alexander2007}. Each mesh has $2002$ vertices with known correspondences. We observe these $60$ meshes from different perspectives, by assigning $60$ poses along a simulated circular trajectory, where only parts of the sensor poses are shown as blue pyramids.	
The green dots denote the control points used to generate deformations, and the red dots denote the disabled correspondences in Figure~\ref{fig:liver_unobservable_parts}.}
	\label{fig:liver_dataset_overview}
\end{figure}
\begin{figure}[t]
	\centering
	\includegraphics[width=0.42\textwidth]{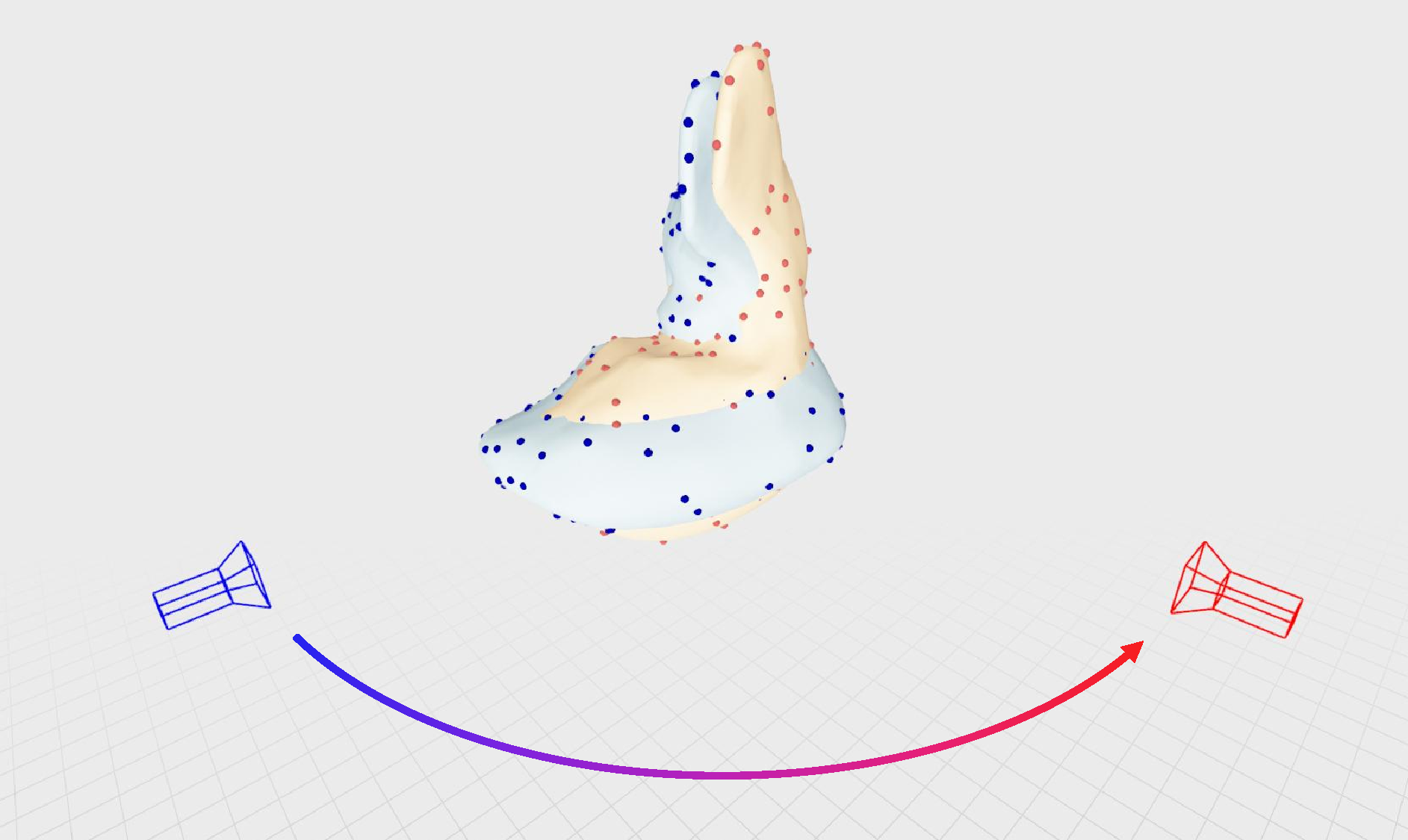}
	\caption{The simulated partial visibility of correspondences. The blue and red dots represent correspondences visible in the blue and red views, respectively. For a region without any correspondences, the deformation therein is never captured and thus is subject to information loss. Thus we drop correspondences randomly to simulate partial visibilities.}
	\label{fig:liver_with_two_viewpoints}
\end{figure}

\vspace{5pt}
\noindent
\textbf{Data generation.}
We use a segmented \textit{liver mesh} model, as shown in Figure~\ref{fig:liver_dataset_overview}, which has $2002$ vertices and $201$ of them are selected as correspondences.
We simulate deformations using the As-Rigid-As-Possible method~\citet*{ARAP-surface-Alexander2007} implemented in the CGAL\footnote{https://www.cgal.org} library.
We simulate a circular trajectory comprising $60$ poses, as shown in Figure~\ref{fig:liver_dataset_overview}.
For the reason of clarity, only parts of the poses are plotted as the pyramid shapes.
At each pose, the sensor observes a deformed mesh in its local coordinate frame, subject to partial visibilities and measurement noise:
\begin{itemize}
\item \textit{Partial visibility.} We randomly drop $30\%$ of the $201$ correspondences to simulate partial visibilities caused by correspondence detection failures, see Figure~\ref{fig:liver_with_two_viewpoints}.
\\[0pt]
\item \textit{Measurement noise.} We add zero-mean Gaussian noise with its standard-deviation set to $1$ mm, to simulate imperfect sensor measurements.
\end{itemize}

\begin{figure}[t]
	\centering
	\includegraphics[width=0.41\textwidth]{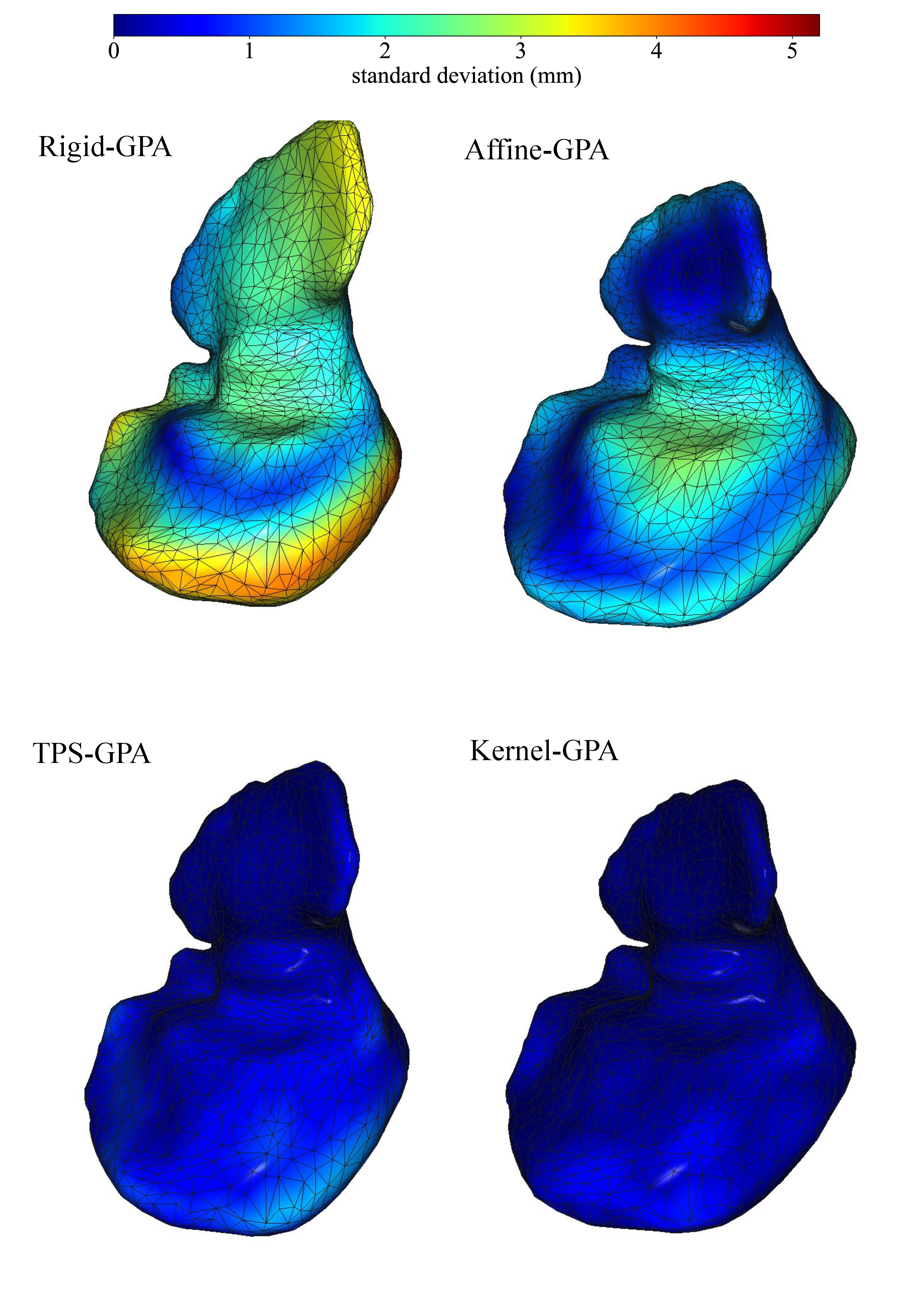}	
	\caption{The accuracy of different GPA methods on the liver dataset. We visualize the mean map $\matf{\check{M}}$ and encode point-wise consistencies of the test points $\boldsymbol{\check{\delta}}$ with color. Both TPS-GPA and Kernel-{\color{black}GPA} give significantly better performance.}
	\label{fig:deformed_mesh_error}
\end{figure}

\vspace{5pt}
\noindent
\textbf{Evaluation.}
We set tuning parameters $p = 0.25$ and $\mu = 0.1$.
We compute the GPA registration using the downsampled $201$ correspondences, and then evaluate the performance of different GPA methods using all the $2002$ correspondences.
For each tested case, we report the minimum, maximum and mean of the point-wise registration error $\boldsymbol{\check{\delta}}$ in Table~\ref{tab:liver_deformation_error}.
We visualize the mean map $\matf{\check{M}}$, and the point-wise registration error $\boldsymbol{\check{\delta}}$ in~Figure~\ref{fig:deformed_mesh_error}, by using the case where the meshes are fully-observable without noise.
It can be seen that GPA with deformable transformations (\textit{i.e.,}~TPS-GPA and Kernel-GPA) can significantly outperform classical Rigid-GPA and Affine-GPA methods.
The proposed Kernel-GPA method gives better results for regions with larger deformations.

\begin{figure}[th]
	\centering
	\includegraphics[width=0.41\textwidth]{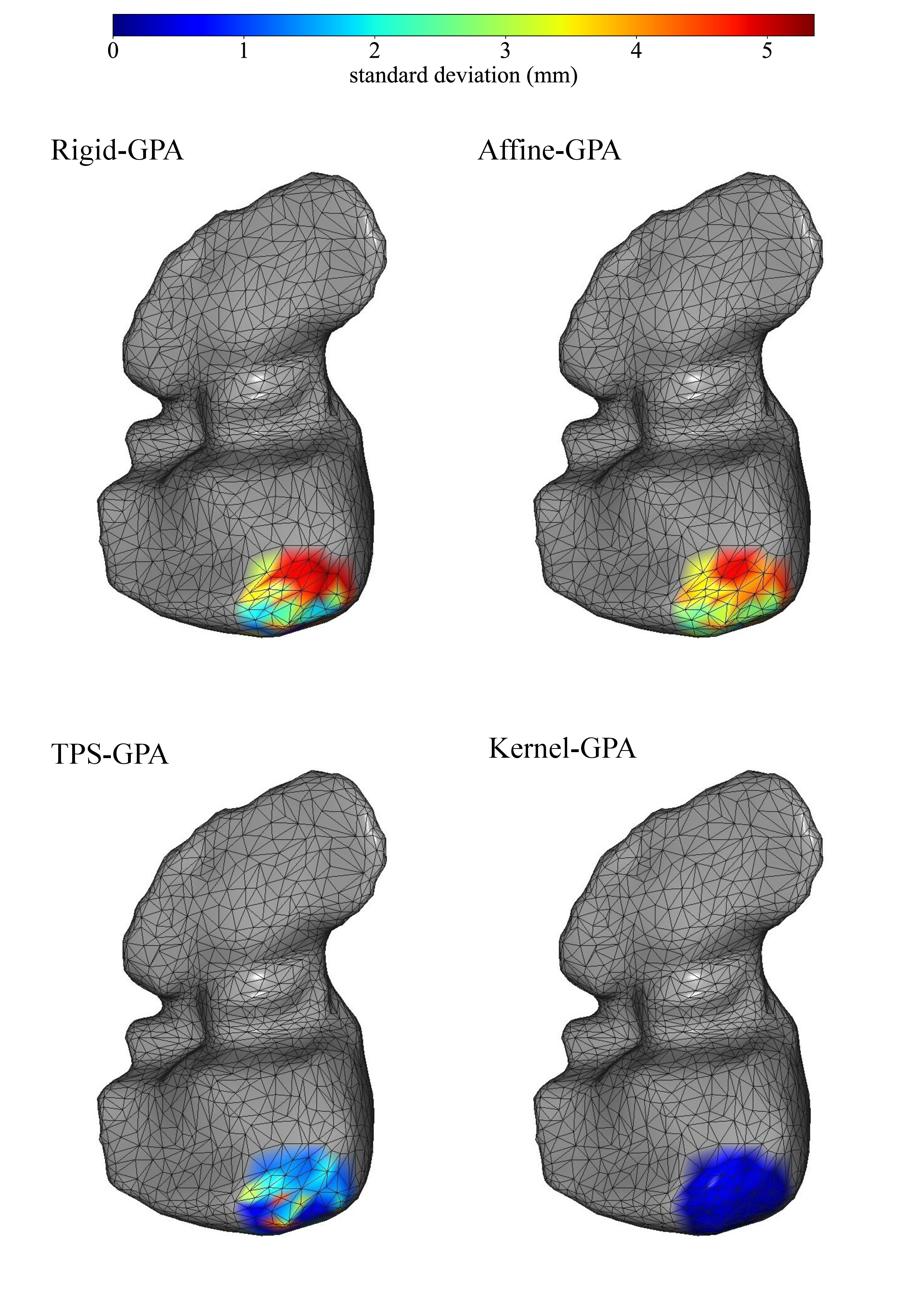}
	\caption{Extrapolation on the liver dataset. We disable the correspondences in the colored region across all the $60$ meshes. We solve GPA without the disabled correspondences, and then transform the test points in the region to construct a predicted mean surface. The reconstruction error of the predicted mean surface is given as the point-wise consistencies of the transformed test points, color coded.}
	\label{fig:liver_unobservable_parts}
\end{figure}

We further set a small region of the liver to be invisible in all the $60$ measurements, as seen in Figure~\ref{fig:liver_unobservable_parts}, and use the mesh vertices therein as test points.
In this test, we extrapolate the situation in the invisible region using $\vecf{y}_t(\cdot)$ computed from correspondences outside the invisible region.
The predicted mean map $\matf{\check{M}}$ and the point-wise consistencies of the transformed test points $\boldsymbol{\check{\delta}}$ are shown in Figure~\ref{fig:liver_unobservable_parts} for each GPA method.
This result further backs our claim on the superior performance of TPS-GPA and Kernel-GPA, where both methods can extrapolate the deformation in the invisible region with very similar performances.

Overall, for smooth deformations, we find both TPS-GPA and Kernel-GPA can give satisfactory results.

\begin{figure*}[t]
	\centering
	\begin{subfigure}[t]{0.18\textwidth}
		\centering
		\includegraphics[width=\textwidth]{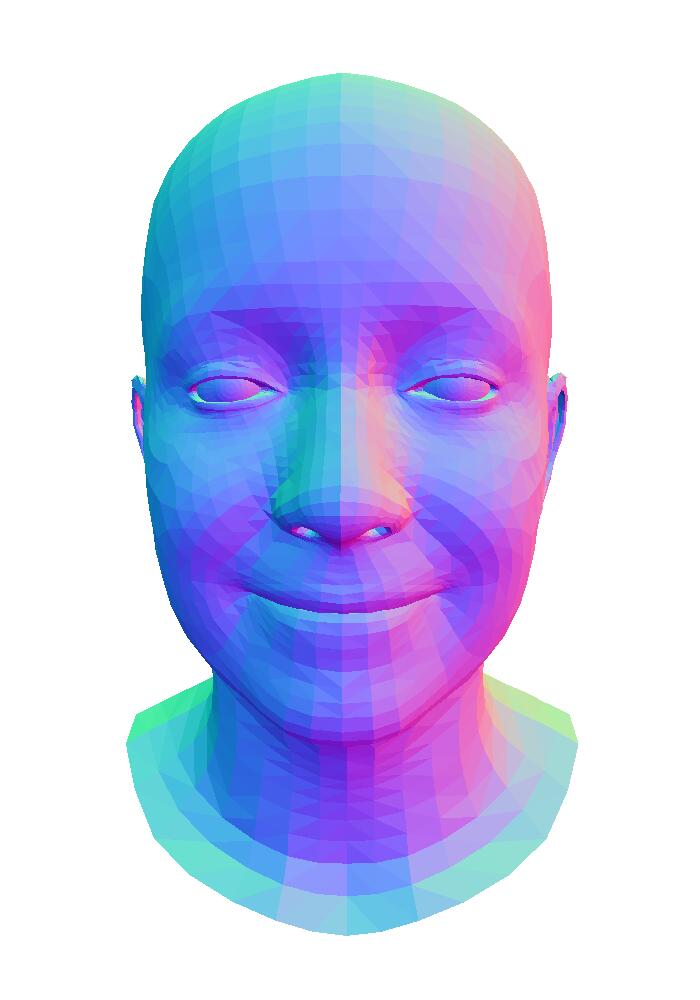}
		% \label{fig:overlaid_rigid}
	\end{subfigure}
	\hfil
	\begin{subfigure}[t]{0.18\textwidth}  
		\centering 
		\includegraphics[width=\textwidth]{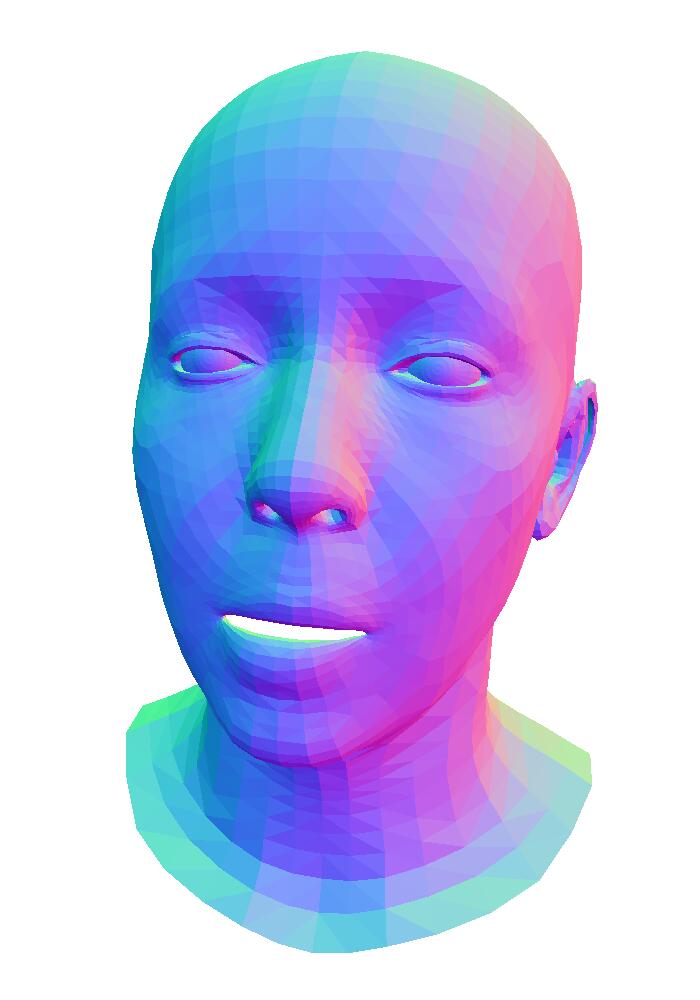}
		% \label{fig:overlaid_affine}
	\end{subfigure}
	\hfil
	\begin{subfigure}[t]{0.18\textwidth}   
		\centering 
		\includegraphics[width=\textwidth]{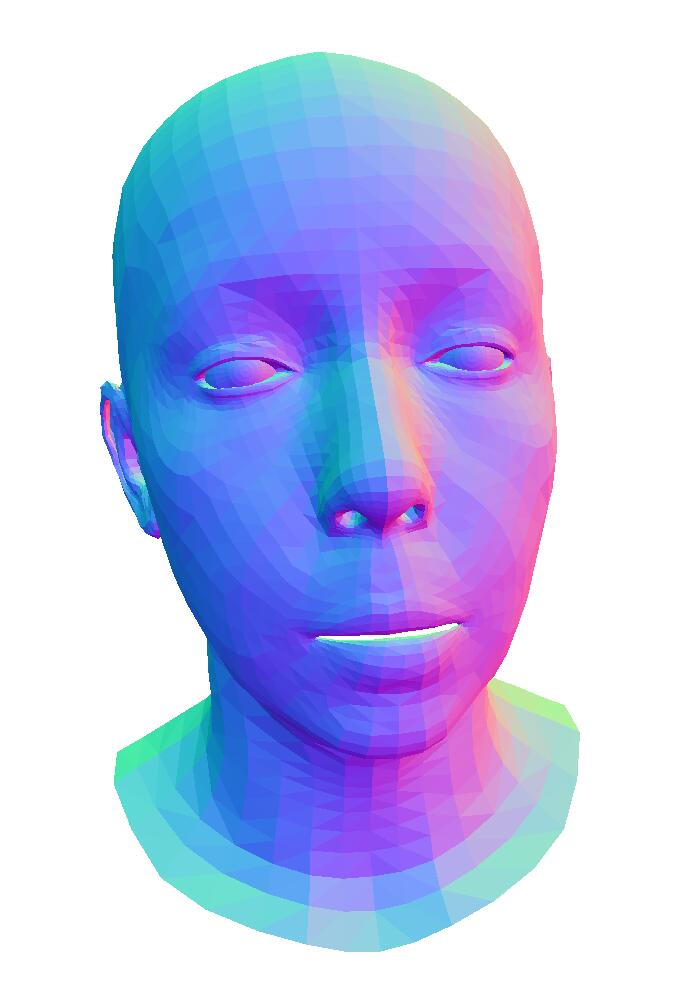}
		% \label{fig:overlaid_tps}
	\end{subfigure}
	\hfil
	\begin{subfigure}[t]{0.18\textwidth}   
		\centering 
		\includegraphics[width=\textwidth]{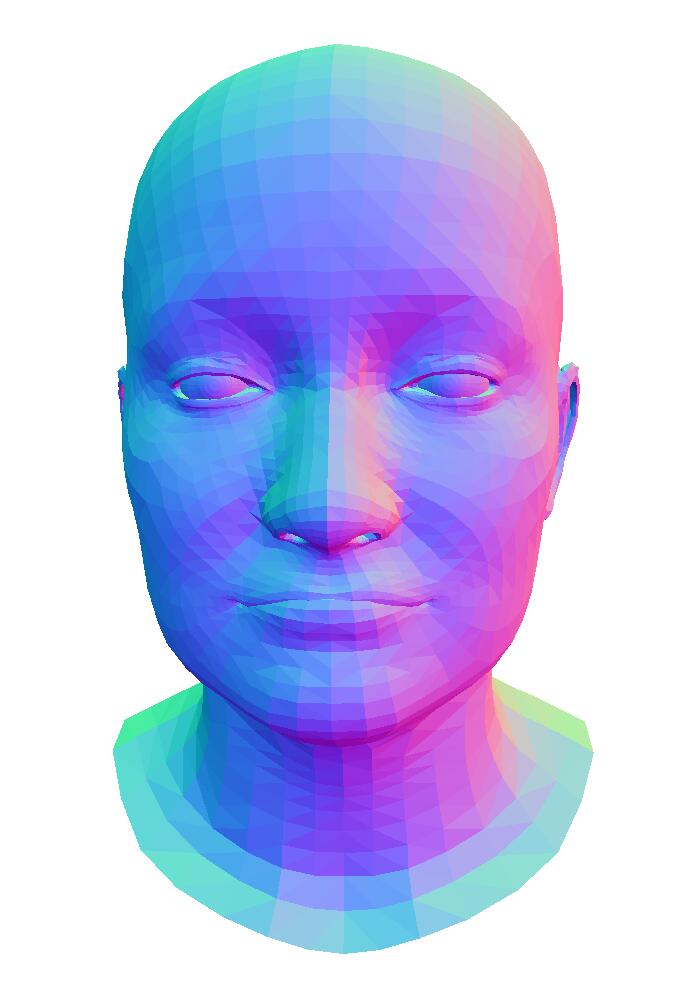}
		% \label{fig:overlaid_kernel}
	\end{subfigure}
	\hfil
	\begin{subfigure}[t]{0.18\textwidth}   
		\centering 
		\includegraphics[width=\textwidth]{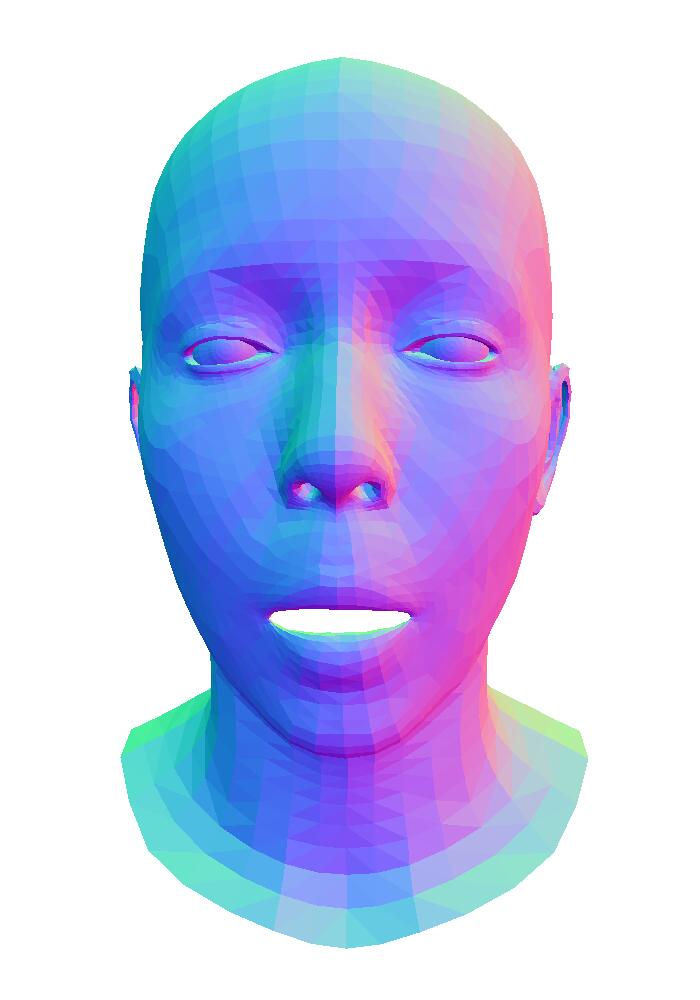}
		% \label{fig:overlaid_kernel}
	\end{subfigure}
	\caption{The facial expression dataset. We reconstruct the 3D model using DECA. The expressions from left to the right are respectively: 1) smile face, 2) curling the lip to the left, 3) curling the lip to the right, 4) cheek blowing, and 5) opening the mouth.}
	\label{fig:deca_meshes}
\end{figure*}

\subsection{Facial Expression}
\label{exp. sec. facial expression}

\begin{table*}[t]
	\centering
	\caption{The statistics of different GPA methods on the facial expression dataset.}
	\renewcommand{\arraystretch}{1.0}
	\begin{tabular}{p{2.5cm} p{2cm} | p{2cm} p{2cm} p{2cm} p{2cm}}
%		\hline
%		\hline
		      &   & Rigid-GPA & Affine-GPA & TPS-GPA & Kernel-GPA \\
		\hline
		\multirow{3}{8em}{smiling} & min (mm) & 0.129 &  0.138 & 0.152 & \textbf{0.003}\\
		& max (mm) & 9.901 & 10.536 & 10.518 & \textbf{2.797} \\
		& mean (mm) & 1.552 & 1.564 & 1.553 & \textbf{0.374} \\
		\hline
		\multirow{3}{8em}{curling left} & min (mm) &  0.099 & 0.138 & 0.152 & \textbf{0.002} \\
		& max (mm) & 8.415 & 7.583 & 7.559 & \textbf{4.290} \\
		& mean (mm) & 1.797 & 1.579 & 1.567 & \textbf{0.309} \\
		\hline
		\multirow{3}{8em}{curling right} & min (mm) &  0.158 & 0.103 & 0.090 & \textbf{0.004} \\
		& max (mm) & 8.265 & 8.283 & 8.251 & \textbf{3.613} \\
		& mean (mm) & 1.730 & 1.652 & 1.640 & \textbf{0.400} \\
		\hline
		\multirow{3}{8em}{cheek blowing} & min (mm) & 0.186 & 0.065 & 0.073 & \textbf{0.004} \\
		& max (mm) & 16.941 & 13.580 & 13.507 & \textbf{7.209} \\
		& mean (mm) & 3.445 & 2.984 & 2.955 & \textbf{0.527} \\
		\hline	
		\multirow{3}{8em}{opening mouth} & min (mm) & 0.240 & 0.156 & 0.145 & \textbf{0.003} \\
		& max (mm) & 11.998 & 10.280 & 10.207 & \textbf{4.747} \\
		& mean (mm) & 2.173 & 2.343 & 2.317 & \textbf{0.368} \\
%		\hline
%		\hline	
	\end{tabular}
	\label{tab:face_deformation_error}
\end{table*}

\begin{figure}[ht]
	\centering 
	\includegraphics[width=0.42\textwidth]{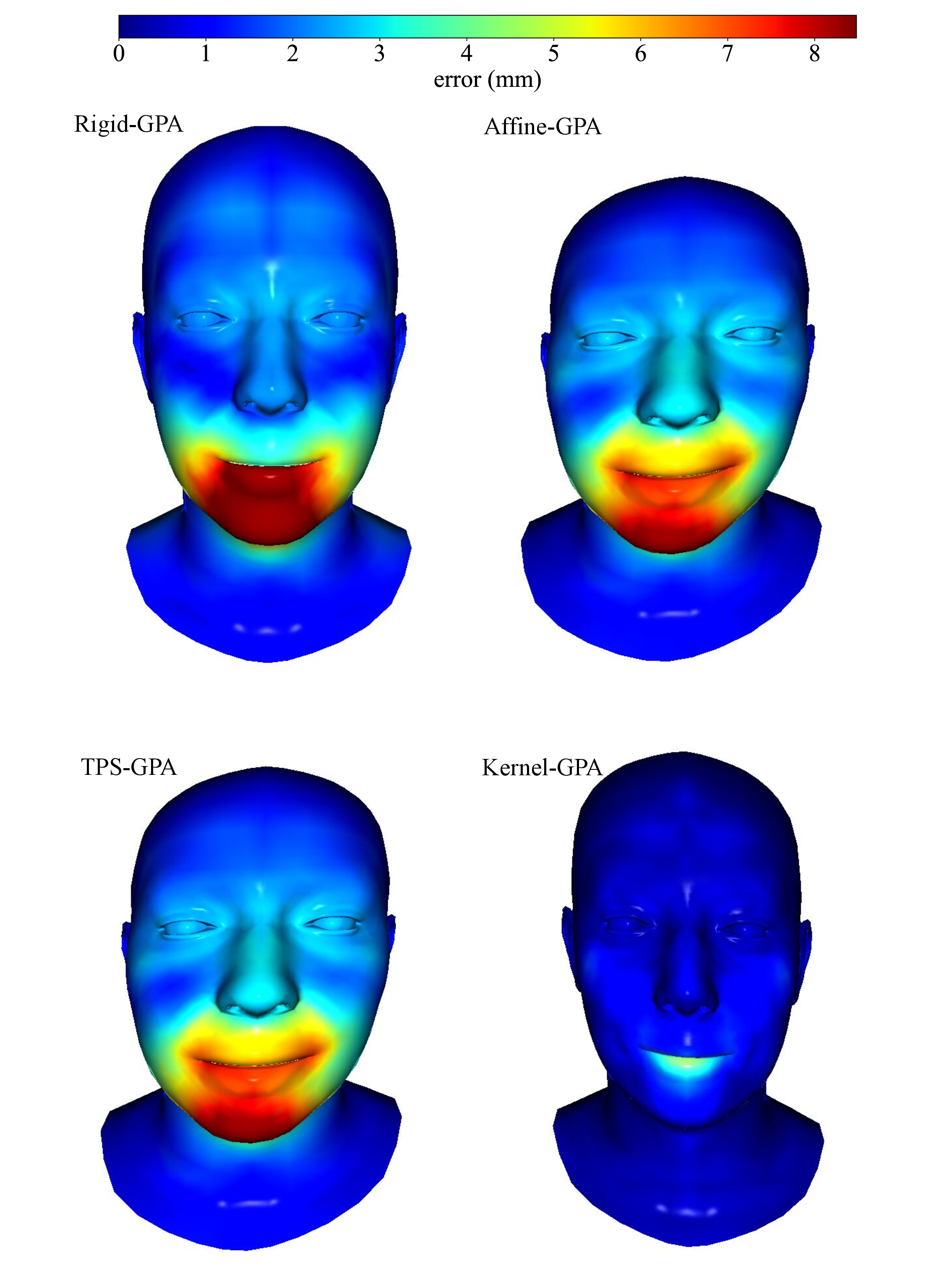}
	\caption{The face model $\matf{\check{M}}$ reconstructed from each GPA method, textured with the point-wise consistencies $\boldsymbol{\check{\delta}}$.}
	\label{fig:face_template}
\end{figure}

\begin{figure*}[ht]
		\centering 
		\includegraphics[width=0.95\textwidth]{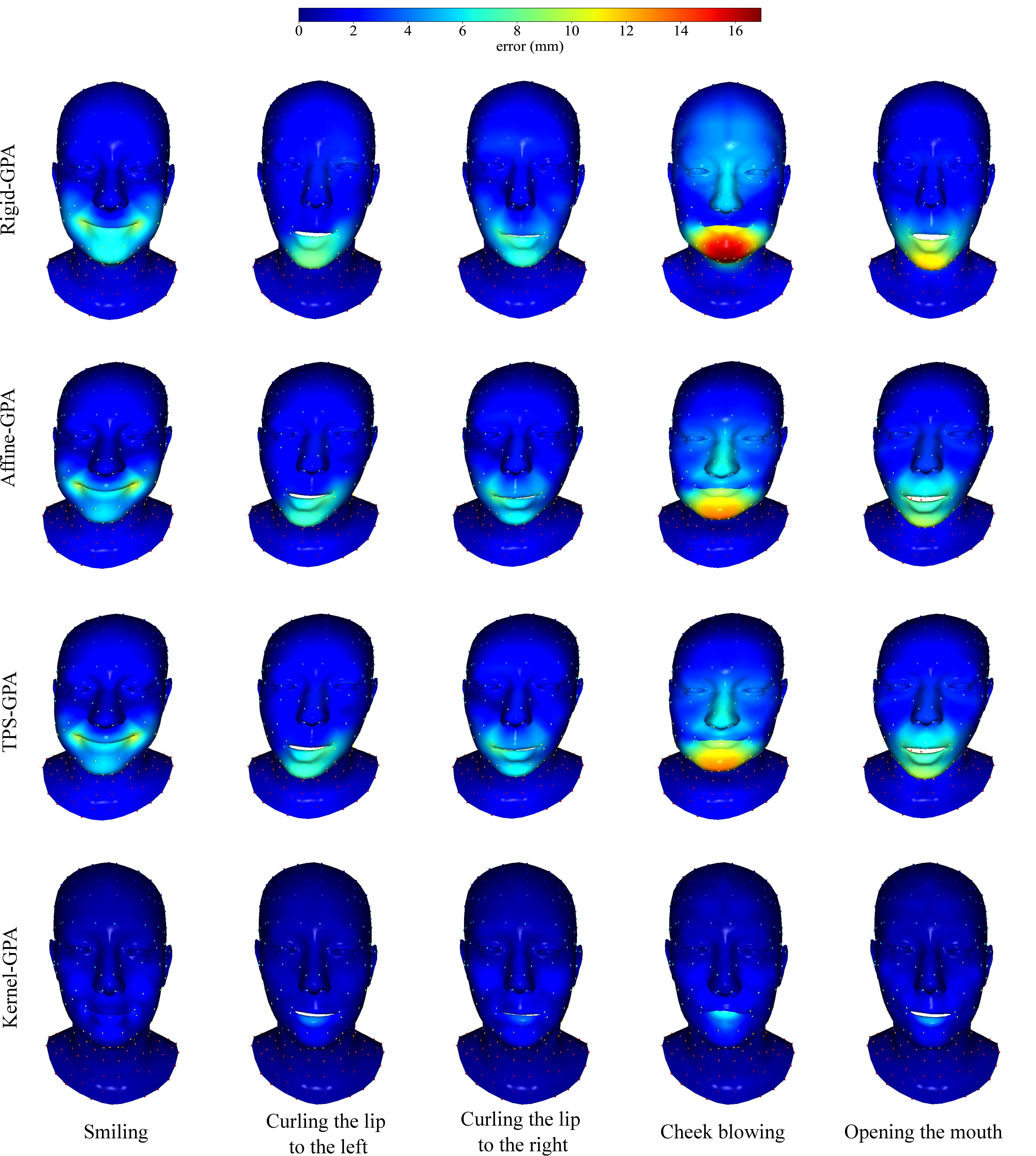}
	\caption{The deformable transformations $\vecf{y}_t(\cdot)$ of different GPA methods on the facial expression dataset. We visualize the shape of the transformed test points $\vecf{y}_t (\matf{\check{P}}_t)$ and encode the point-wise discrepancies between $\vecf{y}_t (\matf{\check{P}}_t)$ and the mean map $\matf{\check{M}} \matf{\check{\Gamma}}_t$ with color. The markers on the face represent the transformed correspondences, \textit{i.e.,}~$\vecf{y}_t (\matf{P}_{t})$.}
	\label{fig:face_deformation_error}
\end{figure*}

\begin{figure*}[ht]
	\centering 
	\begin{subfigure}[c]{0.43\textwidth}
		\centering
		\caption{\textrm{Disabled correspondences in red}}				
		\includegraphics[width=0.35\textwidth]{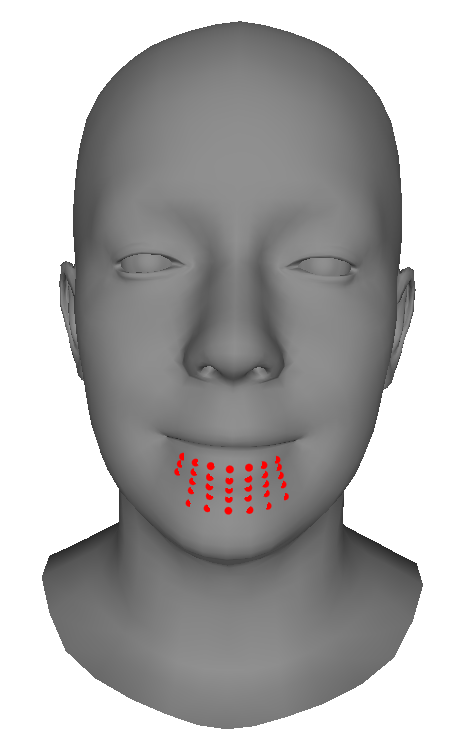}
		\label{fig:face_missing_points}
	\end{subfigure}	
	\begin{subfigure}[c]{0.43\textwidth}
%		\vspace{20pt}
		\centering
		\includegraphics[width=\textwidth]{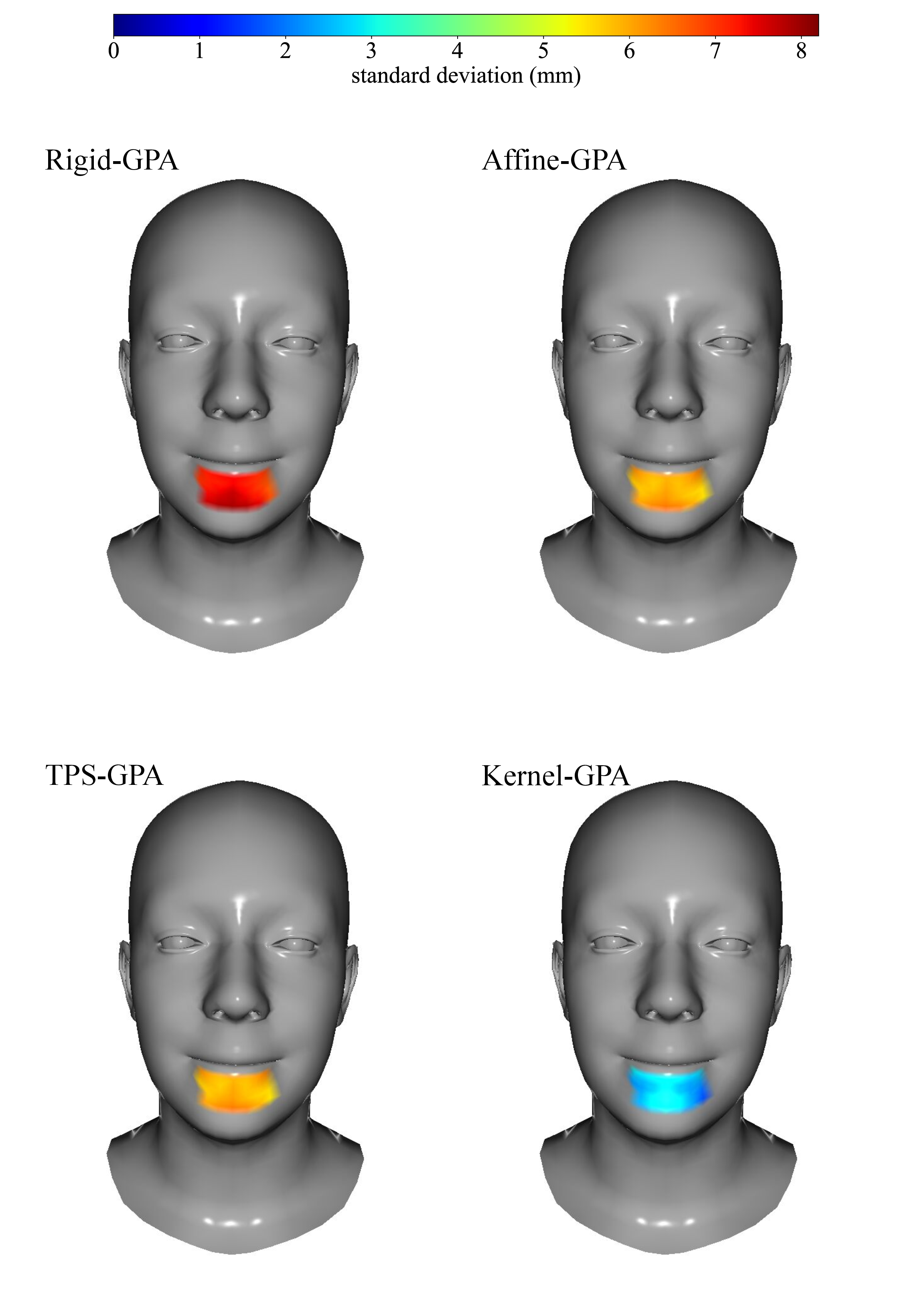}
		\caption{\textrm{Extrapolation error of each GPA method}}				
		\label{fig:face_deformation_prediction}
	\end{subfigure}		
	\caption{Extrapolation on the facial expression dataset. We solve GPA without the correspondences around the chin, and then transform the test points in the chin area to construct a predicted mean surface to to fill the hole. The point-wise consistencies of the transformed test points are color coded.}
\end{figure*}

\vspace{5pt}
\noindent
\textbf{Data generation.}
We create a \textit{facial expression} dataset which contains the meshes of $5$ facial expressions: 1) smiling, 2) curling the lip to the left, 3) curling the lip to the right, 4) cheek blowing and 5) opening the mouth, as shown in Figure~\ref{fig:deca_meshes}. The meshes of the head model are reconstructed with detailed facial geometry from a single input image using off-the-shelf toolbox DECA\footnote{https://deca.is.tue.mpg.de} from \citet*{DECA:Siggraph2021}. In the reconstructed meshes, the indices of the vertices are consistent thus the correspondences are available.
There are $5118$ vertices in total for each mesh, and we select $326$ as correspondences for GPA registration and the rest for test.

\vspace{5pt}
\noindent
\textbf{Evaluation.}
We set tuning parameters $p = 0.25$ and $\mu = 0.2$.
We use the selected $326$ correspondences to solve GPA, and test the registration performance using all the $5118$ points.
We first show the reconstructed mean maps $\matf{\check{M}}$ for each GPA method in Figure~\ref{fig:face_template}, and encode the point-wise consistencies $\boldsymbol{\check{\delta}}$ with textures.
We specifically examine the discrepancy between the transformed test points $\vecf{y}_t (\matf{\check{P}}_t)$ and the mean map $\matf{\check{M}}$, for each $t$ individually.
The statistics are reported in Table~\ref{tab:face_deformation_error}, and the visualization is given in Figure~\ref{fig:face_deformation_error}.
For structural deformations, Kernel-GPA significantly outperforms the other methods, owing to its capability to handle \textit{e.g.,}~the $4$-th \textit{cheek blowing} point-cloud.
Such data are challenging for TPS-GPA, as facial expressions are less smooth, with particularly large deformations on the cheek, around the nose and the mouth.

We examine the extrapolation ability of $\vecf{y}_t (\cdot)$ around the chin area, as shown in Figure~\ref{fig:face_missing_points}, by disabling the correspondences in the selected region.
We solve GPA without the disabled correspondences, and use the computed $\vecf{y}_t (\cdot)$ to extrapolate the deformation.
Within the region, the predicted mean map $\matf{\check{M}}$ and the point-wise consistencies of the transformed test points $\boldsymbol{\check{\delta}}$ are visualized in Figure~\ref{fig:face_deformation_prediction}.
It can be seen that the Kernel-GPA gives significantly better prediction compared with the other three GPA methods, confirming the superior modeling power of the KBT.

Overall, for structural deformations, we find the proposed Kernel-GPA method outperforms the TPS-GPA , the Affine-GPA and the Rigid-GPA methods.

\subsection{CT Point-cloud}

\begin{figure*}[t]
	\centering
	\begin{overpic}[ width=0.62\textwidth]{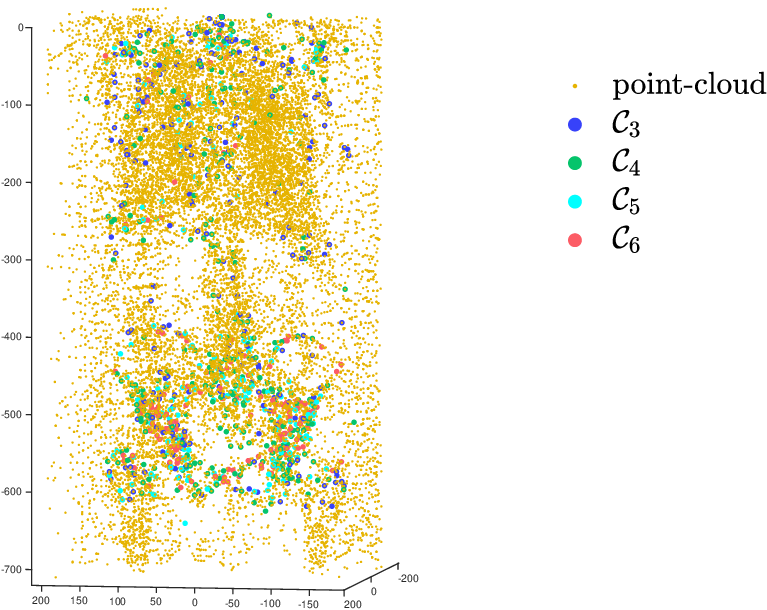}
\put(70,20){\parbox{0.4\textwidth}{\scalebox{1.0}{
	\begin{tabular}{ c c }
	\multicolumn{2}{c}{no. of correspondences} \\
	\toprule
	$\mathcal{C}_3$ &   480  \\
	$\mathcal{C}_4$ &   482  \\
	$\mathcal{C}_5$ &   211  \\
	$\mathcal{C}_6$ &   147  \\
	\midrule
	total &  1320 \\
	\bottomrule
	\end{tabular}}
}}
	\end{overpic}
\begin{tabular}{c}
	\\[20pt]
\end{tabular}
	\caption{The TOPACS point-cloud dataset. The point-clouds are processed from real CT scans using SURF3D features. This dataset contains $6$ point-clouds, with $20000$ points for each point-cloud. There are in total $1320$ correspondences classified into four categories according to their occurrences.}
	\label{fig. sample point-cloud of the TOPACS dataset}	
\end{figure*}

\begin{figure*}[t]
	\centering
	\includegraphics[width=0.72\textwidth]{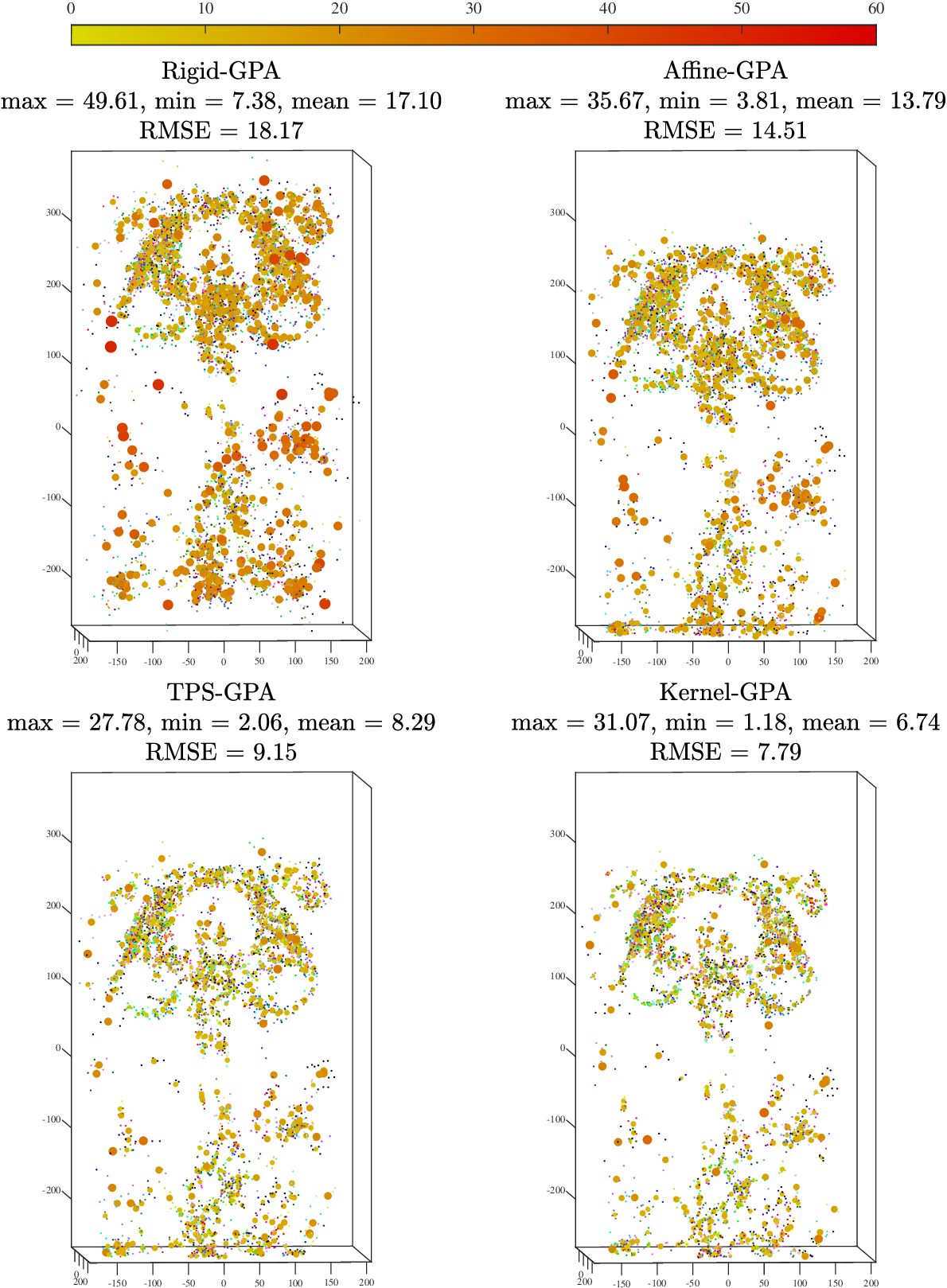}
	\caption{The registration of different GPA methods on the TOPACS dataset, with exactly the same tuning parameters used in Table~\ref{table. different GPA methods on the TOPACS dataset}. In this example, we use $\mathcal{C}_{3}$ to solve GPA, and then show the final registration result by visualizing the point-wise consistencies of $\mathcal{C}_{4},\mathcal{C}_{5},\mathcal{C}_{6}$, as both size-coded and color-coded with the filled circles. The smaller the marker size, the better. The transformed correspondences are also plotted as colored dots, where the corresponding points are plotted with the same color.}
\label{fig. The registration results of different GPA methods}
\end{figure*}

\vspace{5pt}
\noindent
\textbf{Data generation.}
We provide a dataset, termed TOPACS, for computerized tomography (CT) registration.
%CT data lack texture information compared to conventional images, rendering the registration a challenging problem.
The CT data we use, shown in Figure~\ref{fig. sample point-cloud of the TOPACS dataset}, contain $6$ scans of lungs, which are processed by the SURF3D features \citet*{Raju1993surf3dA3} resulting in $6$ point-clouds (with $20000$ points for each point-cloud).
Initial correspondences are found by matching feature descriptors and then refined by an ICP algorithm.
The global correspondences are found by a graph matching algorithm, and the ambiguous ones are removed based on distances.
We categorize the correspondences into four sets
$\mathcal{C}_3,\, \mathcal{C}_4,\, \mathcal{C}_5,\, \mathcal{C}_6$
by their visibilities.
For example, $\mathcal{C}_3$ collects the correspondences visible in exactly three point-clouds, and others are defined analogously.

\begin{table*}[th]
	\caption{The statistics of different GPA methods on the TOPACS dataset.}
	\label{table. different GPA methods on the TOPACS dataset}
	\renewcommand{\arraystretch}{1.0}
	\centering	
\begin{tabular}{ p{2cm} p{2cm} p{2cm} | p{2cm} p{2cm} p{2cm} p{2cm} } 
%\hline
%\hline
 	registration  & test   &   & Rigid-GPA & Affine-GPA  & TPS-GPA  &  Kernel-{\color{black}GPA}  \\ 
\hline
   &    &  min (mm)  & 7.38  &  3.81  &  2.06  &  \textbf{1.18} \\ 
 $\mathcal{C}_{3}$  &  $\mathcal{C}_{4}$$\mathcal{C}_{5}$$\mathcal{C}_{6}$  &  max (mm)  & 49.61  &  35.67  &  \textbf{27.78}  &  31.07 \\ 
   &    &  mean (mm)  & 17.10  &  13.79  &  8.29  &  \textbf{6.74} \\ 
%   &    &  RMSE (mm)  & 18.17  &  14.51  &  9.15  &  \textbf{7.79} \\ 
\hline
   &    &  min (mm)  & 3.26  &  2.86  &  1.19  &  \textbf{0.55} \\ 
 $\mathcal{C}_{4}$  &  $\mathcal{C}_{3}$$\mathcal{C}_{5}$$\mathcal{C}_{6}$  &  max (mm)  & 59.41  &  42.36  &  36.79  &  \textbf{34.68} \\ 
   &    &  mean (mm)  & 17.30  &  13.29  &  8.15  &  \textbf{6.62} \\ 
%   &    &  RMSE (mm)  & 18.54  &  14.30  &  9.35  &  \textbf{8.20} \\ 
\hline
   &    &  min (mm)  & 2.76  &  2.57  &  1.56  &  \textbf{0.59} \\ 
 $\mathcal{C}_{5}$  &  $\mathcal{C}_{3}$$\mathcal{C}_{4}$$\mathcal{C}_{6}$  &  max (mm)  & 61.20  &  45.78  &  37.32  &  \textbf{35.69} \\ 
   &    &  mean (mm)  & 17.13  &  13.22  &  8.57  &  \textbf{7.64} \\ 
%   &    &  RMSE (mm)  & 19.13  &  14.65  &  10.17  &  \textbf{9.59} \\ 
\hline
   &    &  min (mm)  & 2.21  &  1.63  &  1.05  &  \textbf{0.88} \\ 
 $\mathcal{C}_{6}$  &  $\mathcal{C}_{3}$$\mathcal{C}_{4}$$\mathcal{C}_{5}$  &  max (mm)  & 63.71  &  56.54  &  47.97  &  \textbf{46.82} \\ 
   &    &  mean (mm)  & 17.66  &  14.12  &  10.04  &  \textbf{9.94} \\ 
%   &    &  RMSE (mm)  & 20.48  &  16.53  &  \textbf{12.54}  & 12.66 \\ 
%\hline
%\hline
\end{tabular}
\end{table*}

\vspace{5pt}
\noindent
\textbf{Evaluation.}
We set tuning parameters $p = 0.20$ and $\mu = 0.05$.
We use one category of the correspondences $\mathcal{C}_{k}$ to to solve GPA, which gives an estimate of deformable transformations $\vecf{y}_t(\cdot)$.
Then we use the other categories $\mathcal{C}_{j}\ (j\neq k)$ as the test points.
We compute the point-wise consistencies of the transformed test points $\boldsymbol{\check{\delta}}$ (thanks to the known correspondences), and report the statistics in Table~\ref{table. different GPA methods on the TOPACS dataset}.
For this dataset, we see a remarkable reduction of the registration error from the Rigid-GPA to the Affine-GPA.
This is probably because of the fact that each point-cloud is for a different subject, and the subject's morphology varies a lot in width, length and thickness.
Another explanation is that a large portion of deformations are caused by the stretching of shoulders in the preparation process of the CT scanning. Such shearing is well-modeled by the affine transformation.
We further see that both the TPS-GPA and the Kernel-GPA methods can further improve the performance of the affine-{\color{black}GPA}, owing to their capabilities to model nonlinear deformations caused by breathing.

We provide a visualization in Figure~\ref{fig. The registration results of different GPA methods}, by using the correspondences $\mathcal{C}_{3}$ to solve GPA and $\mathcal{C}_{4},\mathcal{C}_{5},\mathcal{C}_{6}$ for validation,
as this is the worst case in~Table \ref{table. different GPA methods on the TOPACS dataset}.
We see that both the TPS-GPA and the Kernel-GPA methods give similar performances, while there are marginal differences in the statistics.
This can be understood as the underlying deformations are smooth, similar to the liver data studied in Section~\ref{exp. sec. liver data}.
This further backs the claim that both TPS-GPA and Kernel-GPA are suitable for surgical applications, while Kernel-GPA is preferred in case of more complex deformations, \textit{e.g.,}~the facial expression data studied in Section~\ref{exp. sec. facial expression}.

In contrast to the simulated liver in Section~\ref{exp. sec. liver data} with perfect correspondences, the correspondences from CT scans (\textit{i.e.,}~slices of gray images) are never perfect due to the lack of textures and are thus subject to mismatches (\textit{i.e.,}~outliers).
However, we show that the correspondence based method works well, even for such challenging CT data.
In practice, we expect better performance of GPA methods, if the correspondences are extracted from RGB images.

%\subsection{Tuning-parameter}
%\label{subsection: cross-validation}
%
%Thus we stick to a set of working parameters, instead of aiming for the best statistics.
%
%
%\subsubsection{Metric}
%
%We use cross-validation to find the tuning-parameters $p$ and $\mu$ for the Kernel-GPA.
%We define the \textit{cross-validation error (CVE)} as:
%\begin{equation*}
%\mathrm{CVE} = 
%\sqrt{\frac{1}{\kappa} \sum_{t=1}^{n} \,
%	\left\Vert
%	\mathcal{Y}_t (\matf{P}_{t}) - \matf{M}^{\star} \matf{\Gamma}_t
%	\right\Vert_{F}^2}
%,
%\end{equation*}
%We use $N$-fold cross-validation, where $N = 5$.
%
%\subsubsection{Strategy}
%
%
%
%Since $p$ and $\mu$ constitute a $2$-dimensional search space, the search complexity is quadratic. Therefore we propose to use an alternation method to find the best tuning parameters.
%We emphasize that one should never aim for the ``optimal'' parameters that can give the minimal value in terms of CVE, as such statistics, while being expensive to compute, does not necessarily capture the overall performance of the algorithm.
%Thus we suggest using CVE as a guidance to find a set of ``working parameters'' suitable for the application at hand.
%
%
%In specific, we first set tentative $\mu = 0.01,\, 0.01,\, 0.1,\, 1$ and search for the best $p \in [0, 1]$.
%After obtaining the best $p$, we then search for the best $\mu$.

\section{Discussion and Conclusion}
\label{section. conclusion}

We have proposed KernelGPA, a novel GPA method using the KBT as the deformable transformation.
We have given detailed mathematical derivations to show the point that KernelGPA can be solved globally in closed-form up to some global scale ambiguities.
We have proposed to estimate the global scale ambiguities by an optimization formulation that favors rigidity, which has also allowed us to give insights on the degenerate cases.
While we have implemented KernelGPA using the Gaussian kernel, the proposed KernelGPA can be implemented using any positive definite kernel, \textit{e.g.,}~the Laplacian kernel.
We have validated the performance of KernelGPA with both simulated and real datasets.
{\color{black}Our Matlab code and data are publicly available for future comparison.}

{\color{black}

\vspace{5pt}
\noindent
\textbf{Computational complexity.}
Computationally, the complexity of KernelGPA is characterized by the number of correspondences used, and largely independent of the number of poses.
In specific, the most expensive part of KernelGPA comprises: 1) the construction of matrix $\matf{Q}_t$ in equation (\ref{eq. expression of Qt}) which requires the inversion of matrix $\matf{S}_t$, and 2) the Eigen decomposition of matrix $\boldsymbol{\mathcal{Q}}$ to solve formulation (\ref{eq: the minimization problem in Map only}).
The dimension of both $\matf{S}_t$ and $\boldsymbol{\mathcal{Q}}$ are decided by the number of correspondences used.
In practice, for example in medical applications, the number of correspondences are typically limited to a few hundred, which can be handled effectively.
For certain cases, if a large number of correspondences are available, we suggest selecting a reasonable number of robust correspondences that cover the deformable part of the scene.

\vspace{5pt}
\noindent
\textbf{Connection to the Tomasi-Kanade factorization.}
For the affine case, the affine transformation and the canonical map can be jointly factorized by the singular value decomposition (SVD), see Section 3.2 in~\citet*{bai2022ijcv} and the $\ast$AFF\_d method. This SVD approach is in the same spirit of the Tomasi-Kanade factorization~\citet*{tomasi1992shape} in computer vision based on the orthographic camera model. The SVD approach has been extended to handle structural deformations, see~\citet*{bregler2000recovering}.
In this work, we have proposed an alternative factorization method based on the Eigen decomposition.
As for the cost function, the residual of the SVD approach is defined in the coordinate frame of the sensor, while the residual of our Eigen approach is defined in the coordinate frame of the canonical map.
Critically, we show in Section~\ref{sec. Transformation Constraint} that the geometry of the canonical map $\matf{M}$ can be defined up to $d$ global scale ambiguities.
This point is not realized in the classical SVD approach, which thus does not use the constraints in Section~\ref{sec. Transformation Constraint} to further reduce the ambiguities.
As a result, the SVD approach gives a solution up to a global affine transformation, while our Eigen approach gives a solution up to $d$ global scale ambiguities.
Recall that the affine transformation has $d^2$ parameters (not considering the translation), hence more than the $d$ of our method.

\vspace{5pt}
\noindent
\textbf{Future work.}
The future work includes handling pose ambiguities (for example, by adding additional sensor information or deformation assumptions), incorporating probabilistic models to handle non-isotropic noise, extending the proposed GPA method to SfM problems, exploring different kernel functions, and exploiting the sparsity of the kernel matrix for even faster computation.

}

\appendix

%\begin{sm}

\section{Brockett Cost Function on the Stiefel Manifold}
\label{section. Brockett Cost Function on the Stiefel Manifold}

\begin{definition}[(Brockett cost function on the Stiefel manifold)]
The matrix Stiefel manifold is the set of matrices satisfying:
\begin{equation*}
\mathrm{St}(d, m)
=
\left\{  \matf{X} \in \mathbb{R}^{m \times d}
\ \vert\ 
\matf{X}^{\trans} \matf{X} = \matf{I} \right\}
.	
\end{equation*}
The following function defined on the Stiefel manifold is termed the Brockett cost function~\citet*{brockett1989least, absil2009optimization}:
\begin{equation}
\label{eq: Stiefel manifold optimzation, Brockett cost fucntion}
f_{ \matf{X} \in \mathrm{St}(d, m)} (\matf{X})
=
\mathrm{tr}\left(  \matf{X}^{\trans} 
\matf{\Pi}
\matf{X}  \matf{\Lambda} \right)
,
\end{equation}
where $\matf{\Pi} \in \mathbb{R}^{m \times m}$ is symmetric,
and
$
\boldsymbol{\Lambda}
=
\mathbf{diag}(\lambda_1, \lambda_{2}, \dots, \lambda_{d})
$
with
$
\lambda_1 \ge \lambda_{2} \ge \dots \ge \lambda_{d} \ge 0
$.
\end{definition}

\begin{lemma}[(\citet*{brockett1989least, birtea2019first, absil2009optimization})]
The critical points of the Brockett cost function $f_{ \matf{X} \in \mathrm{St}(d, m)} (\matf{X})$ on the Stiefel manifold are the eigenvectors of $\matf{\Pi}$.
\end{lemma}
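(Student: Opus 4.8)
The plan is to treat the critical-point condition as a constrained optimization problem and apply the method of Lagrange multipliers, exploiting that the orthonormality constraint $\matf{X}^{\trans}\matf{X} = \matf{I}$ is a symmetric matrix equation. First I would introduce a symmetric multiplier matrix $\matf{S} \in \mathbb{R}^{d \times d}$ and form the Lagrangian
\begin{equation*}
\mathcal{L}(\matf{X}, \matf{S}) = \mathrm{tr}\left( \matf{X}^{\trans} \matf{\Pi} \matf{X} \boldsymbol{\Lambda} \right) - \mathrm{tr}\left( \matf{S} \left( \matf{X}^{\trans} \matf{X} - \matf{I} \right) \right).
\end{equation*}
Using that both $\matf{\Pi}$ and $\boldsymbol{\Lambda}$ are symmetric, the matrix differential of the objective yields the Euclidean gradient $2 \matf{\Pi} \matf{X} \boldsymbol{\Lambda}$, while the constraint term contributes $2 \matf{X} \matf{S}$. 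Setting the gradient to zero gives the first-order stationarity condition
\begin{equation}
\matf{\Pi} \matf{X} \boldsymbol{\Lambda} = \matf{X} \matf{S}.
\label{eq. brockett first-order}
\end{equation}

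Next I would eliminate the multiplier. Left-multiplying \eqref{eq. brockett first-order} by $\matf{X}^{\trans}$ and using $\matf{X}^{\trans}\matf{X} = \matf{I}$ gives $\matf{S} = \matf{P}\boldsymbol{\Lambda}$ with $\matf{P} \defeq \matf{X}^{\trans}\matf{\Pi}\matf{X}$ symmetric. Since $\matf{S}$ is itself symmetric, I would impose $\matf{P}\boldsymbol{\Lambda} = \boldsymbol{\Lambda}\matf{P}$, i.e.\ $\matf{P}$ commutes with the diagonal matrix $\boldsymbol{\Lambda}$. When the diagonal entries $\lambda_1 > \cdots > \lambda_d$ are distinct, this commutation forces $\matf{P}$ (and hence $\matf{S}$) to be diagonal, say $\matf{S} = \mathbf{diag}(s_1, \dots, s_d)$. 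Reading \eqref{eq. brockett first-order} columnwise then gives $\lambda_j \matf{\Pi} \vecf{x}_j = s_j \vecf{x}_j$, so for every $j$ with $\lambda_j > 0$ the column $\vecf{x}_j$ is an eigenvector of $\matf{\Pi}$ with eigenvalue $s_j / \lambda_j = \vecf{x}_j^{\trans}\matf{\Pi}\vecf{x}_j$. This establishes the claim.

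The main obstacle I anticipate is the degenerate case where $\boldsymbol{\Lambda}$ has repeated diagonal entries or a zero entry. When $\lambda_i = \lambda_j$, commutation with $\boldsymbol{\Lambda}$ no longer forces $\matf{P}$ to be diagonal within the corresponding block, so the individual columns need not be eigenvectors; instead only the subspace spanned by the degenerate columns is $\matf{\Pi}$-invariant, and one recovers individual eigenvectors only after a further within-block diagonalization. A zero $\lambda_j$ leaves the corresponding column entirely unconstrained by \eqref{eq. brockett first-order}. I would handle these by restricting attention to the generic strictly-decreasing positive case and noting that the eigenvector characterization holds up to the freedom of choosing an orthonormal basis within each eigenspace. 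As a slightly cleaner alternative, I would mention computing the Riemannian gradient directly by projecting $2\matf{\Pi}\matf{X}\boldsymbol{\Lambda}$ onto the tangent space of $\mathrm{St}(d, m)$; setting it to zero reproduces \eqref{eq. brockett first-order} without explicitly introducing $\matf{S}$.
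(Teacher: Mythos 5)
The paper states this lemma by citation only (to \citet*{brockett1989least, birtea2019first, absil2009optimization}) and provides no proof of its own; your Lagrange-multiplier derivation of the stationarity condition $\matf{\Pi}\matf{X}\boldsymbol{\Lambda} = \matf{X}\matf{S}$, the elimination $\matf{S} = \matf{P}\boldsymbol{\Lambda}$ with $\matf{P} = \matf{X}^{\trans}\matf{\Pi}\matf{X}$, and the commutation argument $\matf{P}\boldsymbol{\Lambda} = \boldsymbol{\Lambda}\matf{P}$ forcing $\matf{P}$ diagonal is exactly the standard argument in those cited references, and it is correct. You also rightly flag the only genuine subtlety: the columnwise eigenvector conclusion requires the diagonal entries of $\boldsymbol{\Lambda}$ to be distinct and positive, whereas the paper's definition permits $\lambda_1 \ge \cdots \ge \lambda_d \ge 0$, so in the degenerate case the critical points are characterized by $\matf{\Pi}$-invariant subspaces (eigenvectors only up to an orthonormal within-block rotation), which is precisely the qualification under which the lemma as stated should be read.
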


If we choose $\matf{X'} = [
\boldsymbol{\xi}_1,\,
\boldsymbol{\xi}_2,\,
\dots,
\boldsymbol{\xi}_d
]$ with $\matf{\Pi} \boldsymbol{\xi}_k = \alpha_k \boldsymbol{\xi}_k$ (\textit{i.e.,}~$\boldsymbol{\xi}_k$ is an eigenvector of $\matf{\Pi}$ corresponding to eigenvalue $\alpha_k$), then we have cost 
$f_{ \matf{X'} \in \mathrm{St}(d, m)} (\matf{X'}) = \lambda_1 \boldsymbol{\xi}_1 +  \lambda_2 \boldsymbol{\xi}_2 + \dots +  \lambda_d \boldsymbol{\xi}_d$.

\begin{lemma}[(Hardy-Littlewood-Polya \citet*{hardy1952inequalities})]
For two sequences of real numbers $x_1 \ge x_2 \ge \dots \ge x_n$ and $y_1, y_2, \dots, y_n$ in any order, we consider:
$$
\eta = \sum_i^n x_i y_{\pi(i)}
,
$$
where $\pi$ denotes a permutation of indices in $[1 : n]$.
The maximum of $\eta$ is attained when $y_{\pi(1)} \ge y_{\pi(2)} \ge \dots \ge y_{\pi(n)}$.
The minimum of $\eta$ is attained when $y_{\pi(1)} \le y_{\pi(2)} \le \dots \le y_{\pi(n)}$.

\end{lemma}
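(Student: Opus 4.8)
The plan is to prove both claims by an adjacent-transposition (exchange) argument, the classical route to the rearrangement inequality. Since there are only finitely many permutations $\pi$ of $[1:n]$, the maximum of $\eta(\pi) = \sum_{i=1}^{n} x_i y_{\pi(i)}$ is attained; I will show that some maximizer co-sorts the two sequences, and symmetrically that some minimizer anti-sorts them.

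First I would establish the single two-term estimate that drives everything. Fix a permutation $\pi$ and suppose there are indices $i < j$ (hence $x_i \ge x_j$ by the ordering of the $x$'s) whose assigned values are out of order, meaning $y_{\pi(i)} < y_{\pi(j)}$. Let $\pi'$ be obtained from $\pi$ by swapping the images of $i$ and $j$. Only the two terms at positions $i$ and $j$ change, and a direct computation gives
$$
\eta(\pi') - \eta(\pi) = (x_i - x_j)\,\bigl(y_{\pi(j)} - y_{\pi(i)}\bigr) \ge 0,
$$
since both factors are non-negative. Thus correcting any out-of-order pair never decreases $\eta$.

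Next I would turn this into a global statement by a bubble-sort-style iteration. As long as an out-of-order pair exists I apply the above swap; each swap strictly decreases the number of inversions (pairs $i<j$ with $y_{\pi(i)} < y_{\pi(j)}$), so the process terminates after at most $\binom{n}{2}$ steps at a permutation with $y_{\pi(1)} \ge \cdots \ge y_{\pi(n)}$. Because no step decreased $\eta$, this co-sorted arrangement attains a value at least as large as the starting one, proving the maximum is attained when the $y$'s are co-sorted with the $x$'s. The minimum claim then follows at once by applying the maximum result to the sequence $-y_1, \dots, -y_n$, which flips the sign of $\eta$ and turns co-sorting into anti-sorting.

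I do not expect a genuine obstacle; this is an elementary finite argument. The only points needing mild care are the bookkeeping that every transposition strictly reduces the inversion count (guaranteeing termination), and the remark that the optimal \emph{value} is unaffected by ties among equal $y$'s, even though the optimal permutation need not be unique.
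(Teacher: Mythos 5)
Your proof is correct, but note that the paper itself offers no proof of this lemma: it is stated as a cited classical result (Hardy--Littlewood--P\'olya, \emph{Inequalities}) and used as a black box in the eigenvalue argument for the Brockett cost function, so there is no in-paper argument to compare against. Your adjacent-exchange (bubble-sort) argument is the standard textbook proof of the rearrangement inequality and is complete: the two-term identity $\eta(\pi') - \eta(\pi) = (x_i - x_j)\bigl(y_{\pi(j)} - y_{\pi(i)}\bigr) \ge 0$ is right, and the termination claim holds because swapping an inverted (non-adjacent) pair removes the inversion at $(i,j)$ while each intermediate position $k$ with $i<k<j$ contributes a change of $0$, $-1$, or $-2$ to the inversion count, so the total count strictly drops by at least one per swap. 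The reduction of the minimum to the maximum via $y \mapsto -y$ is also clean, and your closing remark about ties is exactly the right caveat: the lemma as stated asserts that \emph{every} co-sorted arrangement attains the optimum, which follows from your argument plus the observation that all co-sorted arrangements yield the same value of $\eta$. One stylistic alternative worth knowing: instead of iterating swaps, one can argue that any maximizer must already be inversion-free (a single profitable swap would contradict maximality when $x_i > x_j$, with a separate word for ties), which shortens the bookkeeping; but what you buy with the explicit bubble-sort is a constructive bound of $\binom{n}{2}$ exchanges, and both routes are equally rigorous.
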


\begin{lemma}[(\citet*{brockett1989least})]

For a symmetric $\matf{\Pi} \in \mathbb{R}^{m \times m}$,
we denote its eigenvalue decomposition as:
$$
\matf{\Pi} = \matf{U} \boldsymbol{\Sigma} \matf{U}^{\trans} 
= \sum_{k=1}^{m} \sigma_k \vecf{u}_k \vecf{u}_k^{\trans}
,$$
where
$\matf{U}
=
\left[\vecf{u}_1, \vecf{u}_2, \dots, \vecf{u}_m\right]$ is an orthonormal matrix,
and
$\boldsymbol{\Sigma} = \mathbf{diag}\left(\sigma_1, \sigma_2, \dots, \sigma_m\right)$ with $\sigma_1 \ge  \sigma_2 \ge \dots \ge \sigma_m$.
Let
$
\boldsymbol{\Lambda}
=
\mathbf{diag}(\lambda_1, \lambda_{2}, \dots, \lambda_{d})
$
with
$
\lambda_1 \ge \lambda_{2} \ge \dots \ge \lambda_{d} \ge 0
$.
We have:
\begin{enumerate}
\item
$\max_{ \matf{X} \in \mathrm{St}(d, m)} \mathrm{tr}\left(  \matf{X}^{\trans} 
\matf{\Pi}
\matf{X}  \matf{\Lambda} \right)$ is attained at:
$$\matf{X} =  [\vecf{u}_1, \vecf{u}_{2}, \dots, \vecf{u}_{d}] ,$$
which comprises the $d$ top eigenvectors of $\matf{\Pi}$,
with cost $\lambda_1 \sigma_1 + \lambda_2 \sigma_{2} + \dots + \lambda_d \sigma_{d}$.
\vspace{8pt}
\item $\min_{ \matf{X} \in \mathrm{St}(d, m)} \mathrm{tr}\left(  \matf{X}^{\trans} 
\matf{\Pi}
\matf{X}  \matf{\Lambda} \right)$ is attained at:
$$\matf{X} =  [\vecf{u}_m, \vecf{u}_{m-1}, \dots, \vecf{u}_{m-d+1}] ,$$
which comprises the $d$ bottom eigenvectors of $\matf{\Pi}$,
with cost $\lambda_1 \sigma_m  +  \lambda_2 \sigma_{m-1} + \dots +  \lambda_d \sigma_{m-d+1}$.
\end{enumerate}
\end{lemma}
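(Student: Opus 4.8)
The plan is to combine the two preceding lemmas: the characterization that the critical points of the Brockett cost are eigenvector configurations of $\matf{\Pi}$, and the Hardy--Littlewood--Polya rearrangement inequality. Since the Stiefel manifold $\mathrm{St}(d, m)$ is compact and the cost $f(\matf{X}) = \mathrm{tr}(\matf{X}^{\trans} \matf{\Pi} \matf{X} \boldsymbol{\Lambda})$ is continuous, both the maximum and the minimum are attained, and by the critical-point lemma every optimizer is a critical point, hence its columns are eigenvectors of $\matf{\Pi}$. It therefore suffices to evaluate $f$ over all orthonormal eigenvector configurations and select the best one.

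First I would reduce to the diagonal case. Writing $\matf{X} = [\vecf{u}_{k_1}, \dots, \vecf{u}_{k_d}]$ for $d$ distinct indices $k_1, \dots, k_d \in [1:m]$, and using $\matf{\Pi} \vecf{u}_{k_j} = \sigma_{k_j} \vecf{u}_{k_j}$ together with the orthonormality of the $\vecf{u}_k$, one checks that $\matf{X}^{\trans} \matf{\Pi} \matf{X} = \mathbf{diag}(\sigma_{k_1}, \dots, \sigma_{k_d})$, so that
\[
f(\matf{X}) = \sum_{j=1}^{d} \lambda_j \sigma_{k_j}.
\]
This converts the continuous optimization into the discrete problem of choosing an ordered $d$-tuple of distinct eigenvalues and pairing it against the fixed non-increasing weights $\lambda_1 \ge \dots \ge \lambda_d \ge 0$.

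Next I would optimize this discrete objective, for definiteness the maximum (the minimum is symmetric). Let $\tau$ be the permutation that sorts the chosen values $\sigma_{k_1}, \dots, \sigma_{k_d}$ in non-increasing order. The rearrangement inequality gives $\sum_j \lambda_j \sigma_{k_j} \le \sum_j \lambda_j \sigma_{k_{\tau(j)}}$, and since the $j$-th largest value in any $d$-subset of $\{\sigma_1, \dots, \sigma_m\}$ is at most the $j$-th largest overall, we have $\sigma_{k_{\tau(j)}} \le \sigma_j$; because $\lambda_j \ge 0$ this yields $\sum_j \lambda_j \sigma_{k_j} \le \sum_j \lambda_j \sigma_j$. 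The bound is attained at $\matf{X} = [\vecf{u}_1, \dots, \vecf{u}_d]$, which is the claimed maximizer with value $\lambda_1 \sigma_1 + \dots + \lambda_d \sigma_d$. Running the same argument with the reversed ordering establishes $\matf{X} = [\vecf{u}_m, \dots, \vecf{u}_{m-d+1}]$ as the minimizer with value $\lambda_1 \sigma_m + \dots + \lambda_d \sigma_{m-d+1}$.

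The main obstacle I anticipate is the rigorous treatment of degenerate (repeated) eigenvalues, where the critical set of the Brockett cost contains entire orbits generated by rotations within each eigenspace, rather than isolated eigenvector tuples. Here I would argue that $f$ is constant along such orbits, so that restricting attention to a single fixed orthonormal eigenbasis $\matf{U}$ incurs no loss of generality and the discrete selection argument above still captures the global optimum. The remaining steps are routine once this reduction and the combined selection-plus-rearrangement bound $\sigma_{k_{\tau(j)}} \le \sigma_j$ are in place.
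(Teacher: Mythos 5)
Your proposal is correct and follows essentially the same route the paper intends: it assembles the two preceding appendix lemmas (critical points of the Brockett cost are eigenvector configurations, then the Hardy--Littlewood--Polya rearrangement inequality applied to the resulting discrete sum $\sum_j \lambda_j \sigma_{k_j}$, using $\lambda_j \ge 0$ to compare any $d$-subset against the top or bottom $d$ eigenvalues). Your additions---compactness of $\mathrm{St}(d,m)$ to guarantee attainment, and the observation that the cost is constant on eigenspace orbits so repeated eigenvalues of $\matf{\Pi}$ cause no loss of generality---merely make explicit what the paper leaves implicit.
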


\section{Proof of Lemma \ref{theorem: general result: PCA with constraint XT u = 0}}
\label{appendix. proof of general result: PCA with constraint XT u = 0}

We first notice that when $\matf{X}^{\trans}  \vecf{u} = \vecf{0}$, the cost is equivalent to:
$$
\mathrm{tr}\left( \matf{X}^{\trans} 
\matf{\Pi}
\matf{X} \boldsymbol{\Lambda} \right)
=
\mathrm{tr}\left( \matf{X}^{\trans} 
\left( \matf{\Pi} -  c \vecf{u} \vecf{u}^{\trans} \right)
\matf{X} \matf{\Lambda} \right)
,
$$
where $c$ is an arbitrary scalar.
Importantly, by using different $c$, we can shift $\vecf{u}$ to the top or bottom eigenvector of $\matf{\Pi} -  c \vecf{u} \vecf{u}^{\trans}$.
We denote the eigenvalue decomposition of
$\matf{\Pi}$
as:
$$
\matf{\Pi} 
= \sum_{k=1}^{m} \sigma_k \vecf{u}_k \vecf{u}_k^{\trans}
,$$
with $\sigma_1 \ge  \sigma_2 \ge \dots \ge \sigma_m$.

\vspace{5pt}
\noindent
\textbf{Case 1.}
We consider $c > \sigma_1 - \sigma_m$, and the following relaxation of problem (\ref{eq: general results: argmax, PCA in S, with S1=0}) without constraint $\matf{X}^{\trans}  \vecf{u} = \vecf{0}$:
\begin{equation}
\label{eq: property: proof - final result in X}
\begin{aligned}
\max_{\matf{X}}\quad
&
\mathrm{tr}\left( \matf{X}^{\trans} 
\left( \matf{\Pi} -  c \vecf{u} \vecf{u}^{\trans} \right)
\matf{X} \matf{\Lambda} \right)
\\
&
\mathrm{s.t.} \quad
\matf{X}^{\trans}  \matf{X} = \matf{I}
.
\end{aligned}
\end{equation}
If $\matf{X}_*$ is optimal to problem (\ref{eq: property: proof - final result in X}) and satisfies $\matf{X}_{*}^{\trans}  \vecf{u} = \vecf{0}$, then $\matf{X}_*$ is optimal to problem (\ref{eq: general results: argmax, PCA in S, with S1=0}).

Problem (\ref{eq: property: proof - final result in X}) admits a Brockett cost on the Stiefel manifold {\color{black}(see Appendix~\ref{section. Brockett Cost Function on the Stiefel Manifold})}, where we denote its solution by $\matf{X}_{*}$. The columns of $\matf{X}_{*}$ comprise the $d$ top eigenvectors of $\matf{\Pi} - c \vecf{u} \vecf{u}^{\trans}$.
If $c > \sigma_1 - \sigma_m$, $\vecf{u}$ becomes the bottom eigenvector of $\matf{\Pi} - c \vecf{u} \vecf{u}^{\trans}$.
Thus $ {\matf{X}_*}^{\trans} \vecf{u} = \vecf{0}$ because eigenvectors with respect to different eigenvalues are orthogonal.
To conclude, if $c > \sigma_1 - \sigma_m$, problem (\ref{eq: property: proof - final result in X}) is a tight relaxation to problem (\ref{eq: general results: argmax, PCA in S, with S1=0}).

Lastly, when $c > \sigma_1 - \sigma_m$, the $d$ top eigenvectors of $\matf{\Pi} - c \vecf{u} \vecf{u}^{\trans}$ are the $d$ top eigenvectors of $\matf{\Pi}$ excluding $\vecf{u}$.

\vspace{5pt}
\noindent
\textbf{Case 2.}
We consider $c > \sigma_1 - \sigma_m$, and the following relaxation of problem (\ref{eq: general results: argmin, PCA in S, with S1=0}) without constraint $\matf{X}^{\trans}  \vecf{u} = \vecf{0}$:
\begin{equation}
\begin{aligned}
\min_{\matf{X}}\quad
&
\mathrm{tr}\left( \matf{X}^{\trans} 
\left( \matf{\Pi} +  c \vecf{u} \vecf{u}^{\trans} \right)
\matf{X} \matf{\Lambda} \right)
\\
&
\mathrm{s.t.} \quad
\matf{X}^{\trans}  \matf{X} = \matf{I}
,
\end{aligned}
\end{equation}
which is a tight relaxation to problem (\ref{eq: general results: argmin, PCA in S, with S1=0}) if $c > \sigma_1 - \sigma_m$.

\section{Derivation of the Reduced Problem}
\label{appendix. Derivation of the reduced problem}

\subsection{Linear Dependence of $\matf{A}_t$, $\vecf{a}_t$, $\boldsymbol{\Omega}_t$ on $\matf{M}$}
\label{appendix. Derivation of linear dependence of A, t, Omega on M}

By defining $\matf{\tilde{P}}_t = [ \matf{P}_t^{\trans} ,\, \vecf{1} ]^{\trans}$,
we notice that the affine part can be rewritten as:
$$
\matf{A}_t \matf{P}_t + \vecf{a}_t \vecf{1}^{\trans}
=
\left[
\matf{A}_t,\,\vecf{a}_t
\right]
\begin{bmatrix}
\matf{P}_t \\[5pt]
 \vecf{1}^{\trans}
\end{bmatrix}
=
\left[
\matf{A}_t,\,\vecf{a}_t
\right]
\matf{\tilde{P}}_t
,
$$
Then we write cost $\varphi_t (\matf{A}_t,\,\vecf{a}_t,\,\boldsymbol{\Omega}_t, \, \matf{M})$ in matrix form:
\begin{equation*}
\begin{aligned}
&
\varphi_t (\matf{A}_t, \vecf{a}_t, \boldsymbol{\Omega}_t, \matf{M})   = \left\Vert 
\matf{A}_t \matf{P}_t + \vecf{a}_t \vecf{1}^{\trans}
+ \boldsymbol{\Omega}_t^{\trans}
\matf{K}_t
- \matf{M} \matf{\Gamma}_t \right\Vert_{\mathcal{F}}^2
\\[5pt] & +
\mu_t
\mathrm{tr}
\left(
\boldsymbol{\Omega}_t^{\trans}
\matf{K}_t  \boldsymbol{\Omega}_t
\right)
\\[5pt] & =
\left\Vert
\left[
[\matf{A}_t,\,\vecf{a}_t],\,\boldsymbol{\Omega}_t^{\trans}
\right]
\begin{bmatrix}
\matf{\tilde{P}}_t & \vecf{O} \\[5pt]
\matf{K}_t & \sqrt{\mu_t \matf{K}_t}
\end{bmatrix}
-
\begin{bmatrix}
\matf{M} \matf{\Gamma}_t & \vecf{O}
\end{bmatrix}
\right\Vert_{\mathcal{F}}^2
\\[5pt]
& =
\left\Vert
\left[
[\matf{A}_t,\,\vecf{a}_t],\,\boldsymbol{\Omega}_t^{\trans}
\right]
\matf{C}_t 
-
\begin{bmatrix}
\matf{M} \matf{\Gamma}_t & \vecf{O}
\end{bmatrix}
\right\Vert_{\mathcal{F}}^2
,
\end{aligned}
\end{equation*}
where we have defined the matrix constant $\matf{C}_t$ as:
\begin{equation*}
\matf{C}_t
=
\begin{bmatrix}
\matf{\tilde{P}}_t & \vecf{O} \\[5pt]
\matf{K}_t & \sqrt{\mu_t \matf{K}_t}
\end{bmatrix}
.
\end{equation*}
Given $\matf{M}$, the problem regarding $\matf{A}_t,\,\vecf{a}_t,\,\boldsymbol{\Omega}_t$:
$$
\min_{\matf{A}_t,\,\vecf{a}_t,\,\boldsymbol{\Omega}_t}
\quad
\varphi_t (\matf{A}_t,\,\vecf{a}_t,\,\boldsymbol{\Omega}_t, \, \matf{M})
,
\quad
\mathrm{given}\ \matf{M}
,
$$
is a LLS optimization problem.
The optimal solution is in closed-form:
\begin{multline}
\label{eq. [At, at, Omegat] in M as Ct}
\left[
[\matf{A}_t,\,\vecf{a}_t],\,\boldsymbol{\Omega}_t^{\trans}
\right]
=
\begin{bmatrix}
\matf{M} \matf{\Gamma}_t & \vecf{O}
\end{bmatrix}
\matf{C}_t^{\dagger}
\\[5pt] +
\matf{F}_t
\left(
\matf{I} -  \matf{C}_t \matf{C}_t^{\dagger}
\right)
,
\end{multline}
where $\matf{C}_t^{\dagger}$ is the Moore–Penrose pseudo-inverse of $\matf{C}_t$, and $\matf{F}_t \in \mathbb{R}^{d \times (m_t+d+1)}$ is a free matrix (\textit{i.e.,}~an arbitrary matrix with the compatible dimension).
We denote:
\begin{equation*}
\boldsymbol{\Delta}_t
\defeq
\matf{C}_t  \matf{C}_t^{\trans} 
=
\begin{bmatrix}
\matf{\tilde{P}}_t \matf{\tilde{P}}_t^{\trans} & \matf{\tilde{P}}_t \matf{K}_t \\[5pt]
\matf{K}_t \matf{\tilde{P}}_t^{\trans} & \matf{K}_t \matf{K}_t + \mu_t \matf{K}_t
\end{bmatrix}
,
\end{equation*}
and expand the Moore–Penrose pseudo-inverse $\matf{C}_t^{\dagger}$ as:
$$
\matf{C}_t^{\dagger}
=
\matf{C}_t^{\trans} \left(  \matf{C}_t  \matf{C}_t^{\trans} \right)^{\dagger} 
=
\matf{C}_t^{\trans} \boldsymbol{\Delta}_t^{\dagger}
.
$$
Lastly, we express equation (\ref{eq. [At, at, Omegat] in M as Ct}) using $\boldsymbol{\Delta}_t$ as:
\begin{align}
\left[
[\matf{A}_t,\,\vecf{a}_t],\,\boldsymbol{\Omega}_t^{\trans}
\right]
& =
\begin{bmatrix}
\matf{M} \matf{\Gamma}_t & \vecf{O}
\end{bmatrix}
\matf{C}_t^{\trans} \boldsymbol{\Delta}_t^{\dagger}
+
\matf{F}_t
\left(
\matf{I} - \matf{C}_t \matf{C}_t^{\trans} \boldsymbol{\Delta}_t^{\dagger}
\right)
\notag
\\[5pt] & =
\matf{M} \matf{\Gamma}_t
\begin{bmatrix}
  \matf{\tilde{P}}_t^{\trans} &  \matf{K}_t
\end{bmatrix}
\boldsymbol{\Delta}_t^{\dagger}
+
\matf{F}_t
\left(
\matf{I} -  \boldsymbol{\Delta}_t \boldsymbol{\Delta}_t^{\dagger}
\right)
,
\notag
\end{align}
which is the form in equation (\ref{eq. At, at, Omegat in M}).

\subsection{Cost $\varphi_t (\matf{M})$}
\label{appendix. Derivation of Qt and phit}

We first notice that $\matf{Y}
\left(
\matf{I} -  \matf{C}_t \matf{C}_t^{\dagger}
\right)
\matf{C}_t
=
\matf{O}
$
because the Moore–Penrose pseudo-inverse satisfies $\matf{C}_t = \matf{C}_t \matf{C}_t^{\dagger} \matf{C}_t $.
Substituting equation (\ref{eq. [At, at, Omegat] in M as Ct}) into the cost $\varphi_t (\matf{A}_t, \vecf{a}_t, \boldsymbol{\Omega}_t, \matf{M})$, we obtain the reduced cost $\varphi_t (\matf{M}) $:
\begin{align*}
\varphi_t (\matf{M})  
 & =
\left\Vert
\begin{bmatrix}
\matf{M} \matf{\Gamma}_t & \vecf{O}
\end{bmatrix}
\left(
\matf{C}_t^{\dagger}
\matf{C}_t
- \matf{I}
\right)
\right\Vert_{\mathcal{F}}^2
\\[5pt] & =
\mathrm{tr} 
\left(
\begin{bmatrix}
\matf{M} \matf{\Gamma}_t & \vecf{O}
\end{bmatrix}
\left( \matf{I} - \matf{C}_t^{\dagger} \matf{C}_t \right)
\begin{bmatrix}
\matf{M} \matf{\Gamma}_t & \vecf{O}
\end{bmatrix}^{\trans}
\right)
,
\end{align*}
where we have used the fact that $\matf{I} - \matf{C}_t^{\dagger} \matf{C}_t$ is symmetric and idempotent, since it is an orthogonal projection matrix (\textit{i.e.,}~the orthogonal projector to the null space of $\matf{C}_t$).
In particular, we can write $\matf{C}_t^{\dagger} \matf{C}_t$ as:
$$
\matf{C}_t^{\dagger} \matf{C}_t
=
\matf{C}_t^{\trans} \left(  \matf{C}_t  \matf{C}_t^{\trans} \right)^{\dagger}  \matf{C}_t
=
\matf{C}_t^{\trans} \boldsymbol{\Delta}_t^{\dagger} \matf{C}_t
.
$$
The matrix multiplication shows:
$$
\begin{bmatrix}
\matf{M} \matf{\Gamma}_t & \vecf{O}
\end{bmatrix}
\matf{C}_t^{\trans}
=
\matf{M} \matf{\Gamma}_t
\begin{bmatrix}
  \matf{\tilde{P}}_t^{\trans} &  \matf{K}_t
\end{bmatrix}
.
$$
Lastly, we write $\varphi_t (\matf{M})$ as:
\begin{align*}
\varphi_t (\matf{M})  
=
\mathrm{tr} 
\left(
\matf{M} \matf{\Gamma}_t 
\left( \matf{I} - 
\begin{bmatrix}
  \matf{\tilde{P}}_t^{\trans} &  \matf{K}_t
\end{bmatrix}
\boldsymbol{\Delta}_t^{\dagger}
\begin{bmatrix}
  \matf{\tilde{P}}_t \\[5pt]  \matf{K}_t
\end{bmatrix}
\right)
\matf{\Gamma}_t^{\trans} \matf{M}^{\trans}
\right)
.
\end{align*}

\section{Positive Definiteness of $\boldsymbol{\Delta}_t$ and $\matf{\tilde{P}}_t \matf{\tilde{P}}_t^{\trans}$}
\label{section. positive definiteness between Delta and PP}

\subsection{Preliminary}
\begin{lemma}[(\citet*{Gallier2010SchurComplement})]
For any symmetric matrix $\matf{S}$ of the form:
\begin{equation*}
\matf{S} = 
\begin{bmatrix}
\matf{A} & \matf{B} 
\\[5pt]
\matf{B}^{\trans} & \matf{C}
\end{bmatrix},
\end{equation*}
if $\matf{C}$ is positive definite, then $\matf{S}$ is positive definite if and only if
$
\matf{A} - \matf{B}  \matf{C}^{-1}  \matf{B}^{\trans}
$
is positive definite.
\end{lemma}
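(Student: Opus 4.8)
The plan is to reduce the equivalence to a statement about a block-diagonal matrix, so that positive definiteness can be read off one block at a time. Since $\matf{C}$ is positive definite it is invertible, which is exactly what is needed to perform a block $\matf{L}\matf{D}\matf{L}^{\trans}$ factorization of $\matf{S}$. Concretely, I would exhibit the identity
\begin{equation*}
\matf{S}
=
\begin{bmatrix}
\matf{I} & \matf{B}\matf{C}^{-1} \\[3pt]
\matf{O} & \matf{I}
\end{bmatrix}
\begin{bmatrix}
\matf{A} - \matf{B}\matf{C}^{-1}\matf{B}^{\trans} & \matf{O} \\[3pt]
\matf{O} & \matf{C}
\end{bmatrix}
\begin{bmatrix}
\matf{I} & \matf{O} \\[3pt]
\matf{C}^{-1}\matf{B}^{\trans} & \matf{I}
\end{bmatrix}
,
\end{equation*}
which I verify by direct block multiplication, using $\matf{C}^{\trans} = \matf{C}$ so that $(\matf{B}\matf{C}^{-1})^{\trans} = \matf{C}^{-1}\matf{B}^{\trans}$. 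Writing this as $\matf{S} = \matf{L}\matf{D}\matf{L}^{\trans}$, the outer factor $\matf{L}$ is unit upper block-triangular and hence invertible, so $\matf{S}$ and $\matf{D}$ are congruent.

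Next I would appeal to the fact that congruence by an invertible matrix preserves the inertia of a symmetric matrix (Sylvester's law of inertia); in particular $\matf{S} \succ \matf{O}$ if and only if $\matf{D} \succ \matf{O}$. Since $\matf{D}$ is block diagonal, it is positive definite precisely when each diagonal block is, that is, when both $\matf{A} - \matf{B}\matf{C}^{-1}\matf{B}^{\trans} \succ \matf{O}$ and $\matf{C} \succ \matf{O}$ hold. The hypothesis already grants $\matf{C} \succ \matf{O}$, so the condition collapses to positive definiteness of the Schur complement $\matf{A} - \matf{B}\matf{C}^{-1}\matf{B}^{\trans}$, which is the claimed equivalence.

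As a self-contained alternative, I would partition a test vector as $\matf{z} = [\,\matf{x}^{\trans},\, \matf{y}^{\trans}\,]^{\trans}$ and expand $\matf{z}^{\trans}\matf{S}\matf{z} = \matf{x}^{\trans}\matf{A}\matf{x} + 2\matf{x}^{\trans}\matf{B}\matf{y} + \matf{y}^{\trans}\matf{C}\matf{y}$. Minimizing over $\matf{y}$ with $\matf{C} \succ \matf{O}$ gives the minimizer $\matf{y}^{\star} = -\matf{C}^{-1}\matf{B}^{\trans}\matf{x}$ and the minimum value $\matf{x}^{\trans}(\matf{A} - \matf{B}\matf{C}^{-1}\matf{B}^{\trans})\matf{x}$, linking the two quadratic forms directly; each implication then follows by choosing $\matf{z}$ appropriately (for the forward direction, plugging in $\matf{y} = \matf{y}^{\star}$ for arbitrary nonzero $\matf{x}$).

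The computations are routine, so the only point requiring genuine care --- and what I would flag as the main obstacle --- is the logical structure rather than the algebra. In the congruence route one must first confirm that $\matf{L}$ is actually invertible, so that the identity is a \emph{congruence} and the inertia argument applies, rather than a mere factorization. In the direct route one must treat the degenerate case $\matf{x} = \matf{0}$ separately, where positivity of $\matf{z}^{\trans}\matf{S}\matf{z}$ comes from $\matf{C} \succ \matf{O}$ and not from the Schur complement at all.
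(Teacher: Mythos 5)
Your proposal is correct, and there is nothing in the paper to compare it against: the paper states this lemma without proof, citing \citet*{Gallier2010SchurComplement}, and your block $\matf{L}\matf{D}\matf{L}^{\trans}$ congruence factorization (verified by direct multiplication, with the quadratic-form minimization over $\matf{y}$ as an alternative) is precisely the standard derivation given in that reference. As a minor simplification, you do not need the full strength of Sylvester's law of inertia: since $\matf{L}^{\trans}$ is invertible, the identity $\matf{z}^{\trans}\matf{S}\matf{z} = (\matf{L}^{\trans}\matf{z})^{\trans}\matf{D}\,(\matf{L}^{\trans}\matf{z})$ already shows $\matf{S} \succ \matf{O}$ if and only if $\matf{D} \succ \matf{O}$, and your handling of the $\matf{x} = \matf{0}$ case in the direct route is exactly the care the argument requires.
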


\begin{lemma}[(Woodbury matrix identity)]
For any invertible $\matf{A}$ and $\matf{D}$, the following identity holds:
\begin{multline*}
( \matf{A} + \matf{B} \matf{D} \matf{C} )^{-1}
=
\matf{A}^{-1} 
\\[0pt]
-
\matf{A}^{-1} \matf{B}
\left(
\matf{D}^{-1} + \matf{C} \matf{A}^{-1} \matf{B}
\right)^{-1} 
\matf{C} \matf{A}^{-1}
.
\end{multline*}
\end{lemma}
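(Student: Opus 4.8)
The plan is to prove the identity by direct verification. I denote the claimed right-hand side by
\begin{equation*}
\matf{X} \defeq \matf{A}^{-1} - \matf{A}^{-1} \matf{B} \matf{W}^{-1} \matf{C} \matf{A}^{-1}, \qquad \matf{W} \defeq \matf{D}^{-1} + \matf{C} \matf{A}^{-1} \matf{B},
\end{equation*}
and show that $(\matf{A} + \matf{B} \matf{D} \matf{C})\, \matf{X} = \matf{I}$. For $\matf{X}$ to be well-posed I tacitly assume, as is standard for the Woodbury identity, that the capacitance matrix $\matf{W}$ is invertible (in addition to the stated invertibility of $\matf{A}$ and $\matf{D}$); the verification below then simultaneously establishes that $\matf{A} + \matf{B} \matf{D} \matf{C}$ is invertible and that its inverse is exactly $\matf{X}$.

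Next I expand the product. Left-multiplying $\matf{X}$ by $\matf{A} + \matf{B} \matf{D} \matf{C}$ and using $(\matf{A} + \matf{B} \matf{D} \matf{C})\, \matf{A}^{-1} = \matf{I} + \matf{B} \matf{D} \matf{C} \matf{A}^{-1}$, the only nonroutine step is the regrouping of the coefficient of $\matf{W}^{-1} \matf{C} \matf{A}^{-1}$, namely
\begin{equation*}
(\matf{A} + \matf{B} \matf{D} \matf{C})\, \matf{A}^{-1} \matf{B} = \matf{B} + \matf{B} \matf{D} \matf{C} \matf{A}^{-1} \matf{B} = \matf{B} \matf{D} \left( \matf{D}^{-1} + \matf{C} \matf{A}^{-1} \matf{B} \right) = \matf{B} \matf{D} \matf{W}.
\end{equation*}
With this regrouping, the factor $\matf{W}$ cancels against $\matf{W}^{-1}$, so the second term collapses to $\matf{B} \matf{D} \matf{W} \matf{W}^{-1} \matf{C} \matf{A}^{-1} = \matf{B} \matf{D} \matf{C} \matf{A}^{-1}$. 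Subtracting it from the first term gives
\begin{equation*}
(\matf{A} + \matf{B} \matf{D} \matf{C})\, \matf{X} = \matf{I} + \matf{B} \matf{D} \matf{C} \matf{A}^{-1} - \matf{B} \matf{D} \matf{C} \matf{A}^{-1} = \matf{I},
\end{equation*}
as required. Since $\matf{A} + \matf{B} \matf{D} \matf{C}$ is square, possessing a right inverse makes it invertible with $\matf{X}$ as its two-sided inverse; an entirely symmetric computation of $\matf{X}\, (\matf{A} + \matf{B} \matf{D} \matf{C})$, factoring $\matf{B}$ on the right instead, serves as a cross-check.

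I expect no genuine obstacle here, as this is a classical matrix identity; the only points requiring care are the bookkeeping of the invertibility hypotheses and spotting the single algebraic regrouping $\matf{B} + \matf{B} \matf{D} \matf{C} \matf{A}^{-1} \matf{B} = \matf{B} \matf{D} \matf{W}$ that drives the cancellation. As an alternative route that ties in with the preceding Schur-complement lemma, one may form the block matrix $\matf{N} = \left[\begin{smallmatrix} \matf{A} & -\matf{B} \\ \matf{C} & \matf{D}^{-1} \end{smallmatrix}\right]$ and compute the top-left block of $\matf{N}^{-1}$ in two ways: eliminating the $(2,2)$ block gives the inverse of the Schur complement $\matf{A} + \matf{B} \matf{D} \matf{C}$, while eliminating the $(1,1)$ block gives $\matf{A}^{-1} - \matf{A}^{-1} \matf{B} (\matf{D}^{-1} + \matf{C} \matf{A}^{-1} \matf{B})^{-1} \matf{C} \matf{A}^{-1}$; equating the two expressions is precisely the claimed identity. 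I would present the direct verification as the main proof, since it is self-contained and requires none of the positive-definiteness machinery.
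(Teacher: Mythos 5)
Your proof is correct, and there is in fact nothing on the paper's side to compare it against: the paper states the Woodbury identity as a preliminary lemma with no proof at all, invoking it as a classical fact (it is used exactly once, in Appendix~D, to rewrite $\matf{\tilde{P}}_t \bigl( \matf{I} - (\matf{I} + \mu_t \matf{K}_t^{-1})^{-1} \bigr) \matf{\tilde{P}}_t^{\trans}$ as $\matf{\tilde{P}}_t ( \matf{I} + \frac{1}{\mu_t}\matf{K}_t )^{-1} \matf{\tilde{P}}_t^{\trans}$). Your direct verification is the standard and cleanest self-contained argument; the regrouping $\matf{B} + \matf{B}\matf{D}\matf{C}\matf{A}^{-1}\matf{B} = \matf{B}\matf{D}\,(\matf{D}^{-1} + \matf{C}\matf{A}^{-1}\matf{B})$ is indeed the one step that drives the cancellation, and your appeal to the fact that a right inverse of a square matrix is a two-sided inverse closes the argument properly. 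You are also right to flag that the lemma as printed is formally incomplete: invertibility of $\matf{A}$ and $\matf{D}$ alone does not guarantee invertibility of the capacitance matrix $\matf{W} = \matf{D}^{-1} + \matf{C}\matf{A}^{-1}\matf{B}$, nor of $\matf{A} + \matf{B}\matf{D}\matf{C}$ itself (scalar counterexample: $\matf{A} = \matf{D} = \matf{B} = 1$, $\matf{C} = -1$), so your tacit extra hypothesis is genuinely needed; in the paper's sole application it is harmless, since there $\matf{K}_t$ is positive definite and $\mu_t > 0$, making all the relevant matrices positive definite. Your alternative Schur-complement derivation via the block matrix $\matf{N}$ is also valid and dovetails nicely with the Gallier lemma the paper states immediately beforehand, but your choice to present the direct verification as the main proof is the right one.
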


\subsection{Proof of Lemma~\ref{lemma. Kt is PD iff PP is PD}}

We have denoted $\boldsymbol{\Delta}_t$ as:
\begin{equation*}
\boldsymbol{\Delta}_t
=
\begin{bmatrix}
\matf{\tilde{P}}_t \matf{\tilde{P}}_t^{\trans} & \matf{\tilde{P}}_t \matf{K}_t \\[5pt]
\matf{K}_t \matf{\tilde{P}}_t^{\trans} & \matf{K}_t \matf{K}_t + \mu_t \matf{K}_t
\end{bmatrix}
.
\end{equation*}
If $\matf{K}_t $ is chosen positive definite and $\mu_t > 0$, it suffices to examine the positive definiteness of:
\begin{align*}
\matf{E}_t
& =  \matf{\tilde{P}}_t \matf{\tilde{P}}_t^{\trans} -  \matf{\tilde{P}}_t \matf{K}_t 
(\matf{K}_t \matf{K}_t + \mu_t \matf{K}_t)^{-1} \matf{K}_t \matf{\tilde{P}}_t^{\trans}
\\[5pt]
& =
\matf{\tilde{P}}_t
\left(
\matf{I} - (\matf{I} + \mu_t \matf{K}_t^{-1})^{-1}
\right)
\matf{\tilde{P}}_t^{\trans}
\\[5pt]
& =
\matf{\tilde{P}}_t
( \matf{I} + \frac{1}{\mu_t} \matf{K}_t )^{-1}
\matf{\tilde{P}}_t^{\trans}
,
\end{align*}
where the last equality holds because of the Woodbury matrix identity.

We notice that $\matf{I} + \frac{1}{\mu_t} \matf{K}_t$ is positive definite.
As a result, $\matf{E}_t$ is positive definite if and only if $\matf{\tilde{P}}_t \matf{\tilde{P}}_t^{\trans}$ is positive definite.

\section{Expansion of $\boldsymbol{\Delta}_t^{\dagger}$}
\label{appendix. expansion of Pseudo inverse of Delta_t}

We expand $\boldsymbol{\Delta}_t^{\dagger}$ with respect to $\matf{\tilde{P}}_t \matf{\tilde{P}}_t^{\trans}$, by the Schur complement for positive semi-definite matrices.

To this end, we denote the Moore–Penrose pseudo-inverse of $\matf{\tilde{P}}_t \matf{\tilde{P}}_t^{\trans}$ as $(\matf{\tilde{P}}_t \matf{\tilde{P}}_t^{\trans})^{\dagger}$, and define:
\begin{equation}
\label{eq. def. appendix. definition of S_t}
\matf{S}_t \defeq \matf{K}_t \matf{K}_t + \mu_t \matf{K}_t - \matf{K}_t \matf{\tilde{P}}_t^{\trans} (\matf{\tilde{P}}_t \matf{\tilde{P}}_t^{\trans})^{\dagger} \matf{\tilde{P}}_t \matf{K}_t
.
\end{equation}
In equation (\ref{eq. def. appendix. definition of S_t}), using the notation $ \boldsymbol{\mathcal{P}}_{t} = \matf{\tilde{P}}_t^{\trans} ( \matf{\tilde{P}}_t \matf{\tilde{P}}_t^{\trans} )^{\dagger}  \matf{\tilde{P}}_t $,
we can rewrite $\matf{S}_t$ in form of equation (\ref{eq. expression of St}), as:
\begin{equation*}
\matf{S}_t =
\matf{K}_t \left( \matf{I} - \boldsymbol{\mathcal{P}}_t \right) \matf{K}_t + \mu_t \matf{K}_t
.
\end{equation*}
As an orthogonal projector, matrix $\matf{I} - \boldsymbol{\mathcal{P}}_t$ is always positive semi-definite.
Therefore if $\matf{K}_t$ is positive definite and $\mu_t > 0$, then $\matf{S}_t$ is positive definite.

By the Schur complement~\citet*{Gallier2010SchurComplement}, the expansion of $\boldsymbol{\Delta}_t^{\dagger}$ with respect to $\matf{\tilde{P}}_t \matf{\tilde{P}}_t^{\trans}$ is:
\begin{equation*}
\boldsymbol{\Delta}_t^{\dagger}
=
\begin{bmatrix}
\matf{\tilde{P}}_t \matf{\tilde{P}}_t^{\trans} & \matf{\tilde{P}}_t \matf{K}_t \\[5pt]
\matf{K}_t \matf{\tilde{P}}_t^{\trans} & \matf{K}_t \matf{K}_t + \mu_t \matf{K}_t
\end{bmatrix}^{\dagger}
=
\begin{bmatrix}
\boldsymbol{\Sigma}_{11} & \boldsymbol{\Sigma}_{12}
\\[5pt]
\boldsymbol{\Sigma}_{21} & \boldsymbol{\Sigma}_{22}
\end{bmatrix}
,
\end{equation*}
where the relevant blocks are defined as:
\begin{equation*}
\begin{cases}
\boldsymbol{\Sigma}_{11} \defeq (\matf{\tilde{P}}_t \matf{\tilde{P}}_t^{\trans})^{\dagger} + 
(\matf{\tilde{P}}_t \matf{\tilde{P}}_t^{\trans})^{\dagger} 
\matf{\tilde{P}}_t \matf{K}_t \matf{S}_t^{-1} \matf{K}_t \matf{\tilde{P}}_t^{\trans}
(\matf{\tilde{P}}_t \matf{\tilde{P}}_t^{\trans})^{\dagger}
\\[5pt]
\boldsymbol{\Sigma}_{12} \defeq - (\matf{\tilde{P}}_t \matf{\tilde{P}}_t^{\trans})^{\dagger} \matf{\tilde{P}}_t \matf{K}_t \matf{S}_t^{-1}
\\[5pt]
\boldsymbol{\Sigma}_{21} \defeq - \matf{S}_t^{-1} \matf{K}_t \matf{\tilde{P}}_t^{\trans} (\matf{\tilde{P}}_t \matf{\tilde{P}}_t^{\trans})^{\dagger}
\\[5pt]
\boldsymbol{\Sigma}_{22} \defeq \matf{S}_t^{-1}
.
\end{cases}
\end{equation*}

\section{Derivation of $\matf{Q}_t$}
\label{appendix. derivation of constant Ft, Gt and Qt}

Following Appendix~\ref{appendix. expansion of Pseudo inverse of Delta_t}, by denoting $\matf{\tilde{P}}_t^{\dagger} = \matf{\tilde{P}}_t^{\trans} (\matf{\tilde{P}}_t \matf{\tilde{P}}_t^{\trans})^{\dagger}
$
which is the Moore–Penrose pseudo-inverse of $\matf{\tilde{P}}_t$,
we further compute:
\begin{equation}
\begin{bmatrix}
\matf{\tilde{P}}_t^{\trans} & \matf{K}_t
\end{bmatrix}
\boldsymbol{\Delta}_t^{\dagger}
=
\begin{bmatrix}
\matf{\tilde{P}}_t^{\dagger} - \matf{H}_t \matf{K}_t \matf{\tilde{P}}_t^{\dagger} & \matf{H}_t
\end{bmatrix}
,
\label{eq. the way to obtain Ft, Gt}
\end{equation}
where:
\begin{equation}
\label{eq. define matrix constant Ht}
\matf{H}_t 
 \defeq
\left( \matf{I} - \boldsymbol{\mathcal{P}}_t \right)
\matf{K}_t \matf{S}_t^{-1}
.
\end{equation}
Matrix $\matf{H}_t$ thus defined is symmetric, \textit{i.e.,}~$\matf{H}_t = \matf{H}_t^{\trans}$, as to be shown in Proposition~\ref{proposition. properties of Ht} and Appendix~\ref{appendix. properties of Ht}.
From equation (\ref{eq. the way to obtain Ft, Gt}), we obtain:
\begin{align*}
\matf{Q}_t & = 
\matf{I} - 
\begin{bmatrix}
\matf{\tilde{P}}_t^{\trans} & \matf{K}_t
\end{bmatrix}
\boldsymbol{\Delta}_t^{\dagger}
\begin{bmatrix}
\matf{\tilde{P}}_t 
\\[5pt]
\matf{K}_t
\end{bmatrix}
\\[5pt]
& = 
\matf{I} - 
\left(
	\matf{\tilde{P}}_t^{\dagger} \matf{\tilde{P}}_t  - \matf{H}_t \matf{K}_t \matf{\tilde{P}}_t^{\dagger}  \matf{\tilde{P}}_t + \matf{H}_t \matf{K}_t
\right)
\\[5pt]
& = 
\left( \matf{I} - \boldsymbol{\mathcal{P}}_t \right)
 - \matf{H}_t \matf{K}_t
\left( \matf{I} - \boldsymbol{\mathcal{P}}_t \right)
\\[5pt]
& =
\left( \matf{I} - \boldsymbol{\mathcal{P}}_t \right)
 -
\left( \matf{I} - \boldsymbol{\mathcal{P}}_t \right)
\matf{K}_t \matf{S}_t^{-1} \matf{K}_t
\left( \matf{I} - \boldsymbol{\mathcal{P}}_t \right)
.
\end{align*}

We refer to the book~\citet*{meyer2000matrix} for some properties of the concepts of:
Moore–Penrose pseudo-inverse (Exercise 5.12.16) and orthogonal projectors (Chapter 5.13).

\section{Proof of Proposition~\ref{proposition. properties of Q_t}: Properties of $\matf{Q}_t$}
\label{appendix. proof of Q1 = 0}

\subsection{$\matf{I} \succeq \matf{I} - \boldsymbol{\mathcal{P}}_t   \succeq  \matf{Q}_t \succeq \matf{O}$}

\begin{lemma}
Both $ \boldsymbol{\mathcal{P}}_t$ and $\matf{I} - \boldsymbol{\mathcal{P}}_t$ are symmetric positive semidefinite.
\end{lemma}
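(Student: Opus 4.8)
The plan is to recognize $\boldsymbol{\mathcal{P}}_t = \matf{\tilde{P}}_t^{\trans} ( \matf{\tilde{P}}_t \matf{\tilde{P}}_t^{\trans} )^{\dagger} \matf{\tilde{P}}_t$ as the orthogonal projector onto the range space of $\matf{\tilde{P}}_t^{\trans}$, and to deduce both claims from the two defining algebraic properties of an orthogonal projector, namely symmetry and idempotency. Once these are in hand, positive semidefiniteness is immediate for any symmetric idempotent matrix, because the quadratic form becomes a squared norm. I would handle $\boldsymbol{\mathcal{P}}_t$ first and then obtain the statement for $\matf{I} - \boldsymbol{\mathcal{P}}_t$ by the same argument, since it too is symmetric and idempotent.

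First I would establish symmetry. Writing $\matf{B} \defeq \matf{\tilde{P}}_t \matf{\tilde{P}}_t^{\trans}$, the matrix $\matf{B}$ is symmetric, and the Moore--Penrose pseudo-inverse of a symmetric matrix is symmetric, so $(\matf{B}^{\dagger})^{\trans} = \matf{B}^{\dagger}$. Hence
$$
\boldsymbol{\mathcal{P}}_t^{\trans}
= \matf{\tilde{P}}_t^{\trans} (\matf{B}^{\dagger})^{\trans} \matf{\tilde{P}}_t
= \matf{\tilde{P}}_t^{\trans} \matf{B}^{\dagger} \matf{\tilde{P}}_t
= \boldsymbol{\mathcal{P}}_t
.
$$
Next I would verify idempotency using the pseudo-inverse identity $\matf{B}^{\dagger} \matf{B} \matf{B}^{\dagger} = \matf{B}^{\dagger}$, which holds for any matrix and in particular does not require $\matf{B}$ to be invertible. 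Indeed,
$$
\boldsymbol{\mathcal{P}}_t^2
= \matf{\tilde{P}}_t^{\trans} \matf{B}^{\dagger} \, \matf{B} \, \matf{B}^{\dagger} \matf{\tilde{P}}_t
= \matf{\tilde{P}}_t^{\trans} \matf{B}^{\dagger} \matf{\tilde{P}}_t
= \boldsymbol{\mathcal{P}}_t
,
$$
where the middle factor $\matf{\tilde{P}}_t \matf{\tilde{P}}_t^{\trans}$ produced by the product is exactly $\matf{B}$.

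With symmetry and idempotency secured, for any vector $\vecf{v}$ I would write $\vecf{v}^{\trans} \boldsymbol{\mathcal{P}}_t \vecf{v} = \vecf{v}^{\trans} \boldsymbol{\mathcal{P}}_t^{\trans} \boldsymbol{\mathcal{P}}_t \vecf{v} = \left\Vert \boldsymbol{\mathcal{P}}_t \vecf{v} \right\Vert_2^2 \ge 0$, giving positive semidefiniteness of $\boldsymbol{\mathcal{P}}_t$. For the complement, $\matf{I} - \boldsymbol{\mathcal{P}}_t$ is symmetric as a difference of symmetric matrices, and $(\matf{I} - \boldsymbol{\mathcal{P}}_t)^2 = \matf{I} - 2\boldsymbol{\mathcal{P}}_t + \boldsymbol{\mathcal{P}}_t^2 = \matf{I} - \boldsymbol{\mathcal{P}}_t$, so the same squared-norm argument yields $\vecf{v}^{\trans} (\matf{I} - \boldsymbol{\mathcal{P}}_t) \vecf{v} = \left\Vert (\matf{I} - \boldsymbol{\mathcal{P}}_t) \vecf{v} \right\Vert_2^2 \ge 0$. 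This is a standard result and carries no real difficulty; the only point deserving care is that $\matf{\tilde{P}}_t \matf{\tilde{P}}_t^{\trans}$ may be rank deficient when $\matf{P}_t$ is degenerate, so I would make sure to invoke only the general pseudo-inverse identities ($\matf{B}^{\dagger \trans} = \matf{B}^{\dagger}$ for symmetric $\matf{B}$, and $\matf{B}^{\dagger} \matf{B} \matf{B}^{\dagger} = \matf{B}^{\dagger}$) rather than treating $\matf{B}^{\dagger}$ as an ordinary inverse.
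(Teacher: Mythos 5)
Your proof is correct, and it is careful in exactly the right place: by invoking only the general Penrose identities ($(\matf{B}^{\dagger})^{\trans} = \matf{B}^{\dagger}$ for symmetric $\matf{B}$, and $\matf{B}^{\dagger}\matf{B}\matf{B}^{\dagger} = \matf{B}^{\dagger}$), your argument remains valid when $\matf{\tilde{P}}_t \matf{\tilde{P}}_t^{\trans}$ is rank deficient, which is precisely the degenerate case the paper needs this lemma to cover. For comparison, the paper offers no proof at all: it states the lemma bare and relies on the standard theory of orthogonal projectors, deferring to Meyer's book (Exercise 5.12.16 and Chapter 5.13) for the facts that $\boldsymbol{\mathcal{P}}_t = \matf{\tilde{P}}_t^{\trans}(\matf{\tilde{P}}_t\matf{\tilde{P}}_t^{\trans})^{\dagger}\matf{\tilde{P}}_t$ is the orthogonal projector onto the range of $\matf{\tilde{P}}_t^{\trans}$ and that projectors and their complements are symmetric positive semidefinite. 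Your self-contained verification of symmetry, idempotency, and the squared-norm identity $\vecf{v}^{\trans}\boldsymbol{\mathcal{P}}_t\vecf{v} = \Vert \boldsymbol{\mathcal{P}}_t\vecf{v}\Vert_2^2$ supplies exactly the details that citation encapsulates, so the two routes coincide in substance; yours simply makes the omitted computation explicit.
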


\begin{lemma}
$\matf{S}_t =
\matf{K}_t \left( \matf{I} - \boldsymbol{\mathcal{P}}_t \right) \matf{K}_t + \mu_t \matf{K}_t
$ is symmetric positive definite, which is always invertible.
\end{lemma}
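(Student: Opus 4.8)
The plan is to check the two required properties in order---symmetry, then positive definiteness---with invertibility following immediately from the latter. Symmetry is the easy part. Since $\matf{K}_t$ is a Gram (kernel) matrix it is symmetric, and $\boldsymbol{\mathcal{P}}_t = \matf{\tilde{P}}_t^{\trans}(\matf{\tilde{P}}_t\matf{\tilde{P}}_t^{\trans})^{\dagger}\matf{\tilde{P}}_t$ is symmetric because the pseudo-inverse of the symmetric matrix $\matf{\tilde{P}}_t\matf{\tilde{P}}_t^{\trans}$ is again symmetric. Hence $\matf{I}-\boldsymbol{\mathcal{P}}_t$, the congruence $\matf{K}_t(\matf{I}-\boldsymbol{\mathcal{P}}_t)\matf{K}_t$, and $\mu_t\matf{K}_t$ are each symmetric, and therefore so is their sum $\matf{S}_t$.

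For positive definiteness I would split $\matf{S}_t$ into its two summands and show the first is positive semidefinite and the second positive definite. By the preceding lemma, $\matf{I}-\boldsymbol{\mathcal{P}}_t \succeq \matf{O}$, as it is an orthogonal projector whose eigenvalues lie in $\{0,1\}$. Because $\matf{I}-\boldsymbol{\mathcal{P}}_t$ is symmetric and idempotent, the congruence can be written as a Gram matrix, $\matf{K}_t(\matf{I}-\boldsymbol{\mathcal{P}}_t)\matf{K}_t = \matf{B}^{\trans}\matf{B}$ with $\matf{B} = (\matf{I}-\boldsymbol{\mathcal{P}}_t)\matf{K}_t$, so it is positive semidefinite; equivalently, for any $\vecf{v}$ one has $\vecf{v}^{\trans}\matf{K}_t(\matf{I}-\boldsymbol{\mathcal{P}}_t)\matf{K}_t\vecf{v} = (\matf{K}_t\vecf{v})^{\trans}(\matf{I}-\boldsymbol{\mathcal{P}}_t)(\matf{K}_t\vecf{v}) \ge 0$. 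The remaining term $\mu_t\matf{K}_t$ is positive definite because $\mu_t > 0$ and $\matf{K}_t \succ \matf{O}$ by the standing assumption. The sum of a positive semidefinite matrix and a positive definite matrix is positive definite, so $\matf{S}_t \succ \matf{O}$.

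Invertibility is then immediate, since every positive definite matrix is nonsingular. I do not anticipate a genuine obstacle here: the argument is a routine splitting-plus-congruence estimate, and the only point deserving care is that $\matf{K}_t(\matf{I}-\boldsymbol{\mathcal{P}}_t)\matf{K}_t \succeq \matf{O}$ relies on the symmetry of $\matf{K}_t$, so that the quadratic form reduces cleanly to that of $\matf{I}-\boldsymbol{\mathcal{P}}_t$ evaluated at $\matf{K}_t\vecf{v}$. Everything else rests directly on the preceding lemma and on the assumption that $\matf{K}_t \succ \matf{O}$ with $\mu_t>0$.
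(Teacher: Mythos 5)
Your proof is correct and follows essentially the same route as the paper: split $\matf{S}_t$ into $\matf{K}_t(\matf{I}-\boldsymbol{\mathcal{P}}_t)\matf{K}_t$ (positive semidefinite, since the orthogonal projector $\matf{I}-\boldsymbol{\mathcal{P}}_t$ is symmetric positive semidefinite) plus $\mu_t\matf{K}_t$ (positive definite by assumption), and conclude positive definiteness and hence invertibility of the sum. The paper's proof is just a terser version of yours; your added details (the Gram factorization via idempotence and the explicit quadratic-form check) are sound.
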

\begin{proof}
$ \mu_t \matf{K}_t$ is symmetric positive definite, since we assume symmetric positive definite $\matf{K}_t$ and $\mu_t > 0$.
The orthogonal projector $\matf{I} - \boldsymbol{\mathcal{P}}_t$ is symmetric positive semidefinite. Thus $\matf{K}_t \left( \matf{I} - \boldsymbol{\mathcal{P}}_t \right) \matf{K}_t$ is symmetric positive semidefinite.
As as result, $\matf{S}_t$ is symmetric positive definite.
\end{proof}

\begin{lemma}
$\matf{I} \succeq \matf{I} - \boldsymbol{\mathcal{P}}_t   \succeq  \matf{Q}_t$.
\end{lemma}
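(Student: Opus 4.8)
The plan is to verify the two semidefinite inequalities separately, leaning on the closed-form expression for $\matf{Q}_t$ in equation (\ref{eq. expression of Qt}) and the positive definiteness of $\matf{S}_t$ established in the lemma just above.

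First I would dispatch the left inequality $\matf{I} \succeq \matf{I} - \boldsymbol{\mathcal{P}}_t$. Subtracting the two sides, this is equivalent to $\boldsymbol{\mathcal{P}}_t \succeq \matf{O}$, which is immediate from the preceding lemma asserting that the orthogonal projector $\boldsymbol{\mathcal{P}}_t$ is symmetric positive semidefinite.

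For the right inequality $\matf{I} - \boldsymbol{\mathcal{P}}_t \succeq \matf{Q}_t$, I would form the difference directly from equation (\ref{eq. expression of Qt}):
\begin{equation*}
(\matf{I} - \boldsymbol{\mathcal{P}}_t) - \matf{Q}_t
=
(\matf{I} - \boldsymbol{\mathcal{P}}_t) \matf{K}_t \matf{S}_t^{-1} \matf{K}_t (\matf{I} - \boldsymbol{\mathcal{P}}_t).
\end{equation*}
The key observation is that this is a symmetric quadratic sandwich: writing $\matf{C} \defeq \matf{K}_t (\matf{I} - \boldsymbol{\mathcal{P}}_t)$ and exploiting the symmetry of both $\matf{K}_t$ and the projector $\matf{I} - \boldsymbol{\mathcal{P}}_t$, one has $\matf{C}^{\trans} = (\matf{I} - \boldsymbol{\mathcal{P}}_t) \matf{K}_t$, so the difference equals $\matf{C}^{\trans} \matf{S}_t^{-1} \matf{C}$. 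Since $\matf{S}_t$ is symmetric positive definite by the lemma proved above, its inverse $\matf{S}_t^{-1}$ is positive definite, whence $\matf{C}^{\trans} \matf{S}_t^{-1} \matf{C} \succeq \matf{O}$ for any matrix $\matf{C}$. This gives $(\matf{I} - \boldsymbol{\mathcal{P}}_t) - \matf{Q}_t \succeq \matf{O}$, as required.

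I expect no real obstacle here; the only point demanding care is recognizing the quadratic-form structure $\matf{C}^{\trans} \matf{S}_t^{-1} \matf{C}$, which hinges on the symmetry of $\matf{K}_t$ and of $\matf{I} - \boldsymbol{\mathcal{P}}_t$ so that the outer factors are genuine transposes of one another. Chaining the two inequalities then yields $\matf{I} \succeq \matf{I} - \boldsymbol{\mathcal{P}}_t \succeq \matf{Q}_t$.
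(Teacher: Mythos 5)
Your proposal is correct and follows essentially the same route as the paper: both handle the left inequality via positive semidefiniteness of the projector $\boldsymbol{\mathcal{P}}_t$, and both obtain the right inequality by noting that $(\matf{I} - \boldsymbol{\mathcal{P}}_t) - \matf{Q}_t = (\matf{I} - \boldsymbol{\mathcal{P}}_t)\matf{K}_t \matf{S}_t^{-1} \matf{K}_t (\matf{I} - \boldsymbol{\mathcal{P}}_t) \succeq \matf{O}$ from equation (\ref{eq. expression of Qt}) and the positive definiteness of $\matf{S}_t^{-1}$. Your explicit identification of the quadratic-form structure $\matf{C}^{\trans}\matf{S}_t^{-1}\matf{C}$ merely makes precise a step the paper leaves implicit.
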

\begin{proof}
Given the fact that
$\matf{S}_t$ is symmetric positive definite, we know $\matf{S}_t^{-1}$ is symmetric positive definite.
As a result,
$
\left( \matf{I} - \boldsymbol{\mathcal{P}}_t \right)
\matf{K}_t \matf{S}_t^{-1} \matf{K}_t
\left( \matf{I} - \boldsymbol{\mathcal{P}}_t \right)$
is symmetric positive semidefinite.
Therefore:
$$
\left( \matf{I} - \boldsymbol{\mathcal{P}}_t \right) - \matf{Q}_t \succeq \matf{O}
\Leftrightarrow
\matf{I} - \boldsymbol{\mathcal{P}}_t \succeq \matf{Q}_t
.
$$
$\matf{I} \succeq \matf{I} - \boldsymbol{\mathcal{P}}_t $ is true since $\boldsymbol{\mathcal{P}}_t $ is positive semidefinite.
\end{proof}

\begin{lemma}
$\matf{Q}_t$ can be rewritten as:
\begin{multline}
\matf{Q}_t
= (\matf{H}_t \matf{K}_t - \matf{I})
(\matf{I} - \boldsymbol{\mathcal{P}}_{t})
( \matf{K}_t^{\trans} \matf{H}_t^{\trans} - \matf{I})
\\[5pt] +
\mu_t
 \matf{H}_t \matf{K}_t \matf{H}_t^{\trans}
,
\label{eq. the 1st form of Q_t}
\end{multline}
where $\matf{H}_t$ has been defined in equation (\ref{eq. define matrix constant Ht}).
\end{lemma}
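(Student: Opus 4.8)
The plan is to establish the identity by direct expansion of the right-hand side, reducing it to the expression $\matf{Q}_t = (\matf{I} - \boldsymbol{\mathcal{P}}_t) - (\matf{I} - \boldsymbol{\mathcal{P}}_t)\matf{K}_t \matf{S}_t^{-1}\matf{K}_t(\matf{I} - \boldsymbol{\mathcal{P}}_t)$ already derived in equation (\ref{eq. expression of Qt}). Before any multiplication I would record the three structural facts that drive every cancellation: $\boldsymbol{\mathcal{P}}_t$ is an orthogonal projector, so $\matf{I} - \boldsymbol{\mathcal{P}}_t$ is symmetric and idempotent; $\matf{K}_t$ and $\matf{S}_t$ are symmetric, whence $\matf{S}_t^{-1}$ is symmetric and $\matf{H}_t^{\trans} = \matf{S}_t^{-1}\matf{K}_t(\matf{I} - \boldsymbol{\mathcal{P}}_t)$ follows from the definition $\matf{H}_t \defeq (\matf{I} - \boldsymbol{\mathcal{P}}_t)\matf{K}_t\matf{S}_t^{-1}$ in equation (\ref{eq. define matrix constant Ht}); and the defining relation $\matf{S}_t = \matf{K}_t(\matf{I} - \boldsymbol{\mathcal{P}}_t)\matf{K}_t + \mu_t\matf{K}_t$ from equation (\ref{eq. expression of St}).

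Next, writing $\matf{K}_t^{\trans}\matf{H}_t^{\trans} = (\matf{H}_t\matf{K}_t)^{\trans}$ and setting $\matf{A} \defeq \matf{H}_t\matf{K}_t = (\matf{I} - \boldsymbol{\mathcal{P}}_t)\matf{K}_t\matf{S}_t^{-1}\matf{K}_t$, I would expand the first product of the claimed right-hand side as $\matf{A}(\matf{I} - \boldsymbol{\mathcal{P}}_t)\matf{A}^{\trans} - \matf{A}(\matf{I} - \boldsymbol{\mathcal{P}}_t) - (\matf{I} - \boldsymbol{\mathcal{P}}_t)\matf{A}^{\trans} + (\matf{I} - \boldsymbol{\mathcal{P}}_t)$. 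Idempotency immediately gives $\matf{A}(\matf{I} - \boldsymbol{\mathcal{P}}_t) = (\matf{I} - \boldsymbol{\mathcal{P}}_t)\matf{A}^{\trans} = (\matf{I} - \boldsymbol{\mathcal{P}}_t)\matf{K}_t\matf{S}_t^{-1}\matf{K}_t(\matf{I} - \boldsymbol{\mathcal{P}}_t)$, which I denote $\matf{B}$; hence the two cross terms contribute $-2\matf{B}$, while the standalone $(\matf{I} - \boldsymbol{\mathcal{P}}_t)$ matches the term already present in $\matf{Q}_t$.

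The crux is the pair of terms quadratic in $\matf{S}_t^{-1}$, namely $\matf{A}(\matf{I} - \boldsymbol{\mathcal{P}}_t)\matf{A}^{\trans}$ and the regularization contribution $\mu_t\matf{H}_t\matf{K}_t\matf{H}_t^{\trans} = \mu_t\matf{A}\matf{H}_t^{\trans}$. Both share the outer factors $(\matf{I} - \boldsymbol{\mathcal{P}}_t)\matf{K}_t\matf{S}_t^{-1}$ on the left and $\matf{S}_t^{-1}\matf{K}_t(\matf{I} - \boldsymbol{\mathcal{P}}_t)$ on the right, with inner kernels $\matf{K}_t(\matf{I} - \boldsymbol{\mathcal{P}}_t)\matf{K}_t$ and $\mu_t\matf{K}_t$ respectively. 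Adding them and invoking the $\matf{S}_t$ relation collapses the inner bracket to a single $\matf{S}_t$, which cancels one $\matf{S}_t^{-1}$ and reduces the pair to $\matf{B}$. Collecting everything, the right-hand side equals $\matf{B} - 2\matf{B} + (\matf{I} - \boldsymbol{\mathcal{P}}_t) = (\matf{I} - \boldsymbol{\mathcal{P}}_t) - \matf{B}$, which is exactly $\matf{Q}_t$.

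I expect no conceptual difficulty; the only obstacle is bookkeeping — keeping the lengthy noncommutative products in the correct order and recognizing that the defining relation for $\matf{S}_t$ is precisely what makes the two quadratic terms telescope. A clean way to avoid error is to reduce every expression to products of the fixed building blocks $\matf{I} - \boldsymbol{\mathcal{P}}_t$, $\matf{K}_t$ and $\matf{S}_t^{-1}$, and then track only their ordering.
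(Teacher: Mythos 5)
Your proof is correct, and it verifies the identity by a route that is organized differently from the paper's. The paper (Appendix~\ref{appendix. Qt expression in comparison to RSS version}) cancels the regularization term first: it derives $\mu_t \matf{H}_t \matf{K}_t \matf{H}_t^{\trans} = -\matf{H}_t \matf{K}_t (\matf{I} - \boldsymbol{\mathcal{P}}_t)(\matf{K}_t \matf{H}_t^{\trans} - \matf{I})$ from the identity (\ref{eq. an identity on KS}), $\mu_t \matf{K}_t \matf{S}_t^{-1} = \matf{I} - \matf{K}_t(\matf{I} - \boldsymbol{\mathcal{P}}_t)\matf{K}_t \matf{S}_t^{-1}$, whose derivation there is routed through the nontrivial commuting relation $\matf{H}_t = (\matf{I} - \boldsymbol{\mathcal{P}}_t)\matf{K}_t\matf{S}_t^{-1} = \matf{S}_t^{-1}\matf{K}_t(\matf{I} - \boldsymbol{\mathcal{P}}_t) = \matf{H}_t^{\trans}$ established in Appendix~\ref{appendix. properties of Ht}, and then substitutes to collapse the triple product. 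You instead expand the full triple product symmetrically with $\matf{A} = \matf{H}_t\matf{K}_t$ and let the two terms quadratic in $\matf{S}_t^{-1}$ telescope through the defining relation (\ref{eq. expression of St}), $\matf{K}_t(\matf{I} - \boldsymbol{\mathcal{P}}_t)\matf{K}_t + \mu_t\matf{K}_t = \matf{S}_t$; the cross terms each equal $\matf{B} = (\matf{I} - \boldsymbol{\mathcal{P}}_t)\matf{K}_t\matf{S}_t^{-1}\matf{K}_t(\matf{I} - \boldsymbol{\mathcal{P}}_t)$, so the right-hand side becomes $\matf{B} - 2\matf{B} + (\matf{I} - \boldsymbol{\mathcal{P}}_t) = \matf{Q}_t$ per equation (\ref{eq. expression of Qt}). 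What your version buys is a leaner dependency structure: you never need the symmetry $\matf{H}_t = \matf{H}_t^{\trans}$ (which the paper proves separately by manipulations involving $\matf{K}_t^{-1}$), only the trivial transpose $\matf{H}_t^{\trans} = \matf{S}_t^{-1}\matf{K}_t(\matf{I} - \boldsymbol{\mathcal{P}}_t)$ coming from the symmetry of $\matf{K}_t$, $\matf{S}_t$ and $\matf{I} - \boldsymbol{\mathcal{P}}_t$; the definition of $\matf{S}_t$ alone does all the cancellation.

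One cosmetic correction: the cross-term identities $\matf{A}(\matf{I} - \boldsymbol{\mathcal{P}}_t) = (\matf{I} - \boldsymbol{\mathcal{P}}_t)\matf{A}^{\trans} = \matf{B}$ do not use idempotency of $\matf{I} - \boldsymbol{\mathcal{P}}_t$, as you claim; they follow from plain concatenation of factors together with symmetry. Indeed, idempotency is nowhere needed in your argument. This misattribution does not affect correctness.
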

\begin{proof}
The proof is given in Appendix~\ref{appendix. Qt expression in comparison to RSS version}.
\end{proof}

\begin{lemma}
$\matf{Q}_t$ is symmetric positive semidefinite.
\end{lemma}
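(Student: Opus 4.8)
The plan is to read off symmetric positive semidefiniteness directly from the rewritten form of $\matf{Q}_t$ in equation (\ref{eq. the 1st form of Q_t}), treating each of its two summands as a congruence transform $\matf{B}\matf{A}\matf{B}^{\trans}$ of a matrix already known to be symmetric positive semidefinite. Symmetry will fall out of the same congruence structure, so the argument reduces to two elementary observations plus the fact that a sum of symmetric positive semidefinite matrices is again symmetric positive semidefinite.

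Concretely, I would first set $\matf{B}_t \defeq \matf{H}_t \matf{K}_t - \matf{I}$ and note that, since $\matf{K}_t$ is symmetric, its transpose is $\matf{B}_t^{\trans} = \matf{K}_t^{\trans} \matf{H}_t^{\trans} - \matf{I}$, which is exactly the right-hand factor appearing in (\ref{eq. the 1st form of Q_t}). Thus the first summand is the congruence $\matf{B}_t (\matf{I} - \boldsymbol{\mathcal{P}}_t) \matf{B}_t^{\trans}$. Because $\matf{I} - \boldsymbol{\mathcal{P}}_t$ is an orthogonal projector, it is symmetric positive semidefinite (established in the preceding lemma), so for any $\vecf{v}$ we have $\vecf{v}^{\trans} \matf{B}_t (\matf{I} - \boldsymbol{\mathcal{P}}_t) \matf{B}_t^{\trans} \vecf{v} = (\matf{B}_t^{\trans} \vecf{v})^{\trans} (\matf{I} - \boldsymbol{\mathcal{P}}_t)(\matf{B}_t^{\trans} \vecf{v}) \ge 0$; symmetry is immediate since $(\matf{B}_t \matf{A} \matf{B}_t^{\trans})^{\trans} = \matf{B}_t \matf{A}^{\trans} \matf{B}_t^{\trans} = \matf{B}_t \matf{A} \matf{B}_t^{\trans}$ whenever $\matf{A} = \matf{I} - \boldsymbol{\mathcal{P}}_t$ is symmetric.

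For the second summand $\mu_t \matf{H}_t \matf{K}_t \matf{H}_t^{\trans}$, I would use that $\mu_t > 0$ and that $\matf{K}_t$ is symmetric positive definite by assumption. Then for any $\vecf{v}$ we get $\mu_t \vecf{v}^{\trans} \matf{H}_t \matf{K}_t \matf{H}_t^{\trans} \vecf{v} = \mu_t (\matf{H}_t^{\trans}\vecf{v})^{\trans} \matf{K}_t (\matf{H}_t^{\trans}\vecf{v}) \ge 0$, and symmetry follows from $(\matf{H}_t \matf{K}_t \matf{H}_t^{\trans})^{\trans} = \matf{H}_t \matf{K}_t^{\trans} \matf{H}_t^{\trans} = \matf{H}_t \matf{K}_t \matf{H}_t^{\trans}$. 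Adding the two symmetric positive semidefinite summands then yields that $\matf{Q}_t$ is symmetric positive semidefinite.

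The only real content lies not in this final step but in the algebraic identity (\ref{eq. the 1st form of Q_t}) itself, which recasts $\matf{Q}_t$ as a manifestly nonnegative sum of congruences; once that identity is in hand, proved separately in Appendix~\ref{appendix. Qt expression in comparison to RSS version}, no spectral inequalities are needed, and in particular this route is cleaner than arguing directly from the original form in equation (\ref{eq. expression of Qt}). The one point to watch is the bookkeeping of transposes, namely verifying that $\matf{B}_t^{\trans}$ matches the second factor in (\ref{eq. the 1st form of Q_t}) and that $\matf{K}_t^{\trans} = \matf{K}_t$, so that both summands are genuine two-sided congruences rather than one-sided products; this is what simultaneously guarantees symmetry and the correct sign of the quadratic form.
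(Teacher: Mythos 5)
Your proof is correct and takes essentially the same route as the paper's: the paper likewise reads the result directly off equation~(\ref{eq. the 1st form of Q_t}), observing that the first summand is a congruence of the symmetric positive semidefinite projector $\matf{I} - \boldsymbol{\mathcal{P}}_t$ and the second is $\mu_t$ times a congruence of the symmetric positive definite $\matf{K}_t$, with the identity itself established separately in Appendix~\ref{appendix. Qt expression in comparison to RSS version}. You merely spell out the quadratic-form and transpose bookkeeping that the paper's one-line proof leaves implicit (noting, as a minor point, that $(\matf{H}_t\matf{K}_t - \matf{I})^{\trans} = \matf{K}_t^{\trans}\matf{H}_t^{\trans} - \matf{I}$ holds by the transpose rule alone, without invoking symmetry of $\matf{K}_t$).
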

\begin{proof}
In equation (\ref{eq. the 1st form of Q_t}), we notice that $\matf{I} - \boldsymbol{\mathcal{P}}_{t}$ is symmetric positive semidefinite, and $\matf{K}_t$ is symmetric positive definite.
\end{proof}

\subsection{$ \matf{Q}_t \vecf{1}_{m_t} = \vecf{0} $}

We notice that matrix
$\boldsymbol{\mathcal{P}}_{t}$ is the orthogonal projector to the range space of:
$$\matf{\tilde{P}}_t^{\trans} = \left[ \matf{P}_t^{\trans} ,\, \vecf{1}_{m_t} \right] . $$
We further observe that $\vecf{1}_{m_t}$ is in fact a column of $\matf{\tilde{P}}_t^{\trans}$ thus lying in the range of $\matf{\tilde{P}}_t^{\trans}$.
As a result, we have:
$$
\boldsymbol{\mathcal{P}}_{t} \vecf{1}_{m_t} = \vecf{1}_{m_t}
\Leftrightarrow
(\matf{I} - \boldsymbol{\mathcal{P}}_{t})  \vecf{1}_{m_t} = \vecf{0}
.
$$
It is thus obvious to see $ \matf{Q}_t \vecf{1}_{m_t} = \vecf{0} $.

\section{Proof of Proposition~\ref{proposition. properties of Ht}: Properties of $\matf{H}_t$}
\label{appendix. properties of Ht}

We compute:
\begin{align}
\matf{K}_t \matf{S}_t^{-1} \matf{K}_t 
& = 
\matf{K}_t
\left(
	\matf{K}_t \left( \matf{I} - \boldsymbol{\mathcal{P}}_t \right) \matf{K}_t + \mu_t \matf{K}_t
\right)^{-1}
\matf{K}_t
\notag
\\[5pt] & =
\left(
	\left( \matf{I} - \boldsymbol{\mathcal{P}}_t \right) +  \mu_t  \matf{K}_t^{-1}
\right)^{-1}
.
\notag
\end{align}
Matrix $\matf{K}_t \matf{S}_t^{-1} \matf{K}_t $ is invertible. Thus we have:
\begin{equation}
\begin{cases}
\left(
	\left( \matf{I} - \boldsymbol{\mathcal{P}}_t \right) +  \mu_t  \matf{K}_t^{-1}
\right)
\matf{K}_t \matf{S}_t^{-1} \matf{K}_t 
& = \matf{I}
\notag
\\[5pt]
\matf{K}_t \matf{S}_t^{-1} \matf{K}_t   \left(
	\left( \matf{I} - \boldsymbol{\mathcal{P}}_t \right) +  \mu_t  \matf{K}_t^{-1}
\right)
& = \matf{I}
.
\notag
\end{cases}
\end{equation}
Thus the following equalities hold true:
\begin{align}
\mu_t \matf{S}_t^{-1} \matf{K}_t
& =
\matf{I} - \left( \matf{I} - \boldsymbol{\mathcal{P}}_t \right) \matf{K}_t \matf{S}_t^{-1} \matf{K}_t
\label{eq. (I - P) K invS K}
\\[5pt]
\mu_t  \matf{K}_t \matf{S}_t^{-1}
& =
\matf{I} - \matf{K}_t \matf{S}_t^{-1} \matf{K}_t  \left( \matf{I} - \boldsymbol{\mathcal{P}}_t \right)
.
\label{eq. K invS K (I - P)}
\end{align}
By right-multiplying equation (\ref{eq. (I - P) K invS K}) by $\matf{I} - \boldsymbol{\mathcal{P}}_t$
and left-multiplying equation (\ref{eq. K invS K (I - P)}) by $\matf{I} - \boldsymbol{\mathcal{P}}_t$,
we have:
\begin{equation}
\label{eq. Qt = mu Ht = mu Ht^T}
\matf{Q}_t  =  \mu_t \, \underbrace{ \matf{S}_t^{-1} \matf{K}_t \left( \matf{I} - \boldsymbol{\mathcal{P}}_t \right) }_{ \matf{H}_t^{\trans} }
 =  \mu_t \, \underbrace{ \left( \matf{I} - \boldsymbol{\mathcal{P}}_t \right) \matf{K}_t \matf{S}_t^{-1} }_{ \matf{H}_t }
,
\end{equation}
where we have used the expression of $\matf{Q}_t$ in equation (\ref{eq. expression of Qt}).

\section{Connection to the Result in \citet*{Bai-RSS-22}}
\label{appendix. Qt expression in comparison to RSS version}

In~\citet*{Bai-RSS-22}, $\matf{Q}_t$ was defined as equation(\ref{eq. the 1st form of Q_t}) which can be simplified to the form in equation (\ref{eq. expression of Qt}).

In equation (\ref{eq. Qt = mu Ht = mu Ht^T}), we have proved that:
\begin{equation*}
\matf{H}_t
=
 ( \matf{I} - \boldsymbol{\mathcal{P}}_t )
  \matf{K}_t \matf{S}_t^{-1}
  =
  \matf{S}_t^{-1} \matf{K}_t 
 ( \matf{I} - \boldsymbol{\mathcal{P}}_t )
=
\matf{H}_t^{\trans} 
.
\end{equation*}
Hence, we can rewrite equation (\ref{eq. K invS K (I - P)}) in Appendix~\ref{appendix. properties of Ht} as:
\begin{align}
 \mu_t  \matf{K}_t \matf{S}_t^{-1} 
=
 \matf{I} 
 - 
 \matf{K}_t
 ( \matf{I} - \boldsymbol{\mathcal{P}}_t )
  \matf{K}_t \matf{S}_t^{-1}
.
\label{eq. an identity on KS}
\end{align}
We examine the term $\mu_t \matf{H}_t \matf{K}_t \matf{H}_t^{\trans}$ with the identity (\ref{eq. an identity on KS}), as:
\begin{align}
\mu_t
 \matf{H}_t \matf{K}_t \matf{H}_t^{\trans}
 & = 
 \mu_t
\matf{H}_t \matf{K}_t
\matf{S}_t^{-1} \matf{K}_t
 \left( \matf{I} - \boldsymbol{\mathcal{P}}_t \right)
\notag
\\[5pt]
&
=
\matf{H}_t 
\left(
	 \matf{I}   -   \matf{K}_t ( \matf{I} - \boldsymbol{\mathcal{P}}_t ) \matf{K}_t \matf{S}_t^{-1}
\right)
 \matf{K}_t
 \left( \matf{I} - \boldsymbol{\mathcal{P}}_t \right)
\notag
\\[5pt]
& = \matf{H}_t \matf{K}_t \left( \matf{I} - \boldsymbol{\mathcal{P}}_t \right)
-
\matf{H}_t \matf{K}_t \left( \matf{I} - \boldsymbol{\mathcal{P}}_t \right)
\matf{K}_t \matf{H}_t^{\trans} 
\notag
\\[5pt]
& = - \matf{H}_t \matf{K}_t \left( \matf{I} - \boldsymbol{\mathcal{P}}_t \right)
\left( 
	\matf{K}_t \matf{H}_t^{\trans}  - \matf{I} 
\right)
.
\label{eq. mu H K H}
\end{align}
Substituting equation (\ref{eq. mu H K H}) into equation (\ref{eq. the 1st form of Q_t}) to cancel the term $\mu_t
 \matf{H}_t \matf{K}_t \matf{H}_t^{\trans}$,
we reach $\matf{Q}_t$ in equation (\ref{eq. expression of Qt}) by some trivial matrix manipulations:
\begin{align*}
& (\matf{H}_t \matf{K}_t - \matf{I})
(\matf{I} - \boldsymbol{\mathcal{P}}_{t})
( \matf{K}_t^{\trans} \matf{H}_t^{\trans} - \matf{I})
+
\mu_t
 \matf{H}_t \matf{K}_t \matf{H}_t^{\trans}
 \\[5pt]
  =\, & 
-
(\matf{I} - \boldsymbol{\mathcal{P}}_{t})
( \matf{K}_t \matf{H}_t^{\trans} - \matf{I})
\\[5pt]
=\, &
-
(\matf{I} - \boldsymbol{\mathcal{P}}_{t})
( \matf{K}_t \matf{S}_t^{-1} \matf{K}_t \left( \matf{I} - \boldsymbol{\mathcal{P}}_t \right)
 - \matf{I})
\\[5pt]
=\, & \left( \matf{I} - \boldsymbol{\mathcal{P}}_t \right)
 -
\left( \matf{I} - \boldsymbol{\mathcal{P}}_t \right)
\matf{K}_t \matf{S}_t^{-1} \matf{K}_t
\left( \matf{I} - \boldsymbol{\mathcal{P}}_t \right)
= 
\matf{Q}_t
.
\end{align*}

\section{Proof of Lemma~\ref{lemma. invariance of orthogonal projection matrix under coordinate transformations}}
\label{appendix. proof of the invariance of the orthogonal projection matrix under coordinate transformations}

It can be shown that:
\begin{align*}	
	\mathbf{\tilde{\breve{P}}}_t
	 \defeq
	\begin{bmatrix}
	\mathbf{\breve{P}}_t \\[5pt] 
	\matf{1}^{\trans} 
	\end{bmatrix}
	=
	\begin{bmatrix}
	\mathbf{\breve{R}}_t \matf{P}_t + \mathbf{\breve{t}}_t  \vecf{1}^{\trans} \\[5pt]
	\matf{1}^{\trans}
	\end{bmatrix}
	 & =
	\begin{bmatrix}
	\mathbf{\breve{R}}_t & \mathbf{\breve{t}}_t \\[5pt]
	\matf{0}^{\trans}  & 1
	\end{bmatrix}
	\begin{bmatrix}
	\matf{P}_t \\[5pt] 
	\matf{1}^{\trans} 
	\end{bmatrix}	
	\\[5pt]
	& =
	\begin{bmatrix}
	\mathbf{\breve{R}}_t & \mathbf{\breve{t}}_t \\[5pt]
	\matf{0}^{\trans} & 1
	\end{bmatrix}	
	\matf{\tilde{P}}_t
.
\end{align*}
We notice that $\mathbf{\tilde{\breve{P}}}_t^{\trans}$ and $\matf{\tilde{P}}_t^{\trans}$ have the same range space, thus the orthogonal projection matrices are the same by the uniqueness~\citet*{meyer2000matrix}.
We can also verify the result by direct matrix calculations. We notice:
$$
\left(
\mathbf{\tilde{\breve{P}}}_t \mathbf{\tilde{\breve{P}}}_t^{\trans}
\right)^{\dagger}
=
\begin{bmatrix}
	\mathbf{\breve{R}}_t & \mathbf{\breve{t}}_t \\[5pt]
	\matf{0}^{\trans}  & 1
\end{bmatrix}^{ - \trans}
\left(
\matf{\tilde{P}}_t \matf{\tilde{P}}_t^{\trans}
\right)^{\dagger}
\begin{bmatrix}
	\mathbf{\breve{R}}_t & \mathbf{\breve{t}}_t \\[5pt]
	\matf{0}^{\trans}  & 1
\end{bmatrix}^{-1}
.
$$
Thus we have
$
\mathbf{\tilde{\breve{P}}}_t^{\trans}
(
\mathbf{\tilde{\breve{P}}}_t \mathbf{\tilde{\breve{P}}}_t^{\trans}
)^{\dagger}
\mathbf{\tilde{\breve{P}}}_t
=
\matf{\tilde{P}}_t^{\trans}
(
\matf{\tilde{P}}_t \matf{\tilde{P}}_t^{\trans}
)^{\dagger}
\matf{\tilde{P}}_t
$.

\section{Expansion of $\mathrm{tr}\left( \matf{X}^{\trans}  \boldsymbol{\mathcal{Q}} \matf{X} \boldsymbol{\Lambda} \right)$}
\label{appendix. expansion of tr(x Q x Lambda)}

We denote $\matf{X} = [\vecf{x}_1,\, \vecf{x}_2,\, \vecf{x}_3]$.
We denote the matrix square root of $\matf{Q}_t$ as $\sqrt{ \matf{Q}_t }$, where:
$
\matf{Q}_t =  \sqrt{ \matf{Q}_t }^{\trans} \sqrt{ \matf{Q}_t }
$.
\begin{align*}
 \mathrm{tr}\left( \matf{X}^{\trans} 
\boldsymbol{\mathcal{Q}}
\matf{X} \boldsymbol{\Lambda} \right)
& =
\sum_{t = 1}^{n}
\mathrm{tr}\left( \matf{X}^{\trans} 
 \matf{\Gamma}_t
\matf{Q}_t
 \matf{\Gamma}_t^{\trans}
\matf{X} \boldsymbol{\Lambda} \right)
\\[5pt]
&
 =
\sum_{t = 1}^{n}
\sum_{k = 1}^3
\lambda_k
\mathrm{tr}\left( \vecf{x}_k^{\trans} 
 \matf{\Gamma}_t
\matf{Q}_t
\matf{\Gamma}_t^{\trans}
\vecf{x}_k \right)
\\[5pt]
& =
\sum_{t = 1}^{n}
\sum_{k = 1}^3
\lambda_k
\left\Vert
\sqrt{ \matf{Q}_t} \,
( \matf{I} - \boldsymbol{\mathcal{P}}_t )
\matf{\Gamma}_t^{\trans}
\vecf{x}_k
\right\Vert_{\mathcal{F}}^2
\\[5pt]
& =
\sum_{k = 1}^3
\sum_{t = 1}^{n}
\lambda_k
\left\Vert
\sqrt{ \matf{Q}_t } \,
( \matf{I} - \boldsymbol{\mathcal{P}}_t )
\matf{\Gamma}_t^{\trans}
\vecf{x}_k
\right\Vert_{\mathcal{F}}^2
\\[5pt]
& =
\sum_{k = 1}^3
\lambda_k
\left\Vert
\begin{bmatrix}
\sqrt{ \matf{Q}_1 } \,
( \matf{I} - \boldsymbol{\mathcal{P}}_1 )
\matf{\Gamma}_1^{\trans}
\\[5pt]
\vdots
\\[5pt]
\sqrt{ \matf{Q}_n } \,
( \matf{I} - \boldsymbol{\mathcal{P}}_n )
\matf{\Gamma}_n^{\trans}
\end{bmatrix}
\vecf{x}_k
\right\Vert_{\mathcal{F}}^2
.
\end{align*}

\section{Planar Case}
\label{appendix. planar case. Rt in xy plane}

The optimal $\sqrt{\boldsymbol{\Lambda}}$ is characterized by formulation (\ref{eq: formulation for rigid transformation, as-rigid-as-possible, general form}), and thus formulation (\ref{eq. formulation of Rt in SO3 and Eta in R3}).
If $\matf{P}_t$ is flat, then $\matf{P}_t$ can be rigidly transformed to $\begin{bmatrix}
\mathbf{P}_{txy}^{\trans} &
\vecf{0}
\end{bmatrix}^{\trans}$.
Thus it suffices to discuss the estimate of $\sqrt{\boldsymbol{\Lambda}}$ from the canonical 2D point-clouds $\mathbf{P}_{txy}$.
We denote
$
\matf{G}_t = \matf{X}^{\trans} \matf{\Gamma}_t 
$
where $\matf{X}$ is given at this stage.
We further denote
$
\matf{\bar{G}}_t
=
\matf{G}_t - \frac{1}{m_t} \matf{G}_t \vecf{1} \vecf{1}^{\trans}
$.
By the fact that $\mathbf{P}_{txy}$ is zero-centered, $\sqrt{\boldsymbol{\Lambda}}$ is characterized by problem (\ref{eq. Rt and Lambda from canonical point clouds}).

In problem (\ref{eq. Rt and Lambda from canonical point clouds}),
if the optimal $\matf{R}_t$ implements a rotation in the $xy-$plane, then $\matf{R}_t$ can be formed as:
$$
\matf{R}_t = 
\begin{bmatrix}
\matf{R}_{txy} &  \\[4pt]
 & 1
\end{bmatrix}
.
$$
As a result, problem (\ref{eq. Rt and Lambda from canonical point clouds}) can be decomposed as:
\begin{multline*}
\min_{ \{ \matf{R}_{txy} \in \mathrm{SO}(2) \} ,\, \boldsymbol{\eta}_{xy} \in \mathbb{R}^2  ,\, \eta_z \in \mathbb{R} } \quad 
\sum_{t=1}^{n}
\eta_z
\left\Vert
\vecf{g}_3^{\trans} 
\right\Vert^2
\\[5pt]
+
\sum_{t=1}^{n}
\left\Vert
\matf{R}_{txy}
\mathbf{P}_{txy}
 - 
\mathbf{diag} ( \boldsymbol{\eta}_{xy} )
\begin{bmatrix}
\vecf{g}_1^{\trans} \\[5pt]
\vecf{g}_2^{\trans} 
\end{bmatrix}
\right\Vert_{\mathcal{F}}^2
,
\end{multline*}
where
$\matf{\bar{G}}_t = \begin{bmatrix}
\vecf{g}_1 & \vecf{g}_2 & \vecf{g}_3
\end{bmatrix}^{\trans}
$.
The cost of this problem is minimized if and only if $\eta_z = 0$. Thus $\lambda_3 = \vert \eta_z \vert = 0$.

\section{GPA Using the LBW in \citet*{bai2022ijcv}}
\label{appendix. results of GPA formulation using the LBW}

We recapitulate the result of \citet*{bai2022ijcv}.
If using the LBW, we will be solving a GPA formulation as:
\begin{equation}
\label{eq: deformable SLAM formulation using LBWs}
\begin{aligned}
\min_{ \{ \matf{W}_t \}, \, \matf{M}}
\quad 
& \sum_{t=1}^{n} \varphi_t (\matf{W}_t, \, \matf{M})
\\[2pt]  & 
\mathrm{s.t.}\ 
\matf{M} \vecf{1} = \vecf{0},\ 
\matf{M} \matf{M}^{\trans} = \boldsymbol{\Lambda}
,
\end{aligned}
\end{equation}
where:
\begin{align*}
\varphi_t (\matf{W}_t, \, \matf{M})  
 = & \left\Vert 
\matf{W}_t^{\trans}
\boldsymbol{\mathcal{B}}_t(\matf{P}_t)
-
 \matf{M} \matf{\Gamma}_t \right\Vert_{\mathcal{F}}^2
\notag
\\[5pt] & + 
\mu_t 
\mathrm{tr}
\left(
\matf{W}_t^{\trans}
\matf{\Xi}_t  \matf{W}_t
\right)
.
\end{align*}
We define the matrix $\boldsymbol{\mathcal{Q}}$ as:
$$
\boldsymbol{\mathcal{Q}} \defeq
\matf{\Gamma}_t 
\left(
\matf{I}
-
\boldsymbol{\mathcal{B}}_t^{\trans}
\left(
	\boldsymbol{\mathcal{B}}_t \boldsymbol{\mathcal{B}}_t^{\trans}
	+ \mu_t  \matf{\Xi}_t
\right)^{-1}
\boldsymbol{\mathcal{B}}_t
\right)
\matf{\Gamma}_t^{\trans}
,
$$
where we have used the shorthand
$\boldsymbol{\mathcal{B}}_t \defeq \boldsymbol{\mathcal{B}}_t(\matf{P}_t)$.

If $\boldsymbol{\mathcal{Q}} \vecf{1} = \vecf{0}$, then the optimal $\matf{M}$ of problem (\ref{eq: deformable SLAM formulation using LBWs}) is:
$$\matf{M} = \sqrt{\boldsymbol{\Lambda}} \matf{X}^{\trans}
,
\quad
\mathrm{with\ \ }
\matf{X} = [\vecf{x}_1,\,\vecf{x}_2,\, \dots, \vecf{x}_d]  \in \mathbb{R}^{m \times d}
,
$$ where $\vecf{x}_1,\,\vecf{x}_2,\, \dots, \vecf{x}_d$ in sequence are the $d$ bottom eigenvectors of $\boldsymbol{\mathcal{Q}}$ excluding the vector $\vecf{1}$, or equivalently are the $d$ bottom eigenvectors of $\boldsymbol{\mathcal{Q}}' = \boldsymbol{\mathcal{Q}} + n \vecf{1} \vecf{1}^{\trans}$.
The optimal transformation parameters $\matf{W}_t$ are:
\begin{equation*}
\matf{W}_t^{\trans} = \matf{M} \matf{\Gamma}_t 
\boldsymbol{\mathcal{B}}_t^{\trans}
\left(
	\boldsymbol{\mathcal{B}}_t \boldsymbol{\mathcal{B}}_t^{\trans}
	+ \mu_t  \matf{\Xi}_t
\right)^{-1}
,
\quad
\left( t \in [1 : n] \right)
.
\end{equation*}

It has been shown that
$\boldsymbol{\mathcal{Q}} \vecf{1} = \vecf{0}$ happens if the LBW has a free translation.
In particular, if the LBW is chosen as the affine transformation, or the TPS warp, then
$\boldsymbol{\mathcal{Q}} \vecf{1} = \vecf{0}$.

\begin{acks}
The author would like to thank professor Yi Dong in Tongji University, Shanghai, China, for the hosting and support to finish the initial manuscript.
We want to express our gratitude to the TOPACS team (project No. ANR-19-CE45-0015) who produced the original CT point-cloud for our experiments.
\end{acks}

%\bibliographystyle{SageH}
%\bibliography{references}

\end{document}